\newif\ifarxiv
\providecommand{\comment}[1]{}
\newcommand{\cf}{{\it cf.}}
\newcommand{\eg}{{\it e.g.}}
\newcommand{\mc}[1]{\mathcal{#1}}
\newcommand{\norm}[1]{\left\|{#1}\right\|} %
\newcommand{\ltwo}[1]{\norm{#1}_2} %
\newcommand{\linf}[1]{\norm{#1}_\infty} %
\newcommand{\defeq}{:=}
\newcommand{\half}{\frac{1}{2}}
\newcommand{\R}{\mathbb{R}}
\long\def\@makecaption#1#2{
  \vskip 0.8ex
  \setbox\@tempboxa\hbox{\small {\bf #1:} #2}
  \parindent 1.5em  %
  \dimen0=\hsize
  \advance\dimen0 by -3em
  \ifdim \wd\@tempboxa >\dimen0
  \hbox to \hsize{
    \parindent 0em
    \hfil
    \parbox{\dimen0}{\def\baselinestretch{0.96}\small
      {\bf #1.} #2
    }
    \hfil}
  \else \hbox to \hsize{\hfil \box\@tempboxa \hfil}
  \fi
}
\newcommand{\openright}[2]{\left[{#1},{#2}\right)} %
\newcommand{\E}{\mathbb{E}} %
\renewcommand{\P}{\mathbb{P}} %
\newcommand{\var}{{\rm Var}} %
\newcommand{\simiid}{\stackrel{\rm i.i.d.}{\sim}}
\newcommand{\fdiv}[2]{D_f\left({#1} |\!| {#2} \right)}  %
\providecommand{\argmin}{\mathop{\rm argmin}}
\providecommand{\dom}{\mathop{\rm dom}}
\newcommand{\hinge}[1]{\left[{#1}\right]_+} %
\providecommand{\minimize}{\mathop{\rm minimize}}
\providecommand{\maximize}{\mathop{\rm maximize}}
\newtheorem{theorem}{Theorem}
\newtheorem{proposition}[theorem]{Proposition}
\newtheorem{lemma}{Lemma}
\newtheorem*{claim*}{Claim}
\renewenvironment{proof}{\noindent{\bf Proof}\hspace*{1em}}{\qed\bigskip\\}
\newenvironment{proof-sketch}{\noindent{\bf Sketch of Proof}
  \hspace*{1em}}{\qed\bigskip\\}
\newenvironment{proof-idea}{\noindent{\bf Proof Idea}
  \hspace*{1em}}{\qed\bigskip\\}
\newenvironment{proof-of-claim}{\noindent{\bf Proof of Claim}
  \hspace*{1em}}{\qed\bigskip\\}
\newenvironment{proof-of-lemma}[1][{}]{\noindent{\bf Proof of Lemma {#1}}
  \hspace*{1em}}{\qed\bigskip\\}
\newenvironment{proof-of-proposition}[1][{}]{\noindent{\bf
    Proof of Proposition {#1}}
  \hspace*{1em}}{\qed\bigskip\\}
\newenvironment{proof-of-theorem}[1][{}]{\noindent{\bf Proof of Theorem {#1}}
  \hspace*{1em}}{\qed\bigskip\\}
\newenvironment{inner-proof}{\noindent{\bf Proof}\hspace{1em}}{
  $\bigtriangledown$\medskip\\}
\newenvironment{proof-attempt}{\noindent{\bf Proof Attempt}
  \hspace*{1em}}{\qed\bigskip\\}
\newcounter{example}
\newenvironment{example*}[1][]{
  \ifthenelse{\isempty{#1}}{%
    \noindent \textbf{Example:}\hspace*{.05em}
  }{%
    \noindent \textbf{Example} ({#1})\textbf{:}\hspace*{.05em}
  }
}{%
  $\clubsuit$ \bigskip
}
\newcounter{remark}
\newenvironment{remark*}[1][]{
  \ifthenelse{\isempty{#1}}{%
    \noindent \textbf{Remark:}\hspace*{.05em}
  }{%
    \noindent \textbf{Remark} ({#1})\textbf{:}\hspace*{.05em}
  }
}{%
  $\diamondsuit$ \bigskip
}
\newcommand{\tol}{\rho}  %
\let\footnoteR\footnoteB
\let\footnote\footnoteA
\definecolor{matlab_blue}{rgb}{0,    0.4470,   0.7410}
\definecolor{matlab_red}{rgb}{0.8500,    0.3250,    0.0980}
\definecolor{matlab_yellow}{rgb}{0.9290,    0.6940,    0.1250}
\definecolor{matlab_purple}{rgb}{0.4940,    0.1840,    0.5560}
\definecolor{matlab_green}{rgb}{0.4660,    0.6740,    0.1880}
\renewcommand{\fdiv}[2]{D_{\phi}\left({#1} |\!| {#2} \right)}  %
\title{FormulaZero: Distributionally Robust Online Adaptation via \\
		Offline Population Synthesis}
\long\def\@makecaption#1#2{
  \vskip 0.8ex
  \setbox\@tempboxa\hbox{\small {\bf #1:} #2}
  \parindent 1.5em  %
  \dimen0=\hsize
  \advance\dimen0 by -3em
  \ifdim \wd\@tempboxa >\dimen0
  \hbox to \hsize{
    \parindent 0em
    \hfil 
    \parbox{\dimen0}{\def\baselinestretch{0.96}\small
      {\bf #1.} #2
    } 
    \hfil}
  \else \hbox to \hsize{\hfil \box\@tempboxa \hfil}
  \fi
}
\begin{document}
\abovedisplayskip=8pt plus0pt minus3pt
\belowdisplayskip=8pt plus0pt minus3pt

\begin{center}
  {\LARGE FormulaZero: Distributionally Robust Online Adaptation via\\Offline Population Synthesis} \\
  \vspace{.5cm} {\Large Aman Sinha\footnoteR{Equal contribution}$^{1}$~~Matthew O'Kelly$^{*2}$~~Hongrui Zheng$^{*2}$\\ Rahul Mangharam$^{2}$~~John Duchi$^{1}$~~Russ Tedrake$^{3}$}\\
  \vspace{.2cm}
  $^{1}${\large Stanford University}\\
  $^2${\large University of Pennsylvania}\\
  $^3${\large Massachusetts Institute of Technology}\\
  \vspace{.2cm} \texttt{amans@stanford.edu, \{mokelly,hongruiz,rahulm\}@seas.upenn.edu, jduchi@stanford.edu, russt@mit.edu}
\end{center}

\newif\ifarxiv
\arxivtrue
\begin{abstract}

Balancing performance and safety is crucial to deploying autonomous vehicles
in multi-agent environments. In particular, autonomous racing is a domain 
that penalizes safe but conservative policies, highlighting the need for
robust, adaptive strategies. Current approaches either make simplifying
assumptions about other agents or lack robust mechanisms for online
adaptation. This work makes algorithmic contributions to both
challenges. First, to generate a realistic, diverse set of opponents, we
develop a novel method for self-play based on replica-exchange Markov chain
Monte Carlo. Second, we propose a distributionally robust bandit
optimization procedure that adaptively adjusts risk aversion
relative to uncertainty in beliefs about opponents’ behaviors. We rigorously
quantify the tradeoffs in performance and robustness when approximating
these computations in real-time motion-planning, and we
demonstrate our methods experimentally on autonomous vehicles
that achieve scaled speeds comparable to Formula One racecars.

\end{abstract}

\ifarxiv
\else
\vspace{-20pt}
\fi
\section{Introduction}

Current autonomous vehicle (AV) technology still struggles in competitive multi-agent scenarios, such as merging onto a highway, where both maximizing performance (negotiating the merge without delay or hesitation) and maintaining safety (avoiding a crash) are important. 
The strategic implications of this tradeoff are magnified in racing. During the 2019 Formula One season, the race-winner achieved the fastest lap in only 33\% of events \cite{formulaone}. %
Empirically, 
the weak correlation between achieving the fastest lap-time and winning suggests that consistent and robust performance is critical to success. In this paper, we investigate this intuition in the setting of autonomous racing (AR). 
In AR, an AV must lap a racetrack in the presence of other agents deploying unknown policies. The agent wins if it completes the race faster than its opponents; a crash automatically results in a loss.
AR is a competitive multi-agent game, a general setting challenging for a number of reasons, especially in robotics applications. 
First, failures are expensive and dangerous, so learning-based approaches must avoid such behavior or rely on simulation while training. Second, the agents only partially observe their opponent's state, and %
these observations do not uniquely determine the opponent's behavior. Finally, the agents must make decisions online; the opponent's strategy is a tightly-held secret and cannot be obtained by collecting data before the competition.

\ifarxiv
\paragraph{Problem:}
\else
\textbf{Problem:}
\fi
We frame the AR challenge in the context of robust reinforcement learning. We analyze the system as a partially-observed Markov decision process (POMDP) $(\mathcal S, \mathcal A, P_{sa}, \mathcal O, r, \lambda)$, with state space $\mathcal S$, action space $\mathcal A$, state-action transition probabilities $P_{sa}$, observation space $\mathcal O$, rewards $r : \mathcal O \to \R$, and discount factor $\lambda$. Furthermore, we capture uncertainty in behaviors of other agents through an \textit{ambiguity}\footnote{Ambiguity is a synonym for uncertainty~\cite{gilboa2016ambiguity}. Formal descriptions in this paper use the term ambiguity.} set $\mathcal P$ for the state-action transitions. Then the AV's objective is
\begin{equation}
\maximize \inf_{P_{sa}\in \mathcal P} \sum_t \lambda^t \E [r(o(t))].
\label{eq:robrl}
\end{equation}
The obvious price of robustness \cite{bertsimas2004price} 
is that a larger ambiguity set ensures a greater degree of safety while sacrificing performance against a particular opponent. If we knew the opponent's behavior, we would need no ambiguity set; equivalently, the ambiguity set would shrink to the nominal state-action transition distribution. Our goal is to automatically trade between performance and robustness as we play against opponents, which breaks down into two challenges: parametrizing the ambiguity set to allow tractable inference and computing the robust cost efficiently online.

\ifarxiv
\paragraph{Contributions:}
\else
\textbf{Contributions:}
\fi
The paper has three contributions: (i) a novel population-based self-play method to parametrize opponent behaviors, (ii) a provably efficient approach to estimate the ambiguity set and the robust cost online, and (iii) a demonstration of these methods on real autonomous vehicles. The name of our approach---FormulaZero---alludes both to the Formula One racing league and the fact that we use self-play (and no demonstrations) to learn competitive behaviors, similar to the approach of AlphaZero \cite{silver2018general}.

Section~\ref{sec:related} gives context to our learning problem, including connections to classical control techniques. In Section~\ref{sec:pbt}, we describe the first challenge: learning how to parametrize the ambiguity set $\mc{P}$. Rather than directly consider the continuous action space of throttle and steering outputs, we synthesize a library of ``prototype'' opponent behaviors offline using population-based self-play. When racing against a particular opponent, the agent maintains a belief vector $w(t)$ of the opponent's behavior patterns as a categorical distribution over these prototype behaviors. We then parametrize the ambiguity set as a ball around this nominal belief $w(t)$.

The second challenge, presented in Section~\ref{sec:bandits}, is an online optimization problem, wherein the agent iteratively updates the ambiguity set (\eg~updates $w(t)$) and computes the robust cost of this set. In other words, the agent attempts to learn the opponent's behavior online to maximize its competitive performance. Since this optimization occurs on a moving vehicle with limited computational resources, we provide convergence results that highlight tradeoffs of performance and robustness with respect to these budgets.
Finally, Section~\ref{sec:experiments} details the practical implications of the theoretical results, emergent properties of the method, and the experimental performance of our approach. %

\subsection{Related work}
\label{sec:related}
Reinforcement learning (RL) has achieved unprecedented success on
classic two-player games \citep[\eg][]{silver2018general}, leading to new approaches in partially-observable games with continuous action spaces~\cite{arulkumaran2019alphastar, berner2019dota}. In these works, agents train via self-play using Monte Carlo tree search \cite{browne2012survey,sutton2018reinforcement} or population-based methods \cite{jaderberg2017population,jaderberg2019human}. The agents  optimize expected performance rather than adapt to individual variations in opponent strategy, which can lead to poor performance against particular opponents~\cite{bansal2017emergent}. In contrast, our method explicitly incorporates adaptivity to opponents.

Robust approaches to RL and control (like this work) explicitly model uncertainty. In RL, this amounts to planning in a robust MDP~\cite{nilim2005robust} or a POMDP~\cite{kaelbling1998planning}. Early results \citet{bagnell2001solving} and \citet{nilim2005robust} describe solutions for robust planning in (PO)MDPs with tabular state/action spaces. Equivalent results in control are analytical formulations applicable to uncertainty in linear time-invariant systems~\cite{doyle1988state, vinnicombe1993frequency, zhou1996robust}. Recent works~\cite{tamar2014scaling, pinto2017robust, mandlekar2017adversarially, gleave2019adversarial} describe minimax and adversarial RL frameworks for nonlinear systems and continuous action spaces. Like our approach, these methods fall broadly under the framework of robust optimization. Unlike these works, which consider worst-case planning under a fixed uncertainty distribution, our approach updates the distribution online. 

Our approach is designed to adjust the agent's evaluation of short-term plans relative to uncertainty in the opponent's behavior rather than provide worst-case guarantees. Complementary to and compatible with our approach are techniques which provide the latter guarantees, such as robust model predictive control \cite{bemporad1999robust}. Extensions of robust control for nonlinear systems and complex uncertainty models are also compatible (\eg~\citet{majumdar2013robust,althoff2014online,gao2014tube}). In contrast to formal approaches which explicitly guarantee robustness, some authors have proposed multitask or meta-learning approaches (\eg~\citet{caruana1997multitask, he2016opponent, finn2018probabilistic}) can implicitly learn to play against multiple opponents. However, such techniques do not explicitly model uncertainty or quantify robustness, which we deem necessary in the high-risk, safety-critical regime.

Planning in belief space is closely related to our approach and is well-studied in robotics \citep[see \eg][]{kochenderfer2015decision}. Specifically in the AV domain, \citet{galceran2015multipolicy} and \citet{ding2019online} use a Bayesian approach to plan trajectories for AVs in belief space; like this work, both of these approaches characterize the other agent's behavior in the environment categorically. Also similar to this work, \citet{van2011lqg} use a sampled set of goals obtained by planning from other agents' perspectives. 
The main difference in this work from standard belief-space planning formulations is inspired by recent results from distributionally robust optimization (DRO) in supervised-learning settings \cite{ Ben-TalHeWaMeRe13, NamkoongDu17}. These methods reweight training data to reduce the variance of the training loss \cite{NamkoongDu17}. While others apply DRO to episodic RL for training \emph{offline} \cite{SinhaNaDu17,smirnova2019distributionally}, we reweight the belief  \emph{online}. %

Online methods for control fall under the umbrella of adaptive control~\citep{kumar1985survey, aastrom2013adaptive}. \citet{dean2018regret} and \citet{agarwal2019online} establish regret bounds for adaptive control methods applied to LTI systems, tightening the relationship to online learning. Due to the more general nature of our problem, we draw from the adversarial multi-armed bandit framework of online learning~\cite{abernethy2009beating, bubeck2012regret, shalev2012online}.

Our belief state corresponds to a categorical distribution of
polices governing an opponent's next action; the goal is to predict which
strategy the opponent is using and compute the best response. This approach is
similar to game-theoretic methods for AR and AV decision making that use the
standard heuristic of iterated best response.  Our work is distinct from
previous work, which either assumes that all agents act with respect to the
same cost function, simplifying the structure of the
game~\citep{liniger2019noncooperative,wang2019game}; or, without this
simplifying assumption, that uses demonstrations to learn possible sets of
policies~\citep{sadigh2016planning,williams2017autonomous}. In constrast, we
learn the set of policies without demonstrations and use DRO to robustly score the AV's plans.
We convert the problem of predicting opponent behavior in a continuous action space into an adversarial bandit problem by learning a set of cost functions that characterize a discrete set of policies. As a result, we would like the opponent models to be both near-optimal and diverse. 
We use determinantal point processes (DPPs) \cite{kulesza2012determinantal} to sample diverse configurations of the parameter space. However,  first we must learn a DPP kernel, which requires that we efficiently sample \emph{competitive} cost functions from the larger configuration space. Since we assume no structure to the set of competitive cost functions, we employ a Markov-chain Monte Carlo (MCMC) method. Complementary methods include variational-inference (\eg~\citet{arenz2018efficient}) and evolutionary (\eg~\citet{mouret2015illuminating}) approaches, which can be challenging to scale up to unstructured, high-dimensional settings of which we have little prior domain knowledge. In our approach, we update the classic simulated tempering method~\cite{marinari1992simulated} with a novel annealing scheme \cite{kirkpatrick1983optimization, vcerny1985thermodynamical} designed for population diversity. We describe this approach next.

\section{Offline population synthesis}
\label{sec:pbt}

The goal of offline population synthesis is to generate a diverse set of
competitive agent behaviors. Formally,
we would like to sample pairs $(x, \theta) \in \mc{X}
\times \Theta$ that are both diverse as well as achieve small values for a
function $f(x, \theta)$.
In our AV application, $\theta$
parametrizes a neural network used to sample trajectories to follow, $x$ is
a weighting of various cost functions that the vehicle uses to select
trajectories from the samples, and $f$ is the simulated lap time.
With this motivation, we treat the method in more generality
assuming (as in our application) that while we can differentiate
$f(x, \theta)$ with respect to $\theta$, $x$ represents hyperparameters and
admits only function evaluations $f(x, \theta)$ rather than first-order
developments. The key
challenge is that we do not \emph{a priori} know a metric
with which to evaluate diversity (\eg, a kernel for a DPP) nor do we
know a base value of $f$ that is deemed acceptable for competitive
performance.

We make this problem more tractable via temperature-based Markov-chain Monte
Carlo (MCMC) and annealing methods
\citep{matyas1965random,hastings1970monte,kirkpatrick1983optimization,vcerny1985thermodynamical,Ingber93,HuHu11}.
Our goal is to sample from a Boltzmann distribution $g(x, \theta; \beta(t))
\propto e^{-\beta(t) f(x, \theta)}$, where $\beta(t)$ is an inverse
``temperature'' parameter that grows (or ``anneals'') with iterations
$t$. When $\beta(t)=0$, all configurations $(x, \theta)$ are equally likely
and all MCMC proposals are accepted; as $\beta(t)$ increases, accepted
proposals favor smaller $f$. Unlike standard hyperparameter optimization
methods \cite{bergstra2012random,jaderberg2017population} that aim to find a
single near-optimal configuration, our goal is to sample a diverse
population of $(x, \theta)$  achieving small $f(x, \theta)$. As such, our
approach---annealed adaptive population tempering
(\textsc{AAdaPT})---maintains a population of configurations and employs
high-exploration proposals based on the classic hit-and-run algorithm
\cite{smith1984efficient,belisle1993hit,lovasz1999hit}.

\subsection{\textsc{AAdaPT}}

\textsc{AAdaPT} builds upon replica-exchange MCMC, also called parallel tempering, which is a standard approach to maintaining a population of configurations~\cite{swendsen1986replica, geyer1991markov}. In parallel tempering, one maintains replicas of the system at $L$  different temperatures $\beta_{1} \ge \beta_{2} ... \ge \beta_{L}$ (which are predetermined and fixed), defining the density of the total configuration as $\prod_{i=1}^L g(x^i, \theta^i; \beta_i)$. The configurations at each level perform standard MCMC steps (also called ``vertical'' steps) as well as ``horizontal'' steps wherein particles are swapped between adjacent temperature levels (see Figure~\ref{fig:aadapt}). Horizontal proposals consist of swapping two configurations in adjacent temperature levels uniformly at random; the proposal is accepted using standard Metropolis-Hastings (MH) criteria \cite{hastings1970monte}. The primary benefit of maintaining parallel configurations is that the configurations at ``colder'' levels (higher $\beta$) can exploit high-exploration moves from ``hotter'' levels (lower $\beta$) which ``tunnel'' down during horizontal steps \cite{geyer1991markov}. This approach allows for faster mixing times, particularly when parallel MCMC proposals occur concurrently in a distributed computing environment.

\ifarxiv
\paragraph{Maintaining a population:}
\else
\textbf{Maintaining a population:}
\fi
In \textsc{AAdaPT} (Algorithm~\ref{alg:aadapt}), we maintain a \emph{population} of $D$ configurations at each separate temperature level. Note that this design always maintains $D$ individuals at the highest performance level (highest $\beta$). The overall configuration density is $\prod_{i=1}^L\prod_{j=1}^D g(x^{i,j}, \theta^{i,j}; \beta_i(t))$. Similar to parallel tempering, horizontal proposals are chosen uniformly at random from configurations at adjacent temperatures (see Appendix~\ref{sec:ops}). We get the same computational benefits of fast mixing in distributed computing environments and a greater ability to exploit high-temperature ``tunneling'' due to the greater number of possible horizontal exchanges between adjacent temperature levels. The benefit of the horizontal steps is even more pronounced in the RL setting as only 
vertical steps require new evaluations of $f$ (\eg~simulations).

\ifarxiv
\begin{algorithm}[t]
	\caption{\textsc{AAdaPT}}
	\label{alg:aadapt}
	\begin{small}
	\begin{algorithmic}
		\State {\bfseries input:} annealing parameter $\alpha$, vertical steps $V$, horizontal exchange steps $E$, temperature levels $L$, population size $d$, initial samples $\{x^{i,j}, \theta^{i,j}\}_{i\in\{1,L\}}^{j \in \{1,D\}}$, iterations $T$
		\State{Evaluate $f(x^{i,j},\theta^{i,j})$}
		\State{{\bfseries for }$t=1$ {\bfseries to} $T$}
		\State{$\;\;\;\;${\bfseries for } $j=1$ {\bfseries to} $L$ {\bfseries do} anneal $\beta_{L-j+1}(t)$ (problem \eqref{eq:cvx_anneal})}
		\State{$\;\;\;\;${\bfseries for } $k=1$ {\bfseries to} $V$ {asynchronously, in parallel}}
		\State{$\;\;\;\;\;\;\;\;${\bfseries for } each population $i$ asynchronously, in parallel}
		\State{$\;\;\;\;\;\;\;\;\;\;\;\;$Sample $\hat x^{i,j}$ according to hit-and-run proposal}
		\State{$\;\;\;\;\;\;\;\;\;\;\;\;$Evaluate $f(\hat x^{i,j},\theta^{i,j})$}
		\State{$\;\;\;\;\;\;\;\;\;\;\;\;$Apply MH criteria to update $x^{i,j}$}
		\State{$\;\;\;\;\;\;\;\;\;\;\;\;$Train $\theta^{i,j}$ via SGD}
		\State{$\;\;\;\;${\bfseries for }$e=1$ {\bfseries to} $E$ {\bfseries do} horizontal swaps (Appendix~\ref{sec:ops})}
	\end{algorithmic}
	\end{small}
\end{algorithm}
\else
\begin{algorithm}[t]
	\caption{\textsc{AAdaPT}}
	\label{alg:aadapt}
	\begin{small}
	\begin{algorithmic}
		\STATE {\bfseries input:} annealing parameter $\alpha$, vertical steps $V$, horizontal exchange steps $E$, temperature levels $L$, population size $d$, initial samples $\{x^{i,j}, \theta^{i,j}\}_{i\in\{1,L\}}^{j \in \{1,D\}}$, iterations $T$
		\STATE{Evaluate $f(x^{i,j},\theta^{i,j})$}
		\STATE{{\bfseries for }$t=1$ {\bfseries to} $T$}
		\STATE{$\;\;\;\;${\bfseries for } $j=1$ {\bfseries to} $L$ {\bfseries do} anneal $\beta_{L-j+1}(t)$ (problem \eqref{eq:cvx_anneal})}
		\STATE{$\;\;\;\;${\bfseries for } $k=1$ {\bfseries to} $V$ {asynchronously, in parallel}}
		\STATE{$\;\;\;\;\;\;\;\;${\bfseries for } each population $i$ asynchronously, in parallel}
		\STATE{$\;\;\;\;\;\;\;\;\;\;\;\;$Sample $\hat x^{i,j}$ according to hit-and-run proposal}
		\STATE{$\;\;\;\;\;\;\;\;\;\;\;\;$Evaluate $f(\hat x^{i,j},\theta^{i,j})$}
		\STATE{$\;\;\;\;\;\;\;\;\;\;\;\;$Apply MH criteria to update $x^{i,j}$}
		\STATE{$\;\;\;\;\;\;\;\;\;\;\;\;$Train $\theta^{i,j}$ via SGD}
		\STATE{$\;\;\;\;${\bfseries for }$e=1$ {\bfseries to} $E$ {\bfseries do} horizontal swaps (Appendix~\ref{sec:ops})}
	\end{algorithmic}
	\end{small}
\end{algorithm}
\fi

\ifarxiv
\paragraph{High-exploration vertical proposals:}
\else
\textbf{High-exploration vertical proposals:}
\fi
Another benefit of maintaining
parallel populations is to improve exploration. We further improve
exploration by using hit-and-run proposals \cite{smith1984efficient,
  belisle1993hit, lovasz1999hit} for the vertical MCMC chains. Namely, from
a current point $(x, \theta)$ we sample a uniformly random direction $\hat{u}$ and then choose a point uniformly on the segment $\mc{X} \cap (\{x +
\mathbb{R} \cdot \hat{u}\} \times \{\theta\})$. This approach has several
guarantees for efficient mixing~\citep{lovasz1999hit,
  lovasz2003hit, lovasz2006hit}. Note that in our implementation the MCMC
steps are only performed on $x$, while $\theta$ updates occur via SGD (see
below).

\ifarxiv
\paragraph{Adaptively annealed temperatures:}
\else
\textbf{Adaptively annealed temperatures:}
\fi
A downside to parallel tempering
is the need to determine the temperature levels $\beta_i$ beforehand. In
\textsc{AAdaPT}.  we adaptively update temperatures. Specifically, we anneal
the prescribed horizontal acceptance probability of particle exchanges
between temperature levels as $\alpha^{t/(L-1)}$ for a fixed hyperparameter
$\alpha \in (0,1)$. Define the empirical acceptance probability of swaps of
configurations between levels $i-1$ and $i$ as
\begin{align*}
  p_{i-1,i} &:= \frac{1}{D^2} \sum_{j=1}^{D}\sum_{k=1}^D (y_{i-1,i}^{j,k})^{\beta_{i-1} - \beta_{i}}\\
  y_{i-1,i}^{j,k} &:= \min \left (1, e^{ f(x^{i-1,j}, \theta^{i-1,j}) - f(x^{i,k}, \theta^{i,k})  } \right ).
\end{align*}
Then, at the beginning of each iteration (in which we perform a series of vertical and horizontal MCMC steps), we update the $\beta_i(t)$ sequentially; we fix $\beta_L(t):= \beta_L = 0$ and for a given $\beta_{i}$, we set $\beta_{i-1}$ by solving the following convex optimization problem:
\begin{equation}
  \minimize_{ \{ \beta_{i-1}\ge \beta_i, ~~p_{i-1,i} \le \alpha^{\frac{t}{(L-1)}} \} } ~~ \beta_{i-1},
  \label{eq:cvx_anneal}
\end{equation}
using binary search. This adaptive scheme is
crucial in our problem setting, where we \emph{a priori} have
no knowledge of appropriate scales for $f$ and, as a result, $\beta$. In
practice, we find that forcing $\beta_i$ to monotonically increase in
$t$ yields better mixing, so we set
$\beta_{i}(t) = \max (\beta_{i}(t-1), \hat \beta_{i}(t))$, where $\hat
\beta_{i}(t)$  solves problem \eqref{eq:cvx_anneal}.

 \begin{figure}[!!t]
 	\centering
 	\ifarxiv
 	\includegraphics[width=0.65\columnwidth]{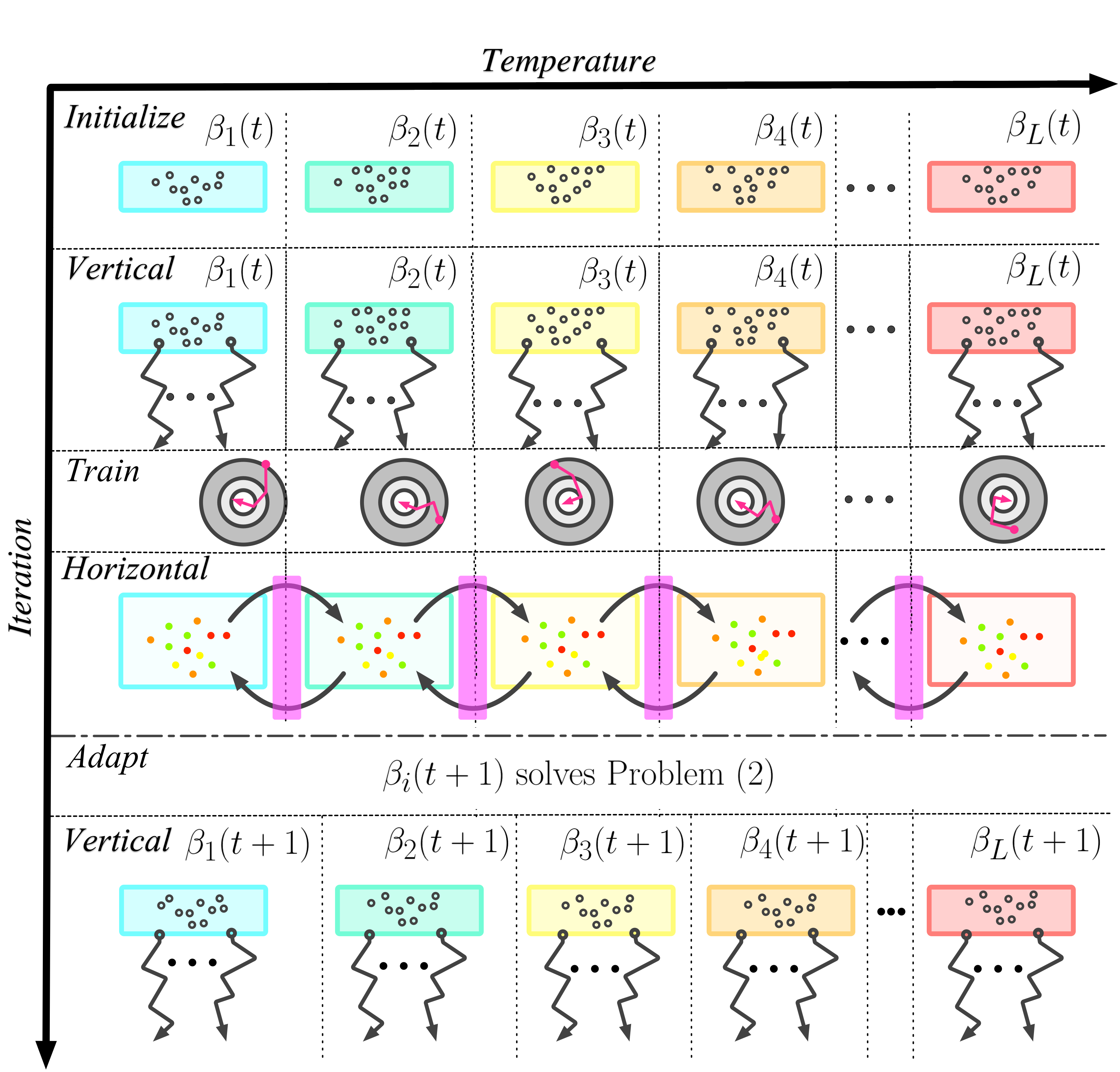}
 	\else
 	\vspace{-9pt}
 	\includegraphics[width=0.85\columnwidth]{figs/aadapt.png}
 	\vspace{-10pt}
 	\fi
 	\caption{Illustration of \textsc{AAdaPT}. Vertical MCMC steps (jagged black arrows) occur in parallel for $x^{i,j}$, followed by gradient descent for trainable parameters $\theta^{i,j}$ (magenta arrows) and horizontal MCMC configuration swaps between populations (curved black arrows). Temperatures $\beta_i(t)$ are then updated by problem \eqref{eq:cvx_anneal}.}
 	\label{fig:aadapt}
 	\ifarxiv
	\else
 	\vspace{-16pt}
 	\fi
 \end{figure}
 
\ifarxiv
\paragraph{Evaluating proposals via self-play:}
\else
\vspace{-2pt}
\textbf{Evaluating proposals via self-play:}
\fi
We apply \textsc{AAdaPT} to a multi-agent game. It is only possible to evaluate $f(x,\theta)$ in the context of other agents, but we consider the setting where demonstrations from potential opponents are either difficult to obtain or held secret. Thus, we iteratively evaluate $f$ via self-play. For each configuration $(x, \theta)$, we perform a race in the simulated environment between two vehicles with the same policy (with $f(x,\theta)$ being the lap time of the agent that starts behind the other). Vertical MCMC steps propose new $x$, which are then accepted according to MH criteria. After a number of vertical iterations, a stochastic gradient descent (SGD) step is applied to $\theta$ (which maximizes the likelihood of the trajectories chosen by the agent with cost functions parametrized by $x$). Following this process, the updated agents in adjacent temperature levels are exchanged via horizontal MCMC steps. Although we choose $f(x, \theta)$ as the laptime, explicit entropic terms can also be included to further encourage diversity within a single vertical chain or across the population.

At the conclusion of \textsc{AAdaPT}, we use the coldest population of $D$
agents at inverse temperature $\beta_1(T)$ to build a DPP
sampler. Specifically, define the matrix $H$
via configurations $x^{1,\cdot}$ at the lowest temperature
\begin{equation}\label{eq:maha_matrix}
H_{ab} = \| x^{1,a} - x^{1,b}\|.
\end{equation}
Then we define the DPP kernel $K$ as $K_{ab} = \exp\left ({-H_{ab}^2/\sigma^2} \right )$ with a scale parameter $\sigma=0.5$, and we sample $d \le D$ configurations from this DPP.

\section{Online learning with computation budgets}
\label{sec:bandits}
Now we exploit the population of $d$ learned prototype behaviors to enable
robust performance. The agent's (our) goal is to act robustly
 against uncertainty in opponent behaviors and adapt online to a given
opponent. We parametrize the agent's (stochastic) policy as follows. At each
time step, we sample goal states (consisting of pose and velocity) via
a generative model $G(\theta)$ parametrized by $\theta$ (as in
Section~\ref{sec:pbt}). For a given goal state, we compute the parameters of
a cubic spline that reaches the goal by solving a nonconvex trajectory
optimization problem~\cite{mcnaughton2011parallel}; on this proposed
trajectory we evaluate a collection of cost functions (such as the maximum
curvature or minimum acceleration along the path) weighted by
the vector $x$ (recall Section~\ref{sec:pbt}), similar
to \citet{sadat2019jointly} (see Appendix \ref{sec:software} for a 
description of all costs). Finally, we choose the sampled goal trajectory
with minimum robust cost and perform an action to track this trajectory.

Some of the costs that evaluate the utility of a goal state involve
beliefs about the opponent's future trajectory. For a goal $p$, 
we rewrite the performance objective at time $t$ with respect to
a protoype opponent $i$ as a receding-horizon cost
\begin{equation*}
c_i(t; p):=-\sum_{s>t} \lambda^{s-t} \E[r(o(s); p)],
\end{equation*}
where we omit dependence on the agent's cost weights $x$ for convenience. We
parametrize the agent's belief of the opponent's behavior as a categorical
distribution of beliefs over the prototypes. Specifically, let $w(t) \in
\Delta$ be a weight vector at a given time $t$, where $\Delta :=\{a \in
\R_+^d \mid a^T\mathbf{1}=1 \}$, and let
$P_0(t):=\operatorname{Categorical}(w(t))$. Then $P_0(t)$ is the nominal
distribution describing the agent's belief about the opponent. Furthermore,
we consider ambiguity sets $\mc{P}(t)$ defined by divergence measures on the
space of probability measures over $\Delta$. For a convex function $\phi$
with $\phi(1) = 0$, the $\phi$-divergence between distributions $P$ and $Q$
is $\fdiv{P}{Q} = \int \phi(\frac{dP}{dQ}) dQ$. We use sets $\mc{P}(t)
\defeq \left\{Q : \fdiv{Q}{P_0}(t) \le \tol \right\}$ where $\tol > 0$ is a
specified constant.  Our implementation employs the $\chi^2$-divergence
$\phi(t) = t^2 - 1$.

Having defined the ambiguity set $\mc{P}(t)$ and the cost with respect to each prototype opponent, we rewrite the robust performance objective  \eqref{eq:robrl} to clearly illustrate the optimization problem. Let $C(t;p)$ be a random variable representing the expected cost with respect to the belief of the opponent (and goal state $p$). Then the robust cost at time $t$ is
\begin{equation} \label{eq:robustcost}
  \sup_{Q \in \mc{P}(t)} \!\! \E_Q[C(t;p)]
  = \sup_{q: \sum_i w_i \phi(\frac{q_i}{w_i}) \le \tol} \sum_i q_ic_i(t;p).
\end{equation}
When $\rho=0$, this is the expected cost under $P_0$; larger $\rho$ adds robustness. Solving the convex optimization problem \eqref{eq:robustcost} first requires computing the costs $c_i(t)$. Using $\lambda \ge 0$ for the constraint $\fdiv{Q}{P_0} \le
\tol$, a partial Lagrangian is
\begin{equation*}
  \mc{L}(q, \lambda) = \sum_i q_i c_i(t)
  - \lambda \left (\sum_i w_i \phi\left ({q_i}/{w_i} \right ) - \tol \right ).
\end{equation*}
The corresponding dual function is $v(\lambda)=\sup_{q \in \Delta}\mc{L}({q, \lambda})$, and minimizing $v(\lambda)$ via bisection yields the solution to problem~\eqref{eq:robustcost}. Maximizing $\mc{L}(q, \lambda)$ with respect to $q$ for a given $\lambda$ requires $O(d)$ time using a variant of median-based search \cite{Duchi_2008} (see Appendix~\ref{sec:online_appendix}). Thus, computing an $\epsilon$-suboptimal solution uses $O(d\log(1/\epsilon))$ time.

The supremum in the robust cost~\eqref{eq:robustcost} is over belief ambiguity. Thus, our approach generalizes beyond the goal-sampling and trajectory-optimization approach presented at the beginning of this section; it is compatible with any policy that minimizes a cost $c_i(t)$ with respect to a parametrization for opponent $i$'s policy. In this way, it is straightforward to combine our framework with robust model predictive control formulations that have rigorous stability guarantees.

In order to perform competitive actions, the agent updates the ambiguity set $\mc{P}(t)$ and computes the robust cost \eqref{eq:robustcost} on an embedded processor on board the vehicle in real-time (\eg~within 100 milliseconds). In the next two subsections, we describe how to perform both operations in the presence of a severely limited computational budget, and we quantitatively analyze the implications of the budget on the robustness/performance tradeoff.

\subsection{Approximating the robust cost}\label{sec:approx_cost}

For a large library of prototypical opponents (large $d$), computing every $c_i$ in the objective  \eqref{eq:robustcost} is prohibitively expensive. Instead, we consider an empirical approximation of the objective, where we draw $N_w$ indices $J_k \simiid P_0(t)$ (where $N_w<d$) and consider the weighted sum of these costs $c_{j_k}$. Specifically, we define the empirical approximation $\mc{P}_{N_w}:=\{q:\fdiv{q}{\mathbf{1}/N_w} \le \tol\}$ to $\mc{P}$ and solve the following empirical version of problem~\eqref{eq:robustcost}:
\begin{equation}\label{eq:real-problem}
  \maximize_{q \in \mc{P}_{N_w}} ~~ \sum_k q_k c_{j_k}(t;p).
\end{equation}

This optimization problem \eqref{eq:real-problem} makes manifest the price of robustness in two ways. The first involves the setup of the problem---computing the $c_{j_k}$. First, we denote the empirical distribution as $\hat w(t)$ with $\hat w_i(t)= \sum_k^{N_w}\mathbf{1}\{j_k = i\}/N_w$. Even for relatively small $N_w/d$, $\hat w(t)$ concentrates closely around $w(t)$ (see \eg~\citet{weissman2003inequalities} for a high-probability bound).
Thus, when the vehicle's belief about its opponent $w(t)$ is nearly uniform, the $j_k$ values have few repeats. Conversely, when the belief is peaked at a few opponents, the number of unique indices is much smaller than $N_w$, allowing faster computation of $c_{j_k}$. The short setup-time enables faster planning or, alternatively, the ability to compute the costs $c_{j_k}$ with longer horizons. Therefore, theoretical performance automatically improves as the vehicle learns about the opponent and the robust evaluation approaches the true cost.

The second way we illustrate the price of robustness is by quantifying the quality of the approximation \eqref{eq:real-problem} with respect to the number of samples $N_w$. For shorthand, define the true expected and approximate expected costs for goal $p$ and distributions $Q$ and $q$ respectively as
\begin{equation*}
  R(Q;p):=\E_Q[C(t;p)], \;\; \hat R(q;p) := \frac{1}{N_w}\sum_{k=1}^{N_w}q_kc_{j_k}(t;p).
\end{equation*}
Then, we have the following bound:
\begin{proposition}[Approximation quality]
  \label{prop:approx}
  Suppose $C(t; p)\in [-1,1]$ for all $t, p$. Let $A_\tol =
  \frac{2(\tol + 1)}{\sqrt{1 + \tol} - 1}$ and $B_{\tol}=\sqrt{8(1+\tol)}$.
  Then with probability at least $1 - \delta$
  over the $N_w$ samples $J_k \simiid P_0$,
  \begin{small}
  \begin{equation*}
    \bigg| \sup_{q\in \mc{P}_{N_w}}\hat R(q;p) - \sup_{Q\in \mc{P}} R(Q;p) \bigg | \le 4A_{\tol}\sqrt{\frac{\log(2N_w)}{N_w}}+B_{\tol}\sqrt\frac{{\log \frac{2}{\delta}}}{{N_w}}
  \end{equation*}
  \end{small}
\end{proposition}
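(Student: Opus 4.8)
The plan is to dualize the robust cost so that the population optimum $\sup_{Q\in\mc{P}}R(Q;p)$ and the empirical optimum $\sup_{q\in\mc{P}_{N_w}}\what{R}(q;p)$ take a common parametric form, to reduce their difference to a single uniform empirical-process deviation, and then to control that deviation by a finite net together with a bounded-differences inequality. By strong duality (convexity plus a Slater point since $\tol>0$), writing the likelihood ratio $\ell=dQ/dP_0$ and dualizing the divergence constraint with a multiplier $\lambda\ge0$ and the normalization with a multiplier $\eta$, the population optimum becomes
\begin{equation*}
  \sup_{Q\in\mc{P}}R(Q;p)=\inf_{\lambda\ge0,\,\eta}\left\{\lambda\tol+\eta+\lambda\,\E_{P_0}\!\left[\phi^*\!\left(\frac{C(t;p)-\eta}{\lambda}\right)\right]\right\},
\end{equation*}
where $\phi^*$ is the conjugate of the $\chi^2$ integrand, and the empirical optimum has the identical form with $\E_{P_0}$ replaced by the average over the sampled costs $c_{j_1},\dots,c_{j_{N_w}}$. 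Since $|\inf_x f-\inf_x g|\le\sup_x|f-g|$ and the deterministic terms $\lambda\tol+\eta$ cancel, the left-hand side of the proposition is bounded by
\begin{equation*}
  \Psi_{N_w}:=\sup_{(\lambda,\eta)\in\Lambda}\left|\frac{1}{N_w}\sum_{k=1}^{N_w}h_{\lambda,\eta}(c_{j_k})-\E_{P_0}[h_{\lambda,\eta}(C)]\right|,\qquad h_{\lambda,\eta}(z):=\lambda\,\phi^*\!\left(\frac{z-\eta}{\lambda}\right),
\end{equation*}
reducing everything to a uniform deviation of empirical means over the class $\{h_{\lambda,\eta}\}$.

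Next I would cut the dual domain down to an explicit compact set. Using $C(t;p)\in[-1,1]$ and $\tol>0$, for both the population and empirical problems the infimum is attained on a compact $\Lambda$ depending only on $\tol$ and the range $[-1,1]$: $\eta$ lies in a bounded interval and, crucially, $\lambda$ stays above an explicit $\underline\lambda(\tol)>0$, since as $\lambda\downarrow0$ the integrand blows up like $\hinge{C-\eta}^2/(4\lambda)$ whenever mass sits above $\eta$, so the minimizer cannot have $\lambda$ too small. On $\Lambda$ the function $h_{\lambda,\eta}$ is uniformly bounded — its range is $O(1/\underline\lambda)$, which governs $B_\tol=\sqrt{8(1+\tol)}$ — and Lipschitz in $z$ with constant $\lesssim 1/\underline\lambda$ as well as jointly Lipschitz in $(\lambda,\eta)$; these explicit $\tol$-dependent constants are the source of $A_\tol=\frac{2(\tol+1)}{\sqrt{1+\tol}-1}$. (Equivalently, for the $\chi^2$ ball one optimizes $\lambda$ in closed form, leaving an infimum over the scalar $\eta$ of a CVaR-type functional $\eta+\sqrt{1+\tol}\,(\E_{P_0}[\hinge{C-\eta}^2])^{1/2}$; passing through $|\sqrt a-\sqrt b|\le|a-b|/(\sqrt a+\sqrt b)$ then produces the denominator $\sqrt{1+\tol}-1$ once $\sqrt{\E_{P_0}[\hinge{C-\eta}^2]}$ is lower-bounded at the optimal threshold.)

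I would then finish with bounded differences plus a finite-net bound on the expectation. Altering a single sampled cost $c_{j_k}$ changes $\Psi_{N_w}$ by at most $O(B_\tol/N_w)$ because $h_{\lambda,\eta}$ has range $O(B_\tol)$ on $\Lambda$, so McDiarmid's inequality gives $\Psi_{N_w}\le\E[\Psi_{N_w}]+B_\tol\sqrt{\log(2/\delta)/N_w}$ with probability at least $1-\delta$. To bound $\E[\Psi_{N_w}]$ I would discretize $\Lambda$ into an $O(N_w)$-point grid — valid because $h_{\lambda,\eta}$ is jointly Lipschitz in $(\lambda,\eta)$, so the grid error is $O(1/N_w)$ and negligible — and apply Hoeffding with the finite-class (Massart) maximal inequality over the grid, yielding $\E[\Psi_{N_w}]\le 4A_\tol\sqrt{\log(2N_w)/N_w}$. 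Summing the two contributions gives the stated inequality.

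The main obstacle is the second step: the clean, $\tol$-explicit control of the dual variables. A naive bound on $h_{\lambda,\eta}$ degenerates as $\lambda\to0$, where the function becomes arbitrarily steep, so the whole argument hinges on the quantitative lower bound on the optimal $\lambda$ — equivalently, the lower bound on $\sqrt{\E_{P_0}[\hinge{C-\eta}^2]}$ at the optimal threshold in the reduced formulation — that produces exactly $A_\tol$ and $B_\tol$. Duality and the concentration/net argument are comparatively routine once these constants are in hand.
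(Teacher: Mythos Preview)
Your main line of argument has a genuine gap at exactly the point you flag as the obstacle: there is no uniform lower bound $\underline\lambda(\tol)>0$ on the optimal dual multiplier. For the $\chi^2$ case, after optimizing $\lambda$ in closed form one gets $\lambda^\star \propto \sqrt{\E_{P_0}[\hinge{C-\eta}^2]}$, and at the optimal $\eta$ this moment can be arbitrarily small (take $C$ nearly constant, so $\eta^\star$ sits at or near $\sup C$). Thus any compact $\Lambda$ that is guaranteed to contain the optimum for \emph{every} distribution on $[-1,1]$ must include $\lambda$ arbitrarily close to $0$, and on such a set the range of $h_{\lambda,\eta}(z)=\lambda+\hinge{z-\eta}^2/(4\lambda)$ blows up. Consequently your McDiarmid step, which needs the bounded-difference constant to be $O(B_\tol/N_w)$ uniformly in $(\lambda,\eta)\in\Lambda$, cannot go through with a $\tol$-only constant.

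The paper avoids this by a different decomposition. First, the $B_\tol$ term does not come from the dual at all: the map $c\mapsto\sup_{q\in\mc{P}_{N_w}}q^Tc$ is a convex, $\sqrt{(1+\tol)/N_w}$-Lipschitz function of the sample vector (since $\|q\|_2\le\sqrt{(1+\tol)/N_w}$ for every feasible $q$), and Samson's sub-Gaussian concentration for Lipschitz convex functions gives $\sup_{q}\hat R(q;p)\in\E[\sup_q\hat R(q;p)]\pm B_\tol\sqrt{\log(2/\delta)/N_w}$ directly. Second, the $A_\tol$ term bounds the \emph{bias} $|\E[\sup_q\hat R]-\sup_Q R|$, and here the paper takes precisely your parenthetical route: optimize $\lambda$ out first, leaving the one-parameter functional $S_{N_w}(\eta)=\sqrt{1+\tol}\,\big(\tfrac{1}{N_w}\sum_k\hinge{c_{j_k}-\eta}^2\big)^{1/2}+\eta$. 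Crucially, $S_{N_w}$ is $\sqrt{1+\tol}$-Lipschitz in $\eta$ and $\sqrt{1+\tol}/\sqrt{N_w}$-Lipschitz in the sample vector \emph{regardless} of the distribution, so no lower bound on $\lambda$ is ever needed. One then localizes $\eta$ to $[-c_\tol,1]$ with $c_\tol=\frac{\sqrt{1+\tol}+1}{\sqrt{1+\tol}-1}$ (this is where the denominator $\sqrt{1+\tol}-1$ enters), applies a moment-lower-bound lemma to handle the bias of the $L^2$-type norm, and covers the $\eta$-interval. Your instinct to reduce to the $\eta$-only form was right; the fix is to pursue that reduction as the primary route and to extract the high-probability fluctuation from the primal rather than from bounded differences on the dual class.
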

\noindent See Appendix \ref{sec:online_appendix} for the proof. Intuitively, increasing accuracy of the robust cost requires more samples (larger $N_w$), which comes at the expense of computation time. Similar to computing the full cost \eqref{eq:robustcost}, $\epsilon$-optimal solutions require $O(N_u\log(1/\epsilon))$ time for $N_u \le N_w$ unique indices $j_k$. In our experiments (\cf~Section~\ref{sec:experiments}), most of the computation time involves the setup to compute the $N_u$ costs $c_{j_k}$.

\subsection{Updating the ambiguity set}\label{sec:bandit}
To maximize performance against an opponent, the agent updates the ambiguity set $\mc P$ as the race progresses. Since we consider $\phi$-divergence balls of fixed size $\tol$, this update involves only the nominal belief vector $w(t)$. As with computation of the robust cost, this update must occur efficiently due to time and computational constraints.

For a given sequence of observations of the opponent $o^H_{\rm opp}(t):=\{o_{\rm opp}(t), o_{\rm opp}(t-1),...,o_{\rm opp}(t-h+1)\}$ over a horizon $h$, we define the likelihood of this sequence coming from the $i$\textsuperscript{th} prototype opponent as
\begin{equation}
l^h_i(t) = \log d\P\left (o^h_{\rm opp}(t) | G(\theta^{1,i}) \right ),
\end{equation}
where $G(\theta^{1,i})$ is a generative model of goal states for the $i$\textsuperscript{th} prototype opponent. 
Letting $\bar{l}$ be a uniform upper bound on $l^h_i(t)$, we define the losses $L_i(t):=1-l^h_i(t)/\bar{l}$.

If we had enough time/computation budget, we could compute $L_i(t)$ for all prototype opponents $i$ and perform an online mirror descent update with an entropic Bregman divergence \cite{shalev2012online}. In a resource-constrained setting, we can only select a few of these losses, so we use EXP3~\cite{auer2002nonstochastic} to update $w(t)$. Unlike a standard adversarial bandit setting, where we pull just one arm (\eg compute a loss $L_i(t)$) at every time step, we may have resources to compute up to $N_w$ losses in parallel at any given time (the same indices $J_k$ discussed in Section \ref{sec:approx_cost}). Denote our unbiased subgradient estimate as $\gamma(t)$:
\begin{equation}\label{eq:exp3grad}
\gamma_{i}(t) = \frac{1}{N_w}\sum_{k=1}^{N_w}\frac{L_i(t)}{w_i(t)}\mathbf{1}\{J_k = i \}.
\end{equation}
Algorithm~\ref{alg:exp3} describes our slightly modified EXP3 algorithm, which has the following expected regret.

\ifarxiv
\begin{algorithm}[tb]
	\caption{EXP3 with $N_w$ arm-pulls per iteration}
	\label{alg:exp3}
	\begin{small}
	\begin{algorithmic}
		\State {\bfseries Input:} Stepsize sequence $\eta_t$, $w(0):= \mathbf{1}/d$, steps $T$
		\State{{\bfseries for }$t=0$ {\bfseries to} $T-1$}
		\State{$\;\;\;\;$ Sample $N_w$ indices $J_k\simiid \operatorname{Categorical}(w(t))$}
		\State{$\;\;\;\;$ Compute $\gamma(t)$ (Equation \eqref{eq:exp3grad})}
		\State{$\;\;\;\;$ $w_i(t+1):= \frac{w_i(t) \exp \left ({-\eta_t\gamma_i(t)} \right )}{\sum_{j=1}^d w_j(t) \exp \left ({-\eta_t\gamma_j(t)} \right ) }$}
	\end{algorithmic}
	    \end{small}
\end{algorithm}
\else
\begin{algorithm}[tb]
	\caption{EXP3 with $N_w$ arm-pulls per iteration}
	\label{alg:exp3}
	\begin{small}
	\begin{algorithmic}
		\STATE {\bfseries Input:} Stepsize sequence $\eta_t$, $w(0):= \mathbf{1}/d$, steps $T$
		\STATE{{\bfseries for }$t=0$ {\bfseries to} $T-1$}
		\STATE{$\;\;\;\;$ Sample $N_w$ indices $J_k\simiid \operatorname{Categorical}(w(t))$}
		\STATE{$\;\;\;\;$ Compute $\gamma(t)$ (Equation \eqref{eq:exp3grad})}
		\STATE{$\;\;\;\;$ $w_i(t+1):= \frac{w_i(t) \exp \left ({-\eta_t\gamma_i(t)} \right )}{\sum_{j=1}^d w_j(t) \exp \left ({-\eta_t\gamma_j(t)} \right ) }$}
	\end{algorithmic}
	    \end{small}
\end{algorithm}
\fi

\begin{proposition}\label{prop:exp3}
Let $z := \frac{d-1}{N_w}+1$. Algorithm~\ref{alg:exp3} run for $T$ iterations with stepsize $\eta=\sqrt{\frac{2\log (d)}{zT}}$ has expected regret bounded by $\sum_{t=1}^T \E \left [ \gamma(t)^T(w(t) - w^{\star}) \right ] \le \sqrt{2z T\log (d)}$.
\end{proposition}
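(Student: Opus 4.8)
The plan is to recognize Algorithm~\ref{alg:exp3} as online mirror descent on the simplex $\Delta$ with the negative-entropy regularizer $\psi(w)=\sum_i w_i\log w_i$, for which the multiplicative-weights update is exactly the associated Bregman step, and then to run the standard potential-function argument, with the one genuine twist being control of the second-moment (stability) term for the $N_w$-parallel-pull estimator~\eqref{eq:exp3grad}. Throughout I take $w^\star\in\Delta$ to be an arbitrary fixed comparator and use that the losses satisfy $L_i(t)\in[0,1]$, so that $\gamma_i(t)\ge 0$.

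First I would establish a per-step inequality via the potential $\Phi(t):=\dkl{w^\star}{w(t)}$. Expanding $\Phi(t)-\Phi(t+1)$ with the update rule gives $-\eta\langle w^\star,\gamma(t)\rangle-\log\sum_j w_j(t)e^{-\eta\gamma_j(t)}$. Applying $e^{-x}\le 1-x+\tfrac{x^2}{2}$ (valid since $\eta\gamma_j(t)\ge 0$) and then $\log(1+u)\le u$ to the normalizer yields
\[
\langle \gamma(t), w(t)-w^\star\rangle \;\le\; \frac{1}{\eta}\big(\Phi(t)-\Phi(t+1)\big) + \frac{\eta}{2}\sum_i w_i(t)\gamma_i(t)^2 .
\]
Summing over $t$ telescopes the potential, and since $\Phi(T+1)\ge 0$ and $\Phi(1)=\dkl{w^\star}{\mathbf{1}/d}\le\log d$, this produces the deterministic bound $\sum_t \langle\gamma(t),w(t)-w^\star\rangle \le \frac{\log d}{\eta} + \frac{\eta}{2}\sum_t\sum_i w_i(t)\gamma_i(t)^2$, holding for every realization and every fixed $w^\star$.

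The crux—and the only place the parallel-pull structure enters—is bounding the expected stability term. Writing $n_i:=\sum_k \mathbf{1}\{J_k=i\}\sim\mathrm{Binomial}(N_w,w_i(t))$ so that $\gamma_i(t)=L_i(t)\,n_i/(N_w w_i(t))$, I would condition on the history through time $t-1$ (which fixes $w(t)$) and use $\E[n_i^2]=N_w w_i(t)(1-w_i(t))+N_w^2 w_i(t)^2$ to compute
\[
\E\Big[\sum_i w_i(t)\gamma_i(t)^2 \,\Big|\, w(t)\Big] = \sum_i \frac{L_i(t)^2}{N_w}\big(1-w_i(t)+N_w w_i(t)\big) \le \frac{d-1}{N_w}+1 = z,
\]
where the inequality uses $L_i(t)^2\le 1$ and $\sum_i w_i(t)=1$; this is precisely the origin of the constant $z$. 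Taking total expectations of the deterministic bound and applying this estimate at each of the $T$ steps gives $\E\big[\sum_t\langle\gamma(t),w(t)-w^\star\rangle\big]\le \frac{\log d}{\eta}+\frac{\eta zT}{2}$.

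Finally I would minimize the right-hand side over $\eta$: the optimizer is $\eta=\sqrt{2\log d/(zT)}$, exactly the prescribed stepsize, and substituting it balances the two terms to yield $2\sqrt{zT\log d/2}=\sqrt{2zT\log d}$, completing the proof. I expect the main obstacle to be the stability-term step: one must condition carefully so that $w(t)$ is treated as fixed when taking the binomial expectation, and must retain the variance contribution $N_w w_i(1-w_i)$ (not merely the mean-squared term) so as to recover the $(d-1)/N_w$ summand that distinguishes this bound from the standard $N_w=1$ EXP3 analysis.
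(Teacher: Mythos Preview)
Your proof is correct and follows essentially the same route as the paper: both establish the per-step inequality $\langle\gamma(t),w(t)-w^\star\rangle\le\frac{1}{\eta}(\Phi(t)-\Phi(t+1))+\frac{\eta}{2}\sum_i w_i(t)\gamma_i(t)^2$, telescope, bound the stability term via the binomial second moment $\E[n_i^2\mid w(t)]=N_w(N_w-1)w_i(t)^2+N_w w_i(t)$ to extract exactly $z=(d-1)/N_w+1$, and optimize $\eta$. The only cosmetic difference is in how the per-step bound is obtained: the paper packages it as the generic online-mirror-descent lemma and proves the local term via a second-order Taylor expansion of the log-sum-exp normalizer (its Lemma on $-\frac{1}{\eta}B_\psi(y,x)+g^T(x-y)\le\frac{\eta}{2}\sum_i g_i^2 x_i$), whereas you expand the KL potential directly and use the elementary inequalities $e^{-x}\le 1-x+x^2/2$ for $x\ge 0$ and $\log(1+u)\le u$; both are standard derivations of the same Hedge/EXP3 inequality and rely equally on $\gamma_i(t)\ge 0$.
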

\noindent See Appendix \ref{sec:online_appendix} for the proof. This regret bound looks similar to that if we simply ran $N_w$ standard EXP3 steps per iteration $t$ (in which case $z= d/N_w$). However, our approach enables parallel computation which is critical in our time-constrained setting. Note that the ``multiple-play'' setting we propose here has been studied before with better regret bounds but higher computational complexity per iteration \cite{uchiya2010algorithms, zhou2018budget}. We prefer our approach for its simplicity and ability to be easily combined with the robust-cost computation.

\section{Experiments}
\label{sec:experiments}
In this section we first describe the AR environment used to conduct our experiments. 
Next we explore the hyperparameters of the algorithms in Section~\ref{sec:pbt} and~\ref{sec:bandits}, identifying a preferred configuration. 
Then we consider the overarching hypothesis: online adaptation can improve the performance of robust control strategies. We show the statistically significant results affirming the theory and validate the approach's performance on real vehicles. 

The experiments use an existing low-cost 1/10$^{th}$-scale, Ackermann-steered AV (Figure~\ref{fig:carmain}). %
Additionally, we create a simulator and an associated OpenAI Gym API \cite{gym} suitable for distributed computing. The simulator supports multiple agents as well as deterministic executions. We experimentally determine the physical parameters of the agent models for simulation and use SLAM to build the virtual track as a mirror of a real location (see Figure~\ref{fig:multimodal}). The hardware specifications, software, and simulator are open-source~\footnote{\url{https://github.com/travelbureau/f0_icml_code}} (see Appendices~\ref{sec:hardware} and \ref{sec:software} for details).

The agent software uses a hierarchical planner \cite{gat1998three} similar to \citet{ferguson2008motion}. The key difference is the use of a masked autoregressive flow (MAF)~\cite{rezende2015variational} which provides the generative model for goal states, $G(\theta)$. Belief inference and robust cost computation require sampling and evaluating the likelihood of goal states. MAFs can evaluate likelihoods quickly but generate samples slowly. Inspired by \citet{oord2018parallel} we overcome this inefficency by training a ``student'' inverse autogressive flow (IAF)~\cite{kingma2016improved} on MAF samples. Given a sample of goals from the IAF, the agent synthesizes dynamically feasible trajectories following \citet{mcnaughton2011parallel}. Each sample is evaluated according to Equation \ref{eq:robustcost}; the weights of the cost functions are learned by \textsc{AAdaPT} (and formal definitions of the cost components are in Appendix~\ref{sec:software}). Belief updates use Algorithm \ref{alg:exp3} using the MAF to compute the losses $L_i(t)$. 

\ifarxiv
\begin{figure}[!!!t]
	\centering
	\includegraphics[width=0.5\columnwidth]{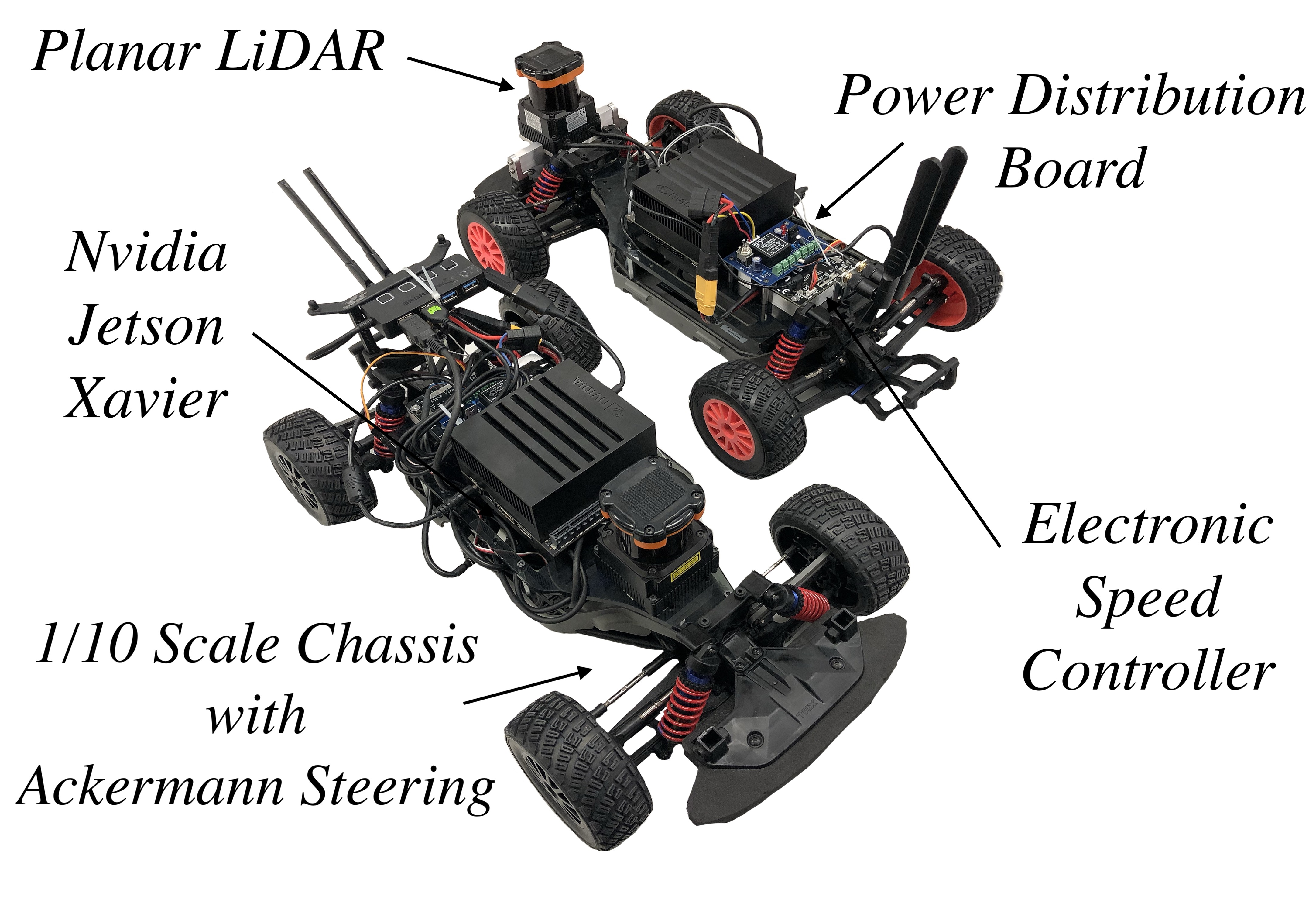}
	\ifarxiv
	\else
	\vspace{-15pt}
	\fi
	\caption{Components of the 1/10-scale vehicle}
	\label{fig:carmain}
\end{figure}
\begin{figure}[!!t]
	\begin{minipage}{0.5\columnwidth}
		\centering
		\subfigure[Performance vs. iteration]{\includegraphics[width=1.0\textwidth]{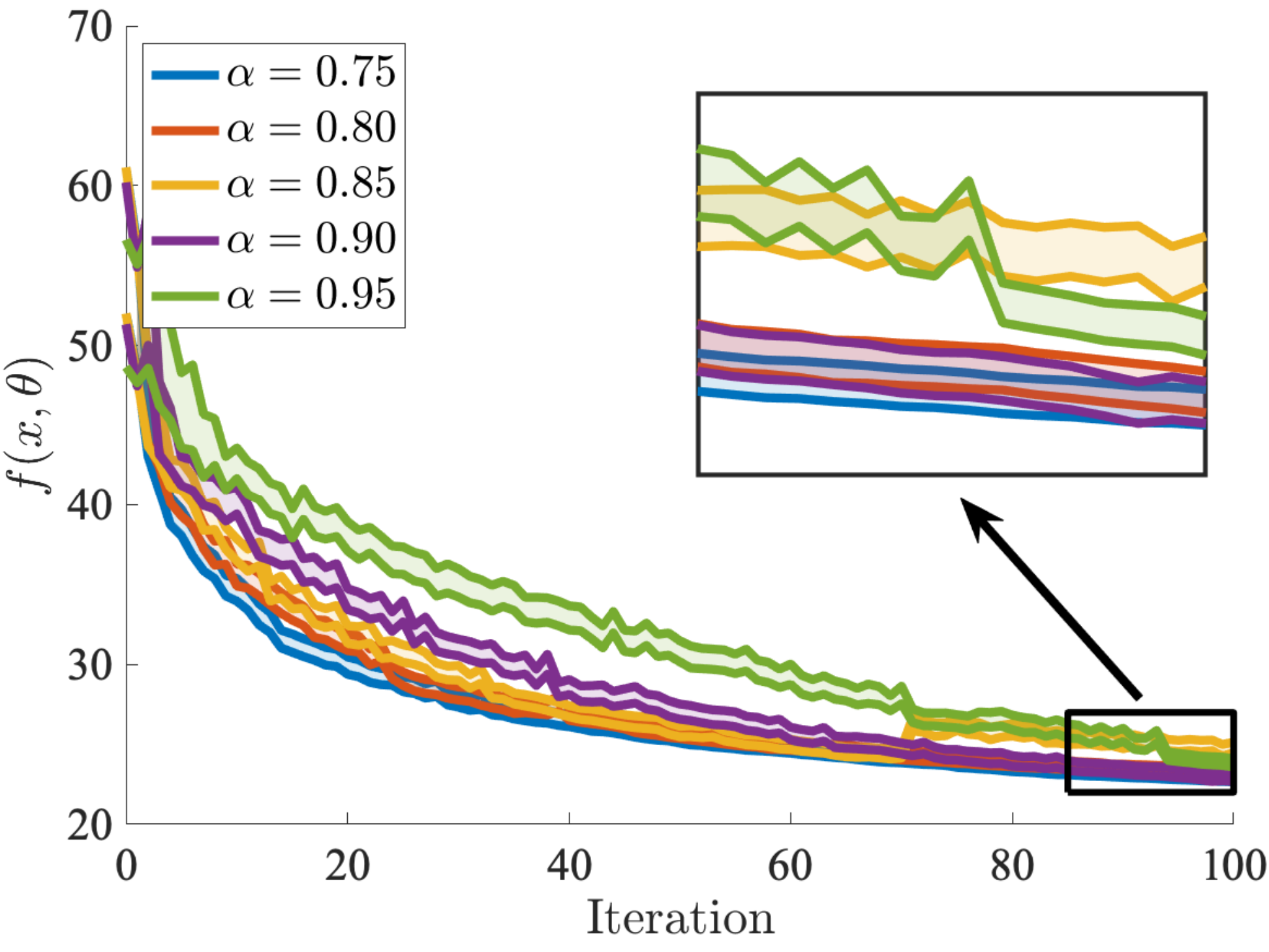}}%
	\end{minipage}\begin{minipage}{0.5\columnwidth}
		\centering
		\subfigure[Diversity vs. iteration]{\includegraphics[width=1.0\textwidth]{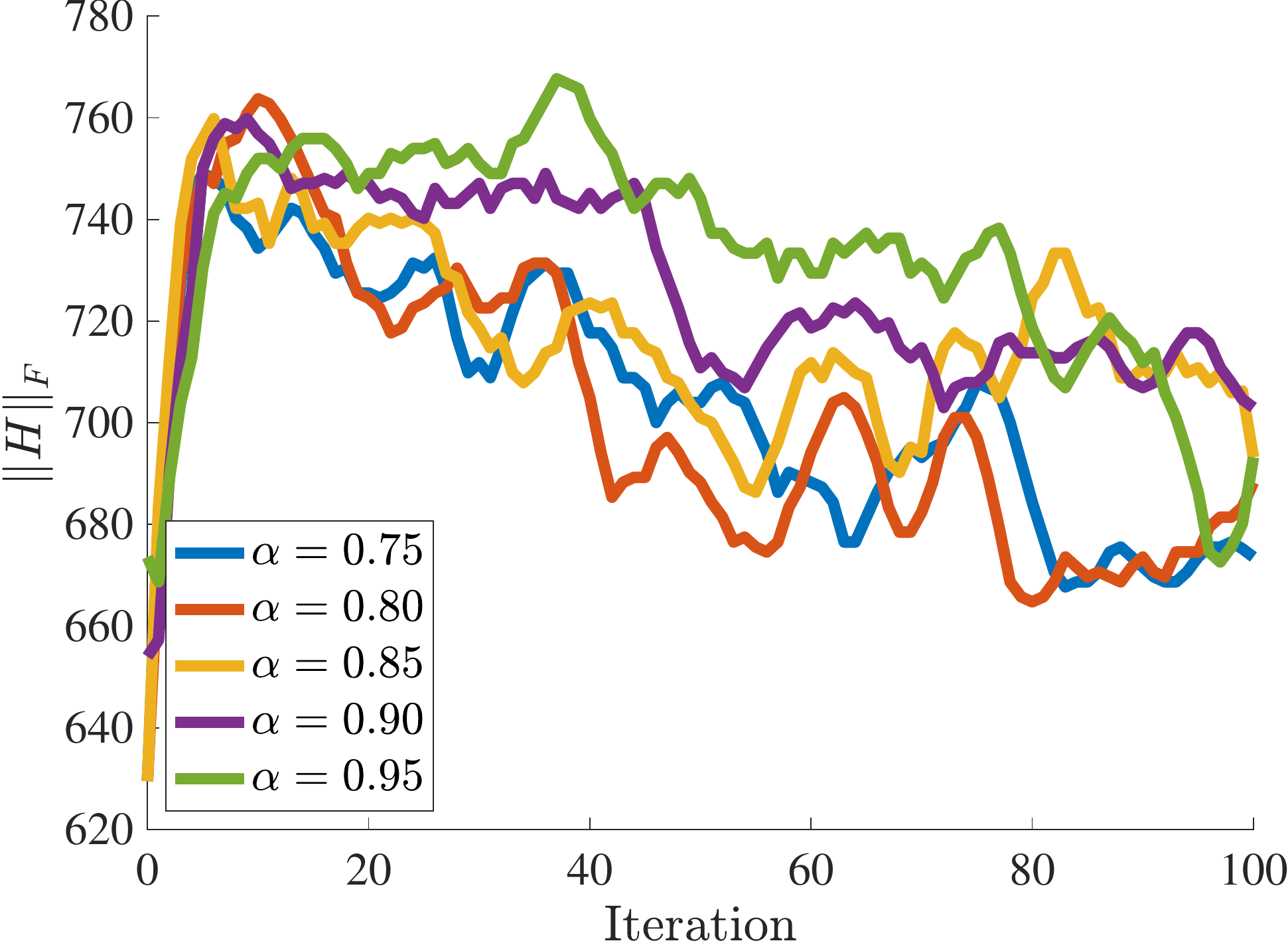}}%
	\end{minipage}
	\caption[]{\label{fig:temper} Hyperparameter selection for \textsc{AAdaPT}. (a) 95\%-confidence intervals for $f(x,\theta)$ in the coldest temperature level. (b) Frobenius norm of the Mahalanobis distance matrix $H$ \eqref{eq:maha_matrix}. The value $\alpha=0.9$ achieves the best performance and diversity.}
\end{figure}
\else
\begin{figure}[!!!t]
	\centering
	\includegraphics[width=0.85\columnwidth]{figs/car_component_bigfont_small.jpg}
	\vspace{-15pt}
	\caption{Components of the 1/10-scale vehicle}
	\label{fig:carmain}
	\vskip -10pt
\end{figure}
\fi

\subsection{Offline population synthesis}
We run \textsc{AAdaPT} with $L=5$ populations, $D=160$ configurations per population, and $T=100$ iterations. For vertical MCMC steps, we randomly sample $16$ configuratons per population and perform $V=2$ iterations of 5 hit-and-run proposals. Furthermore, we perform $E=DL^2/\alpha^{t/(L-1)}$ horizontal steps (motivated by the fact fact that ``tunneling'' from the highest-temperature level to the coldest takes $O(L^2)$ accepted steps). Finally, for training $\theta$, we use Adam \cite{kingma2014adam} with a learning rate of $10^{-4}$.

Figure \ref{fig:temper} shows results with 5 choices for the most influential hyperparameter, the annealing rate: $\alpha\in \{0.75, 0.80, 0.85, 0.90, 0.95\}$. Figure \ref{fig:temper}(a) displays $95\%$-confidence intervals for the mean laptime in the coldest level. The annealing rates $\alpha \in \{0.75, 0.80, 0.90\}$ all result in comparable performance of $22.95 \pm 0.14$ (mean $\pm$ standard error) seconds at the end of the two-lap run. Figure \ref{fig:temper}(b) illustrates a metric for measuring diversity, the Frobenius norm of the Mahalanobis distance matrix \eqref{eq:maha_matrix}. We see that $\alpha=0.9$ results in the highest diversity while also attaining the best performance. Thus, in further experimentation, we use the results from the run conducted with $\alpha=0.9$.

Figure~\ref{fig:multimodal} illustrates qualitative differences between cost functions. Figure \ref{fig:multimodal}(a) displays trajectories for agents driven using 5 cost functions sampled from the learned DPP. The cornering behavior is quite different between the trajectories. Figure \ref{fig:multimodal}(b) displays the trajectories chosen by all 160 agents in the population at $\beta_1(T)$ at various snapshots along the track. There is a wider spread of behavior near turns than areas where the car simply drives straight.

\ifarxiv
\else
\begin{figure}[!!t]
	\begin{minipage}{0.49\columnwidth}
		\centering
		\subfigure[Performance vs. iteration]{\includegraphics[width=1.0\textwidth]{./figs/tempering}}%
	\end{minipage}
	\centering
	\begin{minipage}{0.49\columnwidth}
		\centering
		\subfigure[Diversity vs. iteration]{\includegraphics[width=1.0\textwidth]{./figs/mahal}}%
	\end{minipage}
	\vspace{-10pt}
	\caption[]{\label{fig:temper} Hyperparameter selection for \textsc{AAdaPT}. (a) 95\%-confidence intervals for $f(x,\theta)$ in the coldest temperature level. (b) Frobenius norm of the Mahalanobis distance matrix $H$ \eqref{eq:maha_matrix}. The value $\alpha=0.9$ achieves the best performance and diversity.}
	\vspace{-5pt}
\end{figure}
\fi

\begin{figure}[!!t]
	\begin{minipage}{0.67\columnwidth}
		\centering
		\subfigure[Rollouts from 5 agents]{\includegraphics[width=1.0\textwidth]{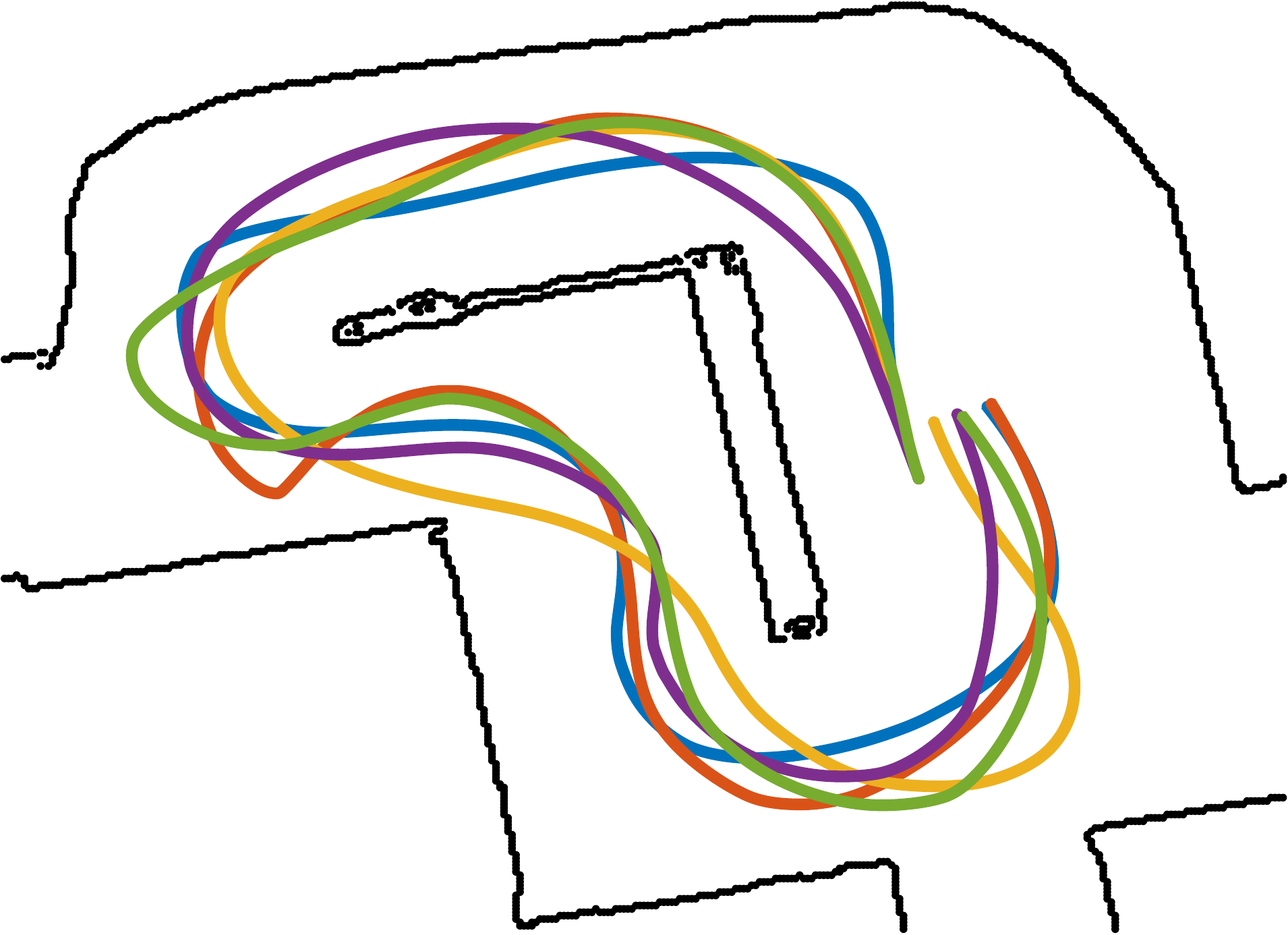}}%
	\end{minipage}
	\centering
	\begin{minipage}{0.67\columnwidth}
		\centering
		\subfigure[Snapshot trajectories]{\includegraphics[width=1.0\textwidth]{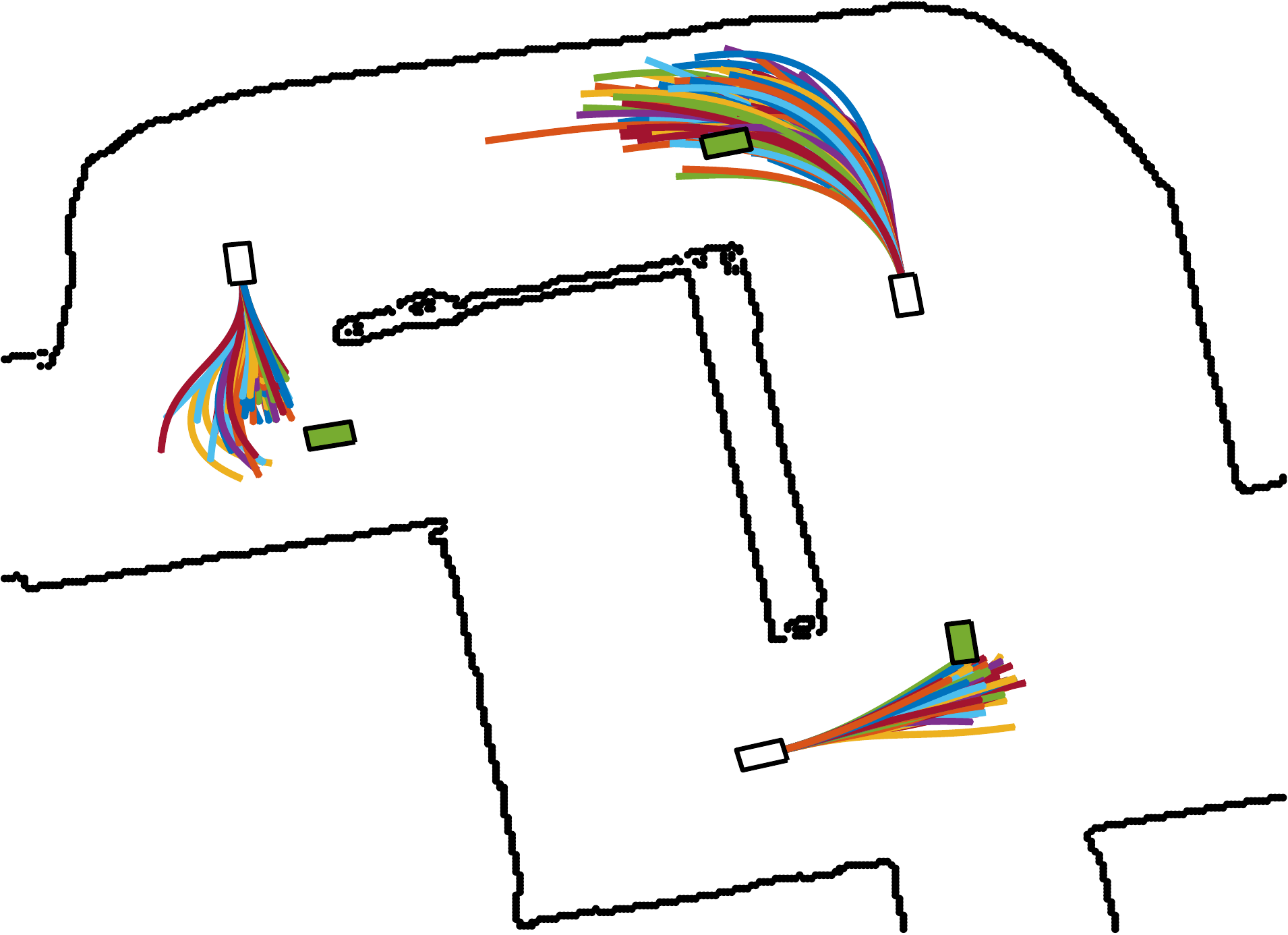}}%
	\end{minipage}
	\vspace{-10pt}
	\caption[]{\label{fig:multimodal} Qualitative illustrations of multimodal behavior in the learned population of cost functions}
	\ifarxiv
	\else
	\vspace{-10pt}
	\fi
\end{figure}

\subsection{Simulated experiments}
\ifarxiv
We conduct a series of tests in simulation to determine the effects of distributional robustness and adaptivity on overall safety and performance. For a given robustness level $\rho/N_w \in \{0.001, 0.025, 0.2,\\ 0.4, 0.75, 1.0\}$ (with $N_w=8$ for all experiments), we simulate 40 two-lap races against each of the $d=10$ diverse opponents sampled from the DPP. For fair comparisons, half of the races have the opponent starting on the outside and the other half with the opponent on the inside of the track. Importantly, these experiments involve only the most elite policies from the temperature level $\beta_1(T)$. Since the physical characteristics of the vehicles are identical, win rates between elite policies significantly greater than $0.5$ are meaningful. In contrast, against a set of weaker opponents sampled via DPP from the 3\textsuperscript{rd} temperature level $\beta_3(T)$, the win-rate (fraction of races that our agent from the coldest temperature wins) is $0.848 \pm 0.012$.
\else
We conduct a series of tests in simulation to determine the effects of distributional robustness and adaptivity on overall safety and performance. For a given robustness level $\rho/N_w \in \{0.001, 0.025, 0.2, 0.4, 0.75, 1.0\}$ (with $N_w=8$ for all experiments), we simulate 40 two-lap races against each of the $d=10$ diverse opponents sampled from the DPP. For fair comparisons, half of the races have the opponent starting on the outside and the other half with the opponent on the inside of the track. Importantly, these experiments involve only the most elite policies from the temperature level $\beta_1(T)$. Since the physical characteristics of the vehicles are identical, win rates between elite policies significantly greater than $0.5$ are meaningful. In contrast, against a set of weaker opponents sampled via DPP from the 3\textsuperscript{rd} temperature level $\beta_3(T)$, the win-rate (fraction of races that our agent from the coldest temperature wins) is $0.848 \pm 0.012$.
\vspace{-5pt}
\fi

\paragraph{Effects of distributional robustness} We test the hypothesis that distributional robustness results in more conservative policies. For every race both agents have a fixed robustness level $\rho$ and no adaptivity. To measure aggressiveness/conservativeness, we consider instantaneous time-to-collision (iTTC) of the vehicles during the race (see Appendix \ref{sec:experiments_appendix}). Smaller iTTC values imply more dangerous scenarios and more aggressive policies. In Table \ref{table:robust}, we track the rate at which iTTC $< 0.5$ seconds. As expected, aggressiveness decreases with robustness (the rate of small iTTC values decreases as $\rho$ increases). The trend is $a+b\log(\rho)$, where $a=5.16\pm0.34$ and $b=-0.36\pm0.10$ ($R^2=0.75$).
\ifarxiv
\else
\vspace{-15pt}
\fi

\begin{table}
	\caption{The effect of distributional robustness on aggressiveness }
	\label{table:robust}
	\begin{center}
		\begin{small}
				\begin{tabular}{lc}
					\toprule
					Agent    & \% of iTTC values $< 0.5$s \\
					\midrule
                    $\rho/N_w=0.001$ & 7.86$\pm$ 0.90 \\
					  $\rho/N_w=0.025$ & 6.46$\pm$ 0.78  \\
					  $\rho/N_w=0.2$   & 4.75$\pm$ 0.65 \\
					  $\rho/N_w=0.4$  &  5.41$\pm$ 0.74   \\
					  $\rho/N_w=0.75$  & 5.50$\pm$ 0.82 \\
					  $\rho/N_w=1.0$   & 5.76$\pm$ 0.84  \\
					\bottomrule
				\end{tabular}
		\end{small}
	\end{center}
	\ifarxiv
	\else
	\vspace{-17pt}
	\fi
\end{table}
\begin{table}
	\caption{The effect of adaptivity on win-rate}
	\label{table:adaptivity}
	\begin{center}
		\begin{small}
				\begin{tabular}{lccc}
					\toprule
					& Win-rate & Win-rate\\
					Agent     & Non-adaptive     & Adaptive     & p-value \\
					\midrule
					$\rho/N_w=0.001$ & 0.593$\pm$ 0.025 & 0.588$\pm$ 0.025 &  0.84 \\
					$\rho/N_w=0.025$ & 0.593$\pm$ 0.025 & 0.600$\pm$ 0.024 &  0.77 \\
					$\rho/N_w=0.2$   & 0.538$\pm$ 0.025 & 0.588$\pm$ 0.025 &  0.045\\
					$\rho/N_w=0.4$  & 0.503$\pm$ 0.025 & 0.573$\pm$ 0.025 &  0.0098\\
					$\rho/N_w=0.75$  & 0.513$\pm$ 0.025 & 0.593$\pm$ 0.025 &  0.0013\\
					$\rho/N_w=1.0$   & 0.498$\pm$ 0.025 & 0.590$\pm$ 0.025 &  0.00024\\
					\bottomrule
				\end{tabular}
		\end{small}
	\end{center}
	\vskip -0.2in
\end{table}

\paragraph{Effects of adaptivity} Now we investigate the effects of online learning on the outcomes of races. Figure \ref{fig:regret}(a) shows that Algorithm~\ref{alg:exp3} identifies the opponent vehicle within approximately 150 timesteps (15 seconds), as illustrated by the settling of the regret curve.\footnote{We omit 3 of the regret lines for clarity in the plot.} Given evidence that the opponent model can be identified, we investigate whether adaptivity improves performance, as measured by win-rate. Table~\ref{table:adaptivity} displays results of paired t-tests for multiple robustness levels (with a null-hypothesis that adaptivity does not change the win-rate). Each test compares the effect of adaptivity for our agent on the 400 paired trials (and the opponents are always nonadaptive). Adaptivity significantly improves performance for the larger robustness levels $\rho/N_w \ge 0.2$. As hypothesized above, adaptivity automatically increases aggressiveness as the agent learns about its opponent and samples fewer of the other arms to compute the empirical robust cost \eqref{eq:real-problem}. This effect is more prominent when robustness levels are greater, where adaptivity brings the win-rate back to its level without robustness ($\rho/N_w=0.001$). Thus, the agent successfully balances safety and performance by combining distributional robustness with adaptivity.

\begin{figure}[!!t]
	\begin{minipage}{0.49\columnwidth}
		\centering
		\subfigure[Simulation]{\includegraphics[width=1.0\textwidth]{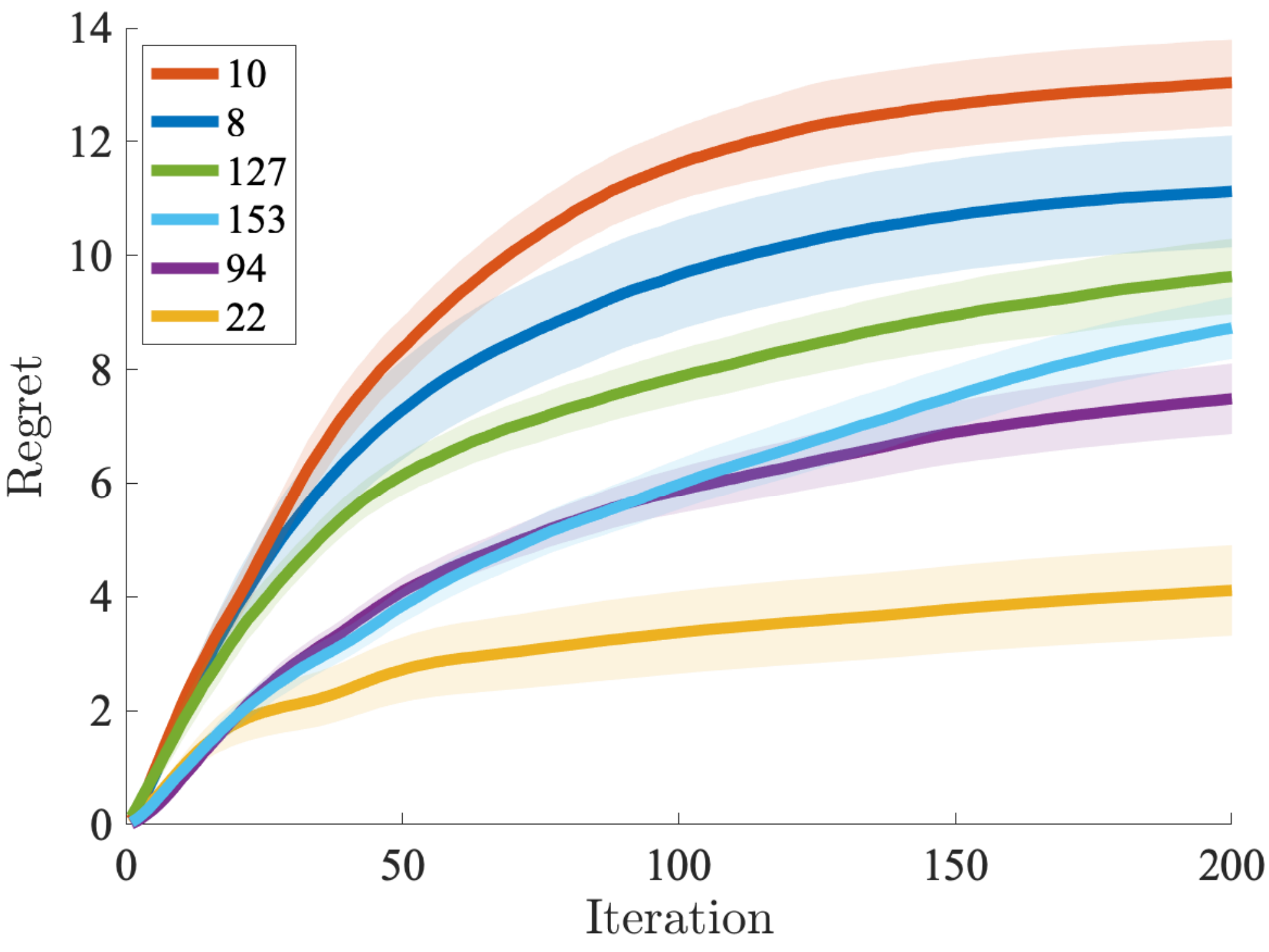}}%
	\end{minipage}
	\centering
	\begin{minipage}{0.49\columnwidth}
		\centering
		\subfigure[Real]{\includegraphics[width=1.0\textwidth]{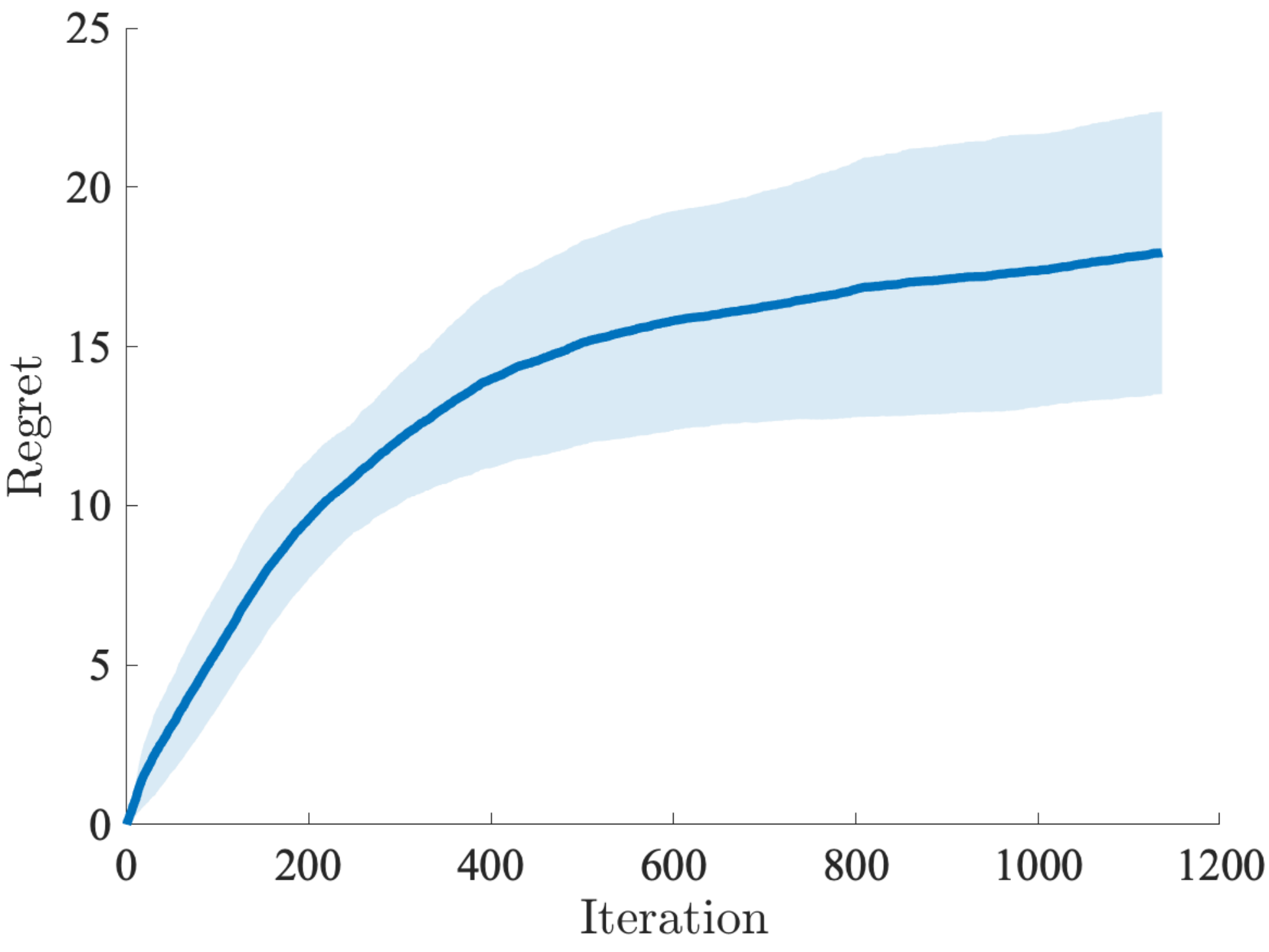}}%
	\end{minipage}
	\ifarxiv
	\else
	\fi
	\caption[]{\label{fig:regret} 95\%-confidence intervals for regret using $N_w=8$ arms in (a) simulation and (b) reality. The legend in (a) denotes opponent id and the opponent in (b) has id 22. Our agent has id 33.}
	\ifarxiv
	\else
	\vspace{-10pt}
	\fi
\end{figure}

\subsection{Real-world validation}
The real world experiments consist of races between agents 22 and 33; we examine the transfer of the opponent modeling approach from simulation to reality. In Figure~\ref{fig:regret}(b) we plot 33's cumulative regret; it takes roughly 4 times as many observations relative to simulation-based experiments to identify the opponent (agent 22). We demonstrate the qualitative properties of the experiments in a video of real rollouts synchronized with corresponding simulations.\footnote{\url{https://youtu.be/7Yat9FZzE4g}}  State estimation error and measurement noise drive the gap between simulated and real performance. First, both vehicle poses are estimated with a particle filter, whereas simulation uses ground-truth states. Since we infer beliefs about an opponent's policy based on a prediction of their actions at a given state, pose estimation error negatively impacts the accuracy of this inference. Second, the simulator only captures the geometry of the track; in reality glass and metal surfaces significantly affect the LIDAR range measurements, which in turn impact the MAF and IAF networks. The convergence of the cumulative regret in Figure~\ref{fig:regret}(b) reflects that, despite the simulation/reality gap, our simulation-trained approach transfers to the real world. Diminishing the effect of the simulation/reality gap is the subject of future work (see Appendix \ref{sec:simulator}).

\subsection{Approximation analysis}
Sampling $N_w$ indices $J_k \simiid P_0(t)$ allows us to quickly compute the approximate robust cost (Section \ref{sec:approx_cost}) and perform a bandit-style update to the ambiguity set (Section \ref{sec:bandit}). Now we analyze the time-accuracy tradeoff of performing this sampling approximation rather than using all $d$ prototypical opponents at every time step. Figure \ref{fig:approx}(a) shows the difference in regret for the same experiments as in Figure \ref{fig:regret}(a) if we perform full online mirror-descent updates. Denoting the simulations in Figure \ref{fig:regret}(a) as $S$ and those with the full mirror descent update as $M$, we compute difference as $\text{Regret}_S - \text{Regret}_M$. As expected, the difference is positive, since receiving the true gradient is better than the noisy estimate~\eqref{eq:exp3grad}. Similarly, Figure \ref{fig:approx}(b) shows the percent increase in cumulative planning time for the same pairs (sampling vs. full online mirror descent), where percent increase is given by $100(\text{Time}_M - \text{Time}_S)/\text{Time}_S$. As the agent learns who the opponent is, it draws many repeats in the $N_w$ arms, whereas the full mirror descent update always performs $d$ computations. As a result, the percent increase in cumulative iteration time approaches a contant of approximately $1.5\times$. All of these comparisons are done in simulation, where the agent is not constrained to perform actions in under 100 milliseconds. Performing a full mirror descent update is impossible on the real car, as it requires too much time. 

\begin{figure}[!!t]
	\begin{minipage}{0.49\columnwidth}
		\centering
		\subfigure[Difference in regret]{\includegraphics[width=1.0\textwidth]{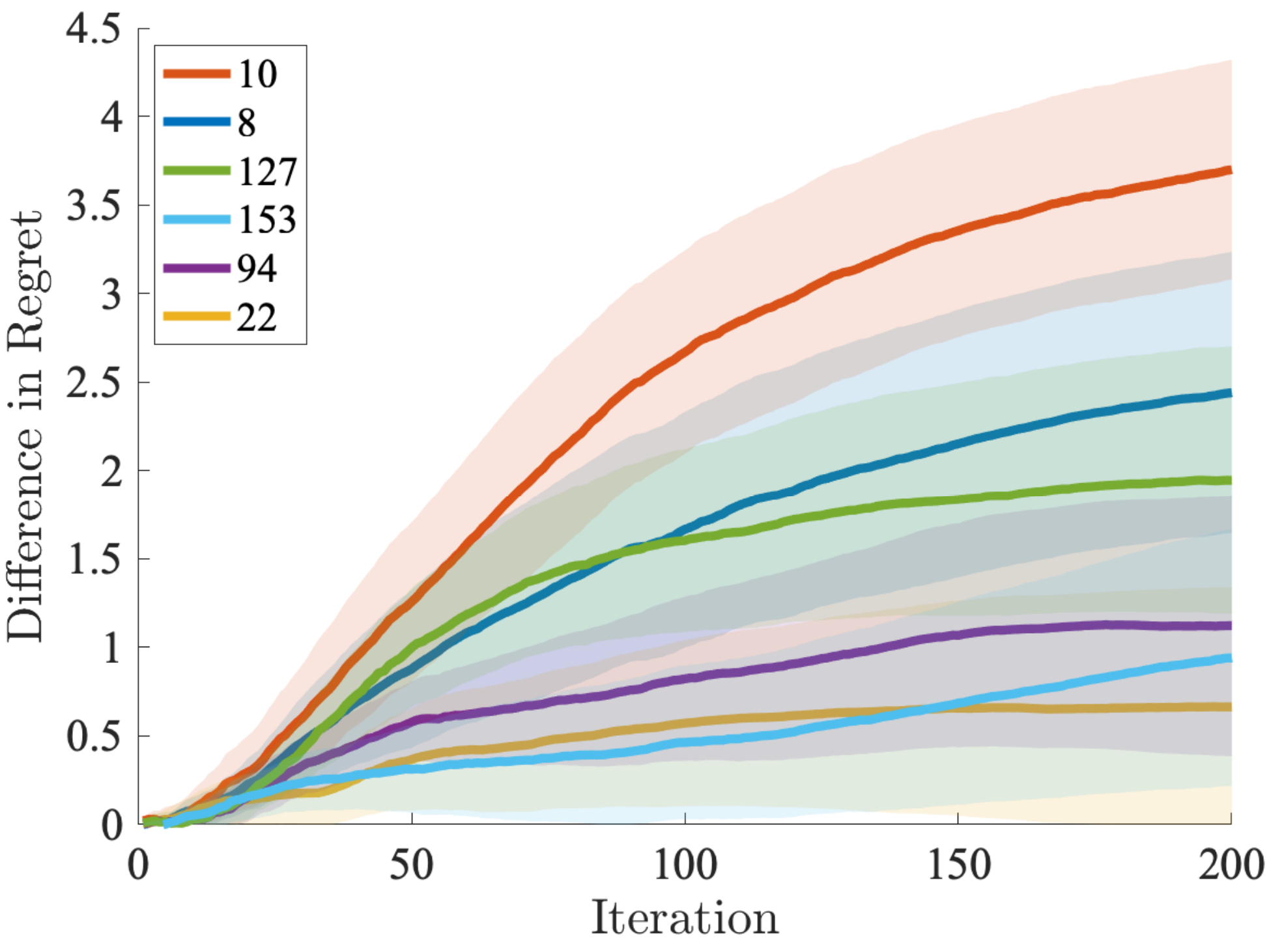}}%
	\end{minipage}
	\centering
	\begin{minipage}{0.49\columnwidth}
		\centering
		\subfigure[Difference in planning time]{\includegraphics[width=1.0\textwidth]{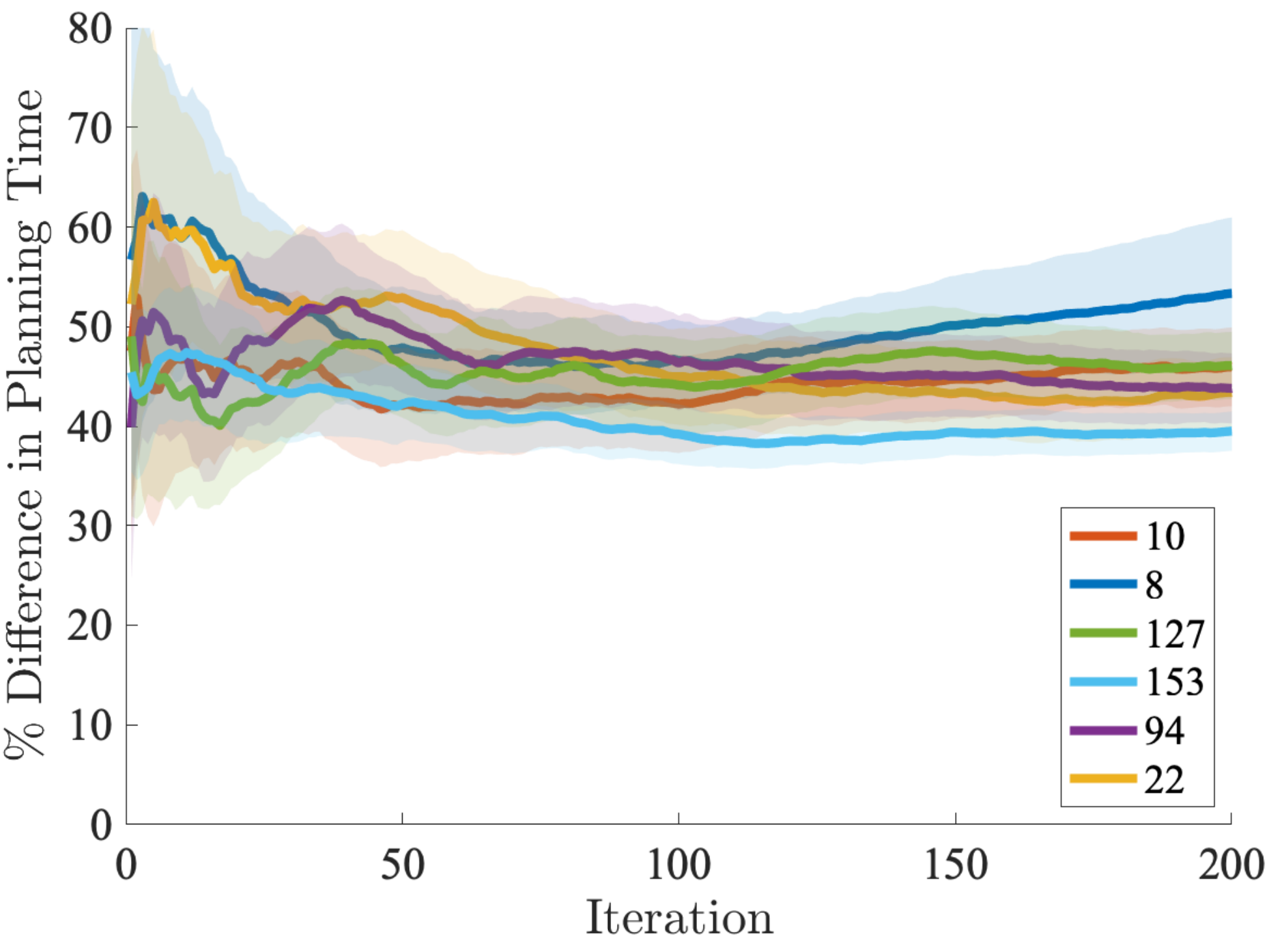}}%
	\end{minipage}
	\ifarxiv
	\else
	\fi
	\caption[]{\label{fig:approx} 95\%-confidence intervals for the (a) difference in regret and (b) percent difference in cumulative planning time when using sampling approximations vs. online mirror descent. Online mirror descent yields lower regret at the expense of longer planning times.}
	\ifarxiv
	\else
	\vspace{-10pt}
	\fi
\end{figure}

\subsection{Out-of-distribution opponents}
Now we measure performance against two agents---\textsc{OOD1} and \textsc{OOD2}---that are not in the distribution developed by our offline population synthesis approach (see Appendix \ref{sec:oodagent} for details on each agent's policy). We perform only simulated experiments, as we are unable to perform further real-world experimentation at the time of writing due to the COVID-19 pandemic. For given robustness levels $\rho / N_w \in \{0.001, 1.0\}$ and $N_w=8$ for all experiments, we perform 180 two-lap races against each of the two human-created racing agents. Again, for fair comparison, half of the experiments have the opponent start on the outside and half on the inside. Tables \ref{table:against_agent1} and \ref{table:against_agent2} show the results. Overall, the trends match those of the in-distribution opponents. Namely, adaptivity significantly increases the win-rate when robustness is high ($\rho/N_w=1.0$), whereas for low robustness ($\rho/N_w=0.001$) there is no significant change. Interestingly, adaptivity with robustness not only recovers but surpasses the win-rate of the non-adaptive non-robust policy. We hypothesize that, because out-of-distribution opponents do not match any of the learned prototypes, maintaining an uncertainty over belief automatically helps the agent plan against the ``surprising'' out-of-distribution actions. Validation of this hypothesis by comparing performance against more out-of-distribution opponents is an interesting direction for future work. Overall, we observe that even against out-of-distribution opponents, we achieve the overall goal of balancing performance and safety.

\begin{table}[t]
	\caption{The effect of adaptivity on win-rate vs. \textsc{OOD1}}
	\label{table:against_agent1}
	\begin{center}
		\begin{small}
			\begin{tabular}{lccc}
				\toprule
				& Win-rate & Win-rate & \\
				Agent     & Non-adaptive & Adaptive & p-value\\
				\midrule
				$\rho/N_w=0.001$ & 0.633$\pm$0.036 & 0.683$\pm$0.035 & 0.280\\
				$\rho/N_w=1.0$   & 0.483$\pm$0.037 & 0.717$\pm$0.034 & 5.721E-6\\
				\bottomrule
			\end{tabular}
		\end{small}
	\end{center}
	\vskip -0.2in
\end{table}
\begin{table}[t]
	\caption{The effect of adaptivity on win-rate vs. \textsc{OOD2}}
	\label{table:against_agent2}
	\begin{center}
		\begin{small}
			\begin{tabular}{lccc}
				\toprule
				& Win-rate & Win-rate & \\
				Agent     & Non-adaptive & Adaptive & p-value\\
				\midrule
				$\rho/N_w=0.001$ & 0.494$\pm$0.037 & 0.589$\pm$0.037 & 0.059\\
				$\rho/N_w=1.0$   & 0.572$\pm$0.037 & 0.739$\pm$0.033 & 0.001\\
				\bottomrule
			\end{tabular}
		\end{small}
	\end{center}
\end{table}
\section{Conclusion}
\label{sec:conclusion}
The central hypothesis of this paper is that distributionally robust evaluation of plans relative to the agent's belief state about opponents, which is updated as new observations are made, can lead to policies achieving the same performance as non-robust approaches without sacrificing safety.
To evaluate this hypothesis we identify a natural division of the underlying problem. First, we parameterize the set of possible opponents via population-based synthesis without requiring expert demonstrations. Second, we propose an online opponent-modeling framework which enables the application of distributionally robust optimization (DRO) techniques under computational constraints. 
We provide strong empirical evidence that distributional robustness combined with adaptivity enables a principled method automatically trading between safety and performance. Also, we demonstrate the transfer of our methods from simulation to real autonomous racecars. The addition of recursive feasibility arguments for stronger safety guarantees could improve the applicability of these techniques to real-world settings. Furthermore, although autonomous racing is the current focus of our experiments, future work should explore the generality of our approach in other settings such as human-robot interaction. 

\setlength{\bibsep}{3pt}
\bibliographystyle{abbrvnat}

\newpage

\appendix

\section{Offline population synthesis}
\label{sec:ops}
Here we provide extra details for Section \ref{sec:pbt}.

\paragraph*{Horizontal steps}
Horizontal steps occur as follows. Two random particles are sampled uniformly at random from adjacent temeprature levels. This forms a proposal for the swap, which is then accepted via standard MH acceptance conditions. Because the rest of the particles remain as-is, the acceptance condition reduces to a particualrly simple form (\cf~Algorithm~\ref{alg:ha}).

\ifarxiv
\begin{algorithm}[h]
	\caption{\textsc{Horizontal swap}}
	\label{alg:ha}
	\begin{algorithmic}
		\State Sample $i \sim \text{Uniform}(1,2,\ldots,L-1)$.
		\State Sample $j,k \simiid \text{Uniform}(1,2,\ldots,D)$.
		\State Sample $p \sim \text{Uniform}([0,1])$
		\State Let $a = \min \left (1, e^{ f(x^{i,j}, \theta^{i,j}) - f(x^{i+1,k}, \theta^{i+1,k})  } \right )$
		\State {\bfseries if} $p < a^{\beta_{i} - \beta_{i+1}}$
		\State $\;\;\;$ swap configurations $(x^{i,j}, \theta^{i,j})$ and $(x^{i+1,k}, \theta^{i+1,k})$
	\end{algorithmic}
\end{algorithm}
\else
\begin{algorithm}[h]
	\caption{\textsc{Horizontal swap}}
	\label{alg:ha}
	\begin{algorithmic}
		\STATE Sample $i \sim \text{Uniform}(1,2,\ldots,L-1)$.
		\STATE Sample $j,k \simiid \text{Uniform}(1,2,\ldots,D)$.
		\STATE Sample $p \sim \text{Uniform}([0,1])$
		\STATE Let $a = \min \left (1, e^{ f(x^{i,j}, \theta^{i,j}) - f(x^{i+1,k}, \theta^{i+1,k})  } \right )$
		\STATE {\bfseries if} $p < a^{\beta_{i} - \beta_{i+1}}$
		\STATE $\;\;\;$ swap configurations $(x^{i,j}, \theta^{i,j})$ and $(x^{i+1,k}, \theta^{i+1,k})$
	\end{algorithmic}
\end{algorithm}
\fi

We ran our experiments on a server with 88 Intel Xeon cores @ 2.20 GHz. Each run of 100 iterations for a given hyperparameter setting $\alpha$ took 20 hours. %
\section{Online robust planning}
\label{sec:online_appendix}

Here we provide extra details for Section \ref{sec:bandits}.

\subsection{Solving problem \eqref{eq:real-problem}}
We can rewrite the constraint $\fdiv{q}{\mathbf{1}/N_w} \le \tol$ as $\|q-\mathbf{1}/N_w\|^2 \le \rho/N_w$. Then, the partial Lagrangian can be written as
\begin{equation*}
  \mc{L}(q, \lambda) = \sum_i q_i c_i(t)
  - \frac{\lambda}{2}\left (  \|q-\mathbf{1}/N_w\|^2 - \tol/N_w \right ).
\end{equation*}
By inspection of the right-hand side, we see that, for a given $\lambda$, finding $v(\lambda)=\sup_{q \in \Delta}\mc{L}({q, \lambda})$ is equivalent to a Euclidean-norm projection of the vector $\mathbf{1}/N_w + c(t)/\lambda$ onto the probability simplex $\Delta$. This latter problem is directly amenable to the methods of \citet{Duchi_2008}.

\subsection{Proof of Proposition \ref{prop:approx}}
We redefine notation to suppress dependence of the cost $C$ on other variables and just make explicit the dependence on the random index $J$. Namely, we let $C: \mc{J} \to [-1, 1]$ be a function of the random index $J$. We consider the convergence of
\begin{equation*}
  \sup_{Q \in \mc{P}_{N_w}}\E_Q[C(J)] ~~ \mbox{to} ~~ \sup_{Q \in \mc{P}}
  \E_Q[C(J)].
\end{equation*}
To ease notation, we hide dependence on $J$ and
for a sample $J_1, \ldots, J_{N_w}$ of random vectors $J_k$, we denote $C_k:= C(J_k)$ for shorthand, so that the $C_k$ are bounded independent random variables. Our proof technique is similar in style to that of \citet{sinha2016learning}. We provide proofs for technical lemmas that follow in support of Proposition \ref{prop:approx} that are shorter and more suitable for our setting (in particular Lemmas \ref{lemma:E-sup-lower-bound} and \ref{lemma:sqrt-moments}).

Treating $C = (C_1, \ldots, C_{N_w})$ as a vector, the mapping
$C \mapsto \sup_{Q \in \mc{P}_{N_w}} \E_Q[C]$ is a $\sqrt{\tol+1} / \sqrt{N_w}$-Lipschitz convex function
of independent bounded random variables. Indeed, letting $q \in \R^{N_w}_+$ be
the empirical probability mass function associated with $Q \in \mc{P}_{N_w}$, we have
$\frac{1}{N_w} \sum_{i = 1}^{N_w} (N_w q_i)^2 \le \tol + 1$ or $\ltwo{q} \le \sqrt{(1+\tol)/N_w}$.
Using Samson's sub-Gaussian concentration
inequality~\cite{Samson00}
for Lipschitz convex functions of bounded random
variables, we have with probability at least $1-\delta$ that
\begin{equation}
  \sup_{Q \in \mc{P}_{N_w}}
  \E_Q[C]
  \in \E\left[\sup_{Q \in \mc{P}_{N_w}} \E_Q[C]\right]
  \pm 2\sqrt{2} \sqrt{\frac{(1 + \tol)\log \frac{2}{\delta}}{N_w}}.
  \label{eqn:high-prob-to-expect}
\end{equation}
By the containment~\eqref{eqn:high-prob-to-expect}, we only need to consider convergence of 
\begin{equation*}
  \E\left[\sup_{Q \in  \mc{P}_{N_w}} \E_Q[C]\right]
  ~~ \mbox{to} ~~
  \sup_{Q \in \mc{P}} \E_Q[C],
\end{equation*}
which we do with the following lemma.
\begin{lemma}[\citet{sinha2016learning}]\label{lemma:E-sup-lower-bound}  
Let $Z = (Z_1, \ldots, Z_{N_w})$ be a random vector of independent random
  variables $Z_i \simiid P_0$, where $|Z_i| \le M$ with probability $1$. Let
  $C_\tol = \frac{2(\tol + 1)}{\sqrt{1 + \tol} - 1}$. Then
  \begin{align*}
    \E\left[\sup_{Q \in \mc{P}_{N_w} }\E_Q[Z]\right]
    \ge \sup_{Q \in \mc{P}} \E_Q[Z] - 4C_{\tol}M\sqrt \frac{\log (2N_w)}{N_w}
  \end{align*}
  and
  \begin{equation*}
    \E\left[\sup_{Q \in \mc{P}_{N_w}} \E_Q[Z]\right]
    \le \sup_{Q \in \mc{P}} \E_Q[Z].
  \end{equation*}
\end{lemma}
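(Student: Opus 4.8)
The plan is to base everything on the convex-duality (variational) representation of the worst-case $\chi^2$ expectation. Introducing a multiplier $\eta$ for the normalization $\E_Q[1]=1$ and $\lambda$ for the moment constraint $\E_{P_0}[(dQ/dP_0)^2]\le 1+\tol$, maximizing pointwise over the likelihood ratio (which gives ratio $\propto (Z-\eta)_+$) and then optimizing $\lambda$, a short calculation yields, for any bounded $Z$,
\[
\sup_{Q\in\mc{P}}\E_Q[Z]=\inf_{\eta\in\R}\Big\{\eta+\sqrt{1+\tol}\,\sqrt{\E_{P_0}[(Z-\eta)_+^2]}\Big\}.
\]
Since $\mc{P}_{N_w}$ is exactly the same $\chi^2$-ball taken around the uniform distribution on the sample, the identical computation gives the empirical analogue $\sup_{Q\in\mc{P}_{N_w}}\E_Q[Z]=\inf_\eta M_{N_w}(\eta)$ with $M_{N_w}(\eta):=\eta+\sqrt{1+\tol}\,\sqrt{a_n(\eta)}$, $a_n(\eta):=\frac1{N_w}\sum_k (Z_k-\eta)_+^2$, and population counterpart $M(\eta):=\eta+\sqrt{1+\tol}\,\sqrt{a(\eta)}$, $a(\eta):=\E[(Z-\eta)_+^2]$. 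I would first establish these two identities, since the whole lemma reduces to comparing the envelopes $M$ and $M_{N_w}$.

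The upper bound is then immediate. For each fixed $\eta$, concavity of the square root gives $\E[M_{N_w}(\eta)]=\eta+\sqrt{1+\tol}\,\E\sqrt{a_n(\eta)}\le \eta+\sqrt{1+\tol}\,\sqrt{\E a_n(\eta)}=M(\eta)$, using $\E a_n(\eta)=a(\eta)$. Combining with $\E\inf_\eta(\cdot)\le\inf_\eta\E(\cdot)$ yields $\E\big[\sup_{Q\in\mc{P}_{N_w}}\E_Q[Z]\big]\le\inf_\eta M(\eta)=\sup_{Q\in\mc{P}}\E_Q[Z]$.

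For the lower bound I would prove $M_{N_w}(\eta)\ge V-\mathrm{err}$ uniformly in $\eta$ (hence at the empirical minimizer), where $V:=\sup_{Q\in\mc{P}}\E_Q[Z]\in[-M,M]$. Two elementary facts drive it: $M_{N_w}(\eta)\ge\eta$ always, so the claim is trivial once $\eta\ge V-\epsilon_0$; and for $\eta<V$ suboptimality of $\eta$ in the population envelope gives the quadratic lower bound $a(\eta)\ge (V-\eta)^2/(1+\tol)$. For $\eta<V-\epsilon_0$ I would write $M(\eta)-M_{N_w}(\eta)=\sqrt{1+\tol}\,(\sqrt{a(\eta)}-\sqrt{a_n(\eta)})\le\sqrt{1+\tol}\,(a(\eta)-a_n(\eta))_+/\sqrt{a(\eta)}$ and control $a(\eta)-a_n(\eta)$ by a \emph{localized} (Bernstein) deviation: since $(Z-\eta)_+^2\le 4M^2$, we have $\mathrm{Var}((Z-\eta)_+^2)\le\E[(Z-\eta)_+^4]\le 4M^2 a(\eta)$, so the leading deviation term is of order $M\sqrt{a(\eta)\,\log(\cdot)/N_w}$ and the factor $\sqrt{a(\eta)}$ \emph{cancels} the $1/\sqrt{a(\eta)}$, leaving a clean $N_w^{-1/2}$ contribution; the residual $O\!\big(M^2\log(\cdot)/(N_w\sqrt{a(\eta)})\big)$ term is tamed via $\sqrt{a(\eta)}\ge (V-\eta)/\sqrt{1+\tol}\ge\epsilon_0/\sqrt{1+\tol}$. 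Choosing the split $\epsilon_0\asymp M\sqrt{\log(2N_w)/N_w}$ balances the two regimes, both at rate $N_w^{-1/2}$.

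To make the estimate uniform in $\eta$ I would restrict to a bounded interval $[-R,M]$ — outside it $M_{N_w}(\eta)$ already exceeds $V$, because $M_{N_w}(\eta)\gtrsim(\sqrt{1+\tol}-1)\lvert\eta\rvert$ as $\eta\to-\infty$, which is exactly where the $(\sqrt{1+\tol}-1)^{-1}$ in $C_\tol$ enters — cover it by $O(N_w)$ points (producing the $\log(2N_w)$ after a union bound), and use that $\eta\mapsto a_n(\eta),a(\eta)$ are $O(M)$-Lipschitz to pass from the grid to all $\eta$ with negligible discretization error; integrating the resulting high-probability bound (the quantities are bounded by $\sqrt{1+\tol}\,M$) converts it to the stated expectation bound $4C_\tol M\sqrt{\log(2N_w)/N_w}$. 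The main obstacle is precisely the rate: the naive estimate $\lvert\sqrt{a}-\sqrt{a_n}\rvert\le\sqrt{\lvert a-a_n\rvert}$ would only deliver $N_w^{-1/4}$, so the crux is the self-bounding/localized concentration (variance proportional to $a(\eta)$) combined with the quadratic lower bound on $a(\eta)$, which together recover the $N_w^{-1/2}$ rate and the constant $C_\tol$.
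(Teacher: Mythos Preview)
Your dual representation and the upper-bound argument (Jensen plus $\E\inf\le\inf\E$) coincide with the paper's. The lower bound, however, goes by a genuinely different route.

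The paper decomposes
\[
\E\Big[\inf_{\eta\in U} S_{N_w}(\eta)\Big]\ \ge\ \inf_{\eta\in U}\E[S_{N_w}(\eta)]\ -\ \E\Big[\sup_{\eta\in U}\big|S_{N_w}(\eta)-\E S_{N_w}(\eta)\big|\Big],
\]
handles the first term by a moment inequality (their Lemma~\ref{lemma:sqrt-moments}, which gives $\E[(n^{-1}\sum Z_i^p)^{1/p}]\ge\|Z\|_p-C\sqrt{2/n}$ for bounded $Z$), and handles the uniform fluctuation by Samson's sub-Gaussian concentration for Lipschitz convex functions of bounded independent variables, combined with a covering of $U$. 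You instead attack the pointwise gap $M(\eta)-M_{N_w}(\eta)$ directly via $\sqrt{a}-\sqrt{a_n}\le(a-a_n)_+/\sqrt{a}$, use Bernstein with the self-bounding variance $\mathrm{Var}((Z-\eta)_+^2)\lesssim M^2 a(\eta)$ so that the $\sqrt{a(\eta)}$ cancels, and tame the residual $O(M^2/(N_w\sqrt{a(\eta)}))$ through the suboptimality bound $\sqrt{a(\eta)}\ge(V-\eta)/\sqrt{1+\tol}$. Both proofs localize $\eta$ to essentially the same interval (this is where $(\sqrt{1+\tol}-1)^{-1}$ enters in either argument) and cover it with $O(N_w)$ points.

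What each approach buys: yours is more elementary---it uses only standard Bernstein and an algebraic identity, and the self-bounding/localization mechanism gives a transparent explanation of why the rate is $N_w^{-1/2}$ rather than the naive $N_w^{-1/4}$. The paper's decomposition cleanly separates a ``bias'' piece ($\inf\E$ versus the population value) from a uniform-fluctuation piece, invokes heavier machinery (Samson's inequality and the auxiliary moment lemma), and is written so as to generalize immediately to $\phi(t)=t^k-1$ for all $k\ge 2$.

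One small slip to fix when you write it out: your bound $(Z-\eta)_+^2\le 4M^2$ holds only for $\eta\ge -M$, whereas the interval you actually need is $[-R,M]$ with $R\asymp C_\tol M>M$; replace $4M^2$ by $(M+R)^2$. This is harmless for the rate and the $\tol$-dependence but will shift the numerical constant, so recovering exactly $4C_\tol$ may require a bit more bookkeeping than you indicate.
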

\noindent See Appendix \ref{sec:first-tech-lemma} for the proof.

Combining Lemma~\ref{lemma:E-sup-lower-bound} with
containment~\eqref{eqn:high-prob-to-expect} gives the result.

\subsection{Proof of Lemma \ref{lemma:E-sup-lower-bound}}\label{sec:first-tech-lemma}
Before beginning the proof, we first state a technical lemma.
\begin{lemma}[\citet{Ben-TalHeWaMeRe13}]
  \label{lemma:duality}
  Let $\phi$ be any closed convex function with domain $\dom \phi \subset
  \openright{0}{\infty}$, and let
  $\phi^*(s) = \sup_{t \ge 0} \{ts - \phi(t)\}$ be its conjugate. Then
  for any distribution $P$ and any function $g : \mc{W} \to \R$ we have
\ifarxiv
  \begin{equation*}
    \sup_{Q : \fdiv{Q}{P} \le \tol}
    \int g(w) dQ(w) = \inf_{\lambda \ge 0, \eta}
    \left\{\lambda \int \phi^*\left(\frac{g(w) - \eta}{\lambda}\right) dP(w)
    + \tol \lambda + \eta \right\}.
  \end{equation*}
\else
  \begin{multline*}
    \sup_{Q : \fdiv{Q}{P} \le \tol}
    \int g(w) dQ(w) \\
    = \inf_{\lambda \ge 0, \eta}
    \left\{\lambda \int \phi^*\left(\frac{g(w) - \eta}{\lambda}\right) dP(w)
    + \tol \lambda + \eta \right\}.
  \end{multline*}
\fi
\end{lemma}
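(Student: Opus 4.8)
The plan is to establish this as an instance of Lagrangian duality for an infinite-dimensional convex program, with strong duality following from Slater's condition. First I would reparametrize the supremum by the likelihood ratio $L = dQ/dP$: since finiteness of $\fdiv{Q}{P}$ forces $Q \ll P$, the feasible set $\{Q : \fdiv{Q}{P} \le \tol\}$ is in bijection with $\{L \ge 0 : \int L \, dP = 1, \ \int \phi(L)\, dP \le \tol\}$, and the objective becomes $\int g L \, dP$. This recasts the problem as maximizing a linear functional of $L$ subject to one linear equality (normalization) and one convex inequality (the divergence bound).

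Next I would form the Lagrangian, attaching a multiplier $\lambda \ge 0$ to the constraint $\int \phi(L)\, dP \le \tol$ and $\eta \in \R$ to the normalization $\int L \, dP = 1$, namely
\[
  \mc{L}(L, \lambda, \eta) = \int (g - \eta)\, L \, dP - \lambda \int \phi(L)\, dP + \tol \lambda + \eta.
\]
The dual function is the supremum over $L \ge 0$, which decouples pointwise: for each $w$ one maximizes $(g(w) - \eta)\, t - \lambda \phi(t)$ over $t \ge 0$. For $\lambda > 0$ this equals $\lambda\, \phi^*\!\left((g(w) - \eta)/\lambda\right)$ directly from the definition $\phi^*(s) = \sup_{t \ge 0}\{ts - \phi(t)\}$, and integrating recovers exactly the dual objective in the statement. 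Weak duality (the dual value upper-bounds the primal) is then immediate.

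The crux is upgrading weak duality to equality. Here I would invoke Slater's condition: because $\phi(1) = 0$, the choice $Q = P$ (i.e.\ $L \equiv 1$) is strictly feasible whenever $\tol > 0$, since $\int \phi(1)\, dP = 0 < \tol$. Combined with convexity of the program, a standard infinite-dimensional strong-duality theorem (e.g.\ the Lagrangian duality of \citet{Luenberger69}) yields equality of the primal and dual optimal values, which is the claim.

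The main obstacle I anticipate is the measure-theoretic bookkeeping of the infinite-dimensional setting: justifying that the supremum over $L$ may be pulled inside the integral (an interchange-of-sup-and-integral / measurable-selection argument) and selecting a function space in which the constraint functionals are continuous and the Slater point lies in the relevant interior. These technical points are precisely what is handled in \citet{Ben-TalHeWaMeRe13}, whose result we cite; the structural computation above is otherwise routine.
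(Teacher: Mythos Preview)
Your proposal is correct and follows essentially the same route as the paper: reparametrize by the likelihood ratio $L = dQ/dP$, form the Lagrangian with multipliers $\lambda \ge 0$ and $\eta$, invoke strong duality via Slater's condition (using $L \equiv 1$ as the strictly feasible point and citing \citet{Luenberger69}), and then compute the inner supremum pointwise to obtain $\lambda\,\phi^*\!\left((g(w)-\eta)/\lambda\right)$. The paper's proof is exactly this argument, and the technical issue you flag (pulling the supremum inside the integral) is handled in the paper by the same observation that $L$ is an arbitrary nonnegative function.
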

\noindent See Appendix \ref{sec:tech-lemma-pf} for the proof.

We prove the result for general $\phi$-divergences $\phi(t)=t^k-1$, $k\ge 2$. To simplify algebra, we work with a scaled version of the $\phi$-divergence:
$\phi(t) = \tfrac{1}{k}(t^k-1)$, so the scaled population and empirical constraint sets we consider are defined by
\ifarxiv
\else
\begin{footnotesize}
\fi
\begin{equation*}
  \mc{P} = \left\{Q : \fdiv{Q}{P_0} \le \frac{\tol}{k} \right\}
  ~~ \mbox{and} ~~
  \mc{P}_{N_w}:=\left \{q: \fdiv{q}{\mathbf{1}/{N_w}} \le \frac{\tol}{k}\right \}.
\end{equation*}
\ifarxiv
\else
\end{footnotesize}
\fi
Then by Lemma~\ref{lemma:duality}, we obtain
\ifarxiv
\else
\begin{scriptsize}
\fi
\begin{align*}
   \E\left[\sup_{Q \in \mc{P}_{N_w}} \E_Q[Z]\right] &= \E_{P_0}\left[\inf_{\lambda \ge 0, \eta} \frac{1}{N_w} \sum_{i = 1}^{N_w} \lambda \phi^*\left(\frac{Z_i - \eta}{\lambda} \right) + \eta + \frac{\tol}{k} \lambda\right]\\
  &\le \inf_{\lambda \ge 0, \eta} \E_{P_0}\left[\frac{1}{N_w} \sum_{i = 1}^{N_w} \lambda \phi^*\left(\frac{Z_i - \eta}{\lambda}\right)+ \eta + \frac{\tol}{k} \lambda \right] \\
  &= \inf_{\lambda \ge 0, \eta}\left\{\E_{P_0}\left[\lambda \phi^*\left(\frac{Z - \eta}{\lambda}\right)\right]+ \frac{\tol}{k} \lambda + \eta\right\}\\
  &= \sup_{Q \in \mc{P}} \E_Q[Z].
\end{align*}
\ifarxiv
\else
\end{scriptsize}\fi This proves the upper bound in Lemma \ref{lemma:E-sup-lower-bound}.

Now we focus on the lower bound. For the function $\phi(t) = \frac{1}{k} (t^k  - 1)$, we have
$\phi^*(s) = \frac{1}{k^*} \hinge{s}^{k^*} + \frac{1}{k}$, where $1/k^* + 1/k=1$, so that
the duality result in Lemma~\ref{lemma:duality} gives
\begin{small}
\begin{equation*}
  \sup_{Q \in \mc{P}_{N_w}}
  \E_Q[Z]
  = \inf_\eta \left\{ \left ({1 + \tol}\right)^{1/k}
  \bigg(\frac{1}{N_w} \sum_{i=1}^{N_w} \hinge{Z_i - \eta}^{k^*}\bigg)^\frac{1}{k^*}
  + \eta \right\}.
\end{equation*}
\end{small}Because $|Z_i| \le M$ for all $i$, we claim that any $\eta$ minimizing the
preceding expression must satisfy
\begin{equation}
  \label{eqn:eta-range}
  \eta \in \left[-\frac{1 + (1 + \tol)^{\frac{1}{k^*}}}{({1 + \tol})^{\frac{1}{k^*}} - 1},
    1\right] \cdot M.
\end{equation}
For convenience, we first define the shorthand
\begin{equation*}
  S_{N_w}(\eta)
  \defeq  \left ({1 + \tol}\right)^{1/k}
  \bigg(\frac{1}{N_w} \sum_{i=1}^{N_w} \hinge{Z_i - \eta}^{k^*}\bigg)^\frac{1}{k^*}
  + \eta.
\end{equation*}
Then it is clear that $\eta \le M$, because otherwise we would have
$S_{N_w}(\eta) > M \ge \inf_\eta S_{N_w}(\eta)$.
Let the lower bound be of the form $\eta = -c M$ for some $c > 1$. Taking derivatives
of the objective $S_{N_w}(\eta)$ with respect to $\eta$, we have
\begin{align*}
  S'_{N_w}(\eta)
  &= 1 - ({1 + \tol})^{1/k}
  \frac{\frac{1}{N_w} \sum_{i = 1}^{N_w} \hinge{Z_i - \eta}^{k^*-1}}{
    \left ({\frac{1}{N_w} \sum_{i=1}^{N_w} \hinge{Z_i - \eta}^{k^*}}\right)^{1-\frac{1}{k^*}}}\\
  &\le 1 - ({1 + \tol})^{1/k} \left ( \frac{(c - 1) M}{(c + 1) M}\right ) ^{k^*-1}\\
  &= 1 - ({1 + \tol})^{1/k} \left (\frac{c - 1}{c + 1}\right )^{k^*-1}.
\end{align*}
For any $c > c_{\tol, k}\defeq \frac{(1 + \tol)^\frac{1}{k^*} + 1}{(1 +
  \tol)^\frac{1}{k^*} - 1}$, the preceding display is negative, so
we must have $\eta \ge -c_{\rho,k} M$. For the remainder of the proof, we thus define the interval
\begin{equation*}
  U \defeq \left[-M c_{\tol, k}, M\right],
  ~~
  c_{\tol,k} = \frac{(1 + \tol)^\frac{1}{k^*} + 1}{(1 + \tol)^\frac{1}{k^*} - 1},
\end{equation*}
and we assume w.l.o.g.\ that $\eta \in U$.

Again applying the duality result of Lemma~\ref{lemma:duality},
we have that
\ifarxiv
\begin{align}\label{eqn:inf-sup-snw}
\nonumber\E\left[\sup_{Q \in \mc{P}_{N_w}}\E_Q[Z]\right] &= \E\left[\inf_{\eta \in U} S_{N_w}(\eta\right]\\
\nonumber &= \E\left[\inf_{\eta \in U}\{ S_{N_w}(\eta) - \E[S_{N_w}(\eta)] + \E[S_{N_w}(\eta)]\}\right] \\
& \ge \inf_{\eta \in U} \E[S_{N_w}(\eta)]-\E\left[\sup_{\eta \in U} |S_{N_w}(\eta) - \E[S_{N_w}(\eta)]|
    \right].
\end{align}
\else
\begin{small}
\begin{align}\label{eqn:inf-sup-snw}
\nonumber\E\left[\sup_{Q \in \mc{P}_{N_w}}\E_Q[Z]\right] &= \E\left[\inf_{\eta \in U} S_{N_w}(\eta\right]\\
\nonumber &= \E\left[\inf_{\eta \in U}\{ S_{N_w}(\eta) - \E[S_{N_w}(\eta)] + \E[S_{N_w}(\eta)]\}\right] \\
\nonumber &\ge \inf_{\eta \in U} \E[S_{N_w}(\eta)]\\
   & -\E\left[\sup_{\eta \in U} |S_{N_w}(\eta) - \E[S_{N_w}(\eta)]|
    \right].
\end{align}
\end{small}\fi To bound the first term in expression~\eqref{eqn:inf-sup-snw}, we use the following lemma.
\begin{lemma}[\citet{sinha2016learning}]
  \label{lemma:sqrt-moments}
 Let $Z\ge 0, Z \not \equiv 0$ be a random variable with finite $2p$-th
  moment for $1\le p \le 2$. Then we have the following inequality:
  \begin{subequations}
    \begin{equation}
        \E\left[\bigg(\frac{1}{n} \sum_{i=1}^n Z_i^p\bigg)^\frac{1}{p} \right] \ge \|Z\|_p - \frac{p-1}{p}\sqrt\frac{2}{n} \sqrt{\var(Z^p / \E[Z^p])}\|Z\|_2,
    \end{equation}
    and if $\linf{Z} \le C$, then
    \begin{equation}
\E\left[\bigg(\frac{1}{n} \sum_{i=1}^n Z_i^p\bigg)^\frac{1}{p} \right] \ge \|Z\|_p - C\frac{p-1}{p}\sqrt\frac{2}{n}.
\end{equation}
  \end{subequations}
\end{lemma}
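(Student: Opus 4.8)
The plan is to control the Jensen gap $\|Z\|_p-\E[(\tfrac1n\sum_iZ_i^p)^{1/p}]$ while scrupulously avoiding the singularity of $a\mapsto a^{1/p}$ (and of its derivative $a^{1/p-1}$) at $a=0$, which is exactly what dooms a naive second-order expansion. Write $A:=\tfrac1n\sum_{i=1}^nZ_i^p$ and $\mu:=\E[Z^p]=\|Z\|_p^p$, so $\E[A]=\mu$ and the object of interest is $\E[A^{1/p}]$. The key first move is to localize the error to the event $\{A>\mu\}$ using the \emph{chord through the origin} rather than the tangent line. Since $h(a)=a^{1/p}$ is concave with $h(0)=0$, the map $a\mapsto h(a)/a$ is nonincreasing, so the linear function $\ell(a):=\mu^{1/p-1}a$ agrees with $h$ at $a=\mu$, lies below $h$ on $[0,\mu]$, and lies above $h$ for $a>\mu$. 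Because $\ell$ is linear, $\E[\ell(A)]=\mu^{1/p-1}\E[A]=\mu^{1/p}$, and hence
\[
\|Z\|_p-\E[A^{1/p}]=\E\big[\ell(A)-A^{1/p}\big]\le\E\big[(\ell(A)-A^{1/p})\indic{A>\mu}\big]=:E,
\]
the inequality holding since $\ell(A)-A^{1/p}\le0$ on $\{A\le\mu\}$. This is the crux: only $A>\mu$ contributes, so $A\approx0$ never enters and no negative moments are needed.

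Next I would reduce $E$ to moments of $\tau:=A/\mu$. Factoring gives $\ell(A)-A^{1/p}=\mu^{1/p}(\tau-\tau^{1/p})$, and for $\tau\ge1$ I write $\tau-\tau^{1/p}=\tau^{1/p}(\tau^{(p-1)/p}-1)$; concavity of $s\mapsto s^{(p-1)/p}$ (its tangent at $s=1$) yields $\tau^{(p-1)/p}-1\le\tfrac{p-1}{p}(\tau-1)$. Therefore $E\le\tfrac{p-1}{p}\mu^{1/p}\,\E[\tau^{1/p}(\tau-1)_+]$, and Cauchy--Schwarz gives $\E[\tau^{1/p}(\tau-1)_+]\le\sqrt{\E[\tau^{2/p}]}\,\sqrt{\E[(\tau-1)_+^2]}$.

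It then remains to estimate the two factors. I would bound $\E[(\tau-1)_+^2]\le\E[(\tau-1)^2]=\var(\tau)=\tfrac1n\var(Z^p/\E[Z^p])$. For the other factor, convexity of $x\mapsto x^{2/p}$ (valid since $p\le2$) gives $A^{2/p}=(\tfrac1n\sum_iZ_i^p)^{2/p}\le\tfrac1n\sum_iZ_i^2$, whence $\E[\tau^{2/p}]=\mu^{-2/p}\E[A^{2/p}]\le\|Z\|_2^2/\|Z\|_p^2$. Combining everything and using $\mu^{1/p}=\|Z\|_p$ collapses the $\|Z\|_p$ factors and produces
\[
\|Z\|_p-\E[A^{1/p}]\le\tfrac{p-1}{p}\,\|Z\|_2\,\sqrt{\tfrac1n\var(Z^p/\E[Z^p])},
\]
which is statement (a) (this route yields leading constant $1$, so the stated $\sqrt2$ holds with room to spare, presumably absorbing a slightly looser intermediate step). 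For the bounded case (b) I would either insert $\tau^{1/p}=A^{1/p}/\|Z\|_p\le C/\|Z\|_p$ directly into the Cauchy--Schwarz step, or deduce it from (a) via $\|Z\|_2^2\,\var(Z^p/\E[Z^p])\le C^2$, which follows from $\E[Z^2]\le C^{2-p}\E[Z^p]$ and $\E[Z^{2p}]\le C^p\E[Z^p]$ together with $\var(Z^p)\le\E[Z^{2p}]$.

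The main obstacle is conceptual rather than computational: a direct Taylor/covariance attack forces one to control $\var(A^{1/p-1})$ or negative moments of $A$, which are genuinely infinite whenever $\P(Z=0)>0$, so the expected $O(1/n)$ curvature term simply does not exist globally. Recognizing that the chord through the origin (not the tangent at $\mu$) confines the Jensen gap to $\{A>\mu\}$, where $\tau^{1/p}$ behaves well, is what makes the whole argument go through; once that is in place, the two moment estimates are routine convexity and Hölder bookkeeping, and the $\tfrac1{\sqrt n}$ rate (rather than $\tfrac1n$) is the honest price of a globally valid bound.
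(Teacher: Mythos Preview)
Your proof is correct but follows a genuinely different route from the paper's. The paper introduces the variational identity $a=\inf_{\lambda\ge 0}\{a^p/(p\lambda^{p-1})+\lambda(p-1)/p\}$ and exploits convexity in $\lambda$: setting $\lambda_n=A^{1/p}$ (the minimizer) and applying the tangent-line inequality at the deterministic point $\lambda=\|Z\|_p$ gives $A^{1/p}\ge \|Z\|_p+\tfrac{p-1}{p}(1-A/\mu)(A^{1/p}-\|Z\|_p)$, after which Cauchy--Schwarz and the crude bound $(A^{1/p}-\|Z\|_p)^2\le A^{2/p}+\|Z\|_p^2$ produce the stated inequality with the extra $\sqrt{2}$. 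Your argument instead works directly with the concave map $a\mapsto a^{1/p}$, using the secant through the origin to localize the Jensen gap to $\{A>\mu\}$ before applying Cauchy--Schwarz. Both devices sidestep the singularity at $A=0$ that you flag---the paper by always evaluating the auxiliary function at $\lambda=\|Z\|_p>0$, you by restricting to $\tau>1$---but your route is more elementary (no variational detour) and sharper, yielding leading constant $1$ rather than $\sqrt{2}$; the $\sqrt{2}$ in the paper comes precisely from the $(a-b)^2\le a^2+b^2$ step, which your bound $\E[\tau^{2/p}]\le\|Z\|_2^2/\|Z\|_p^2$ avoids. Your derivation of part (b) from part (a) via $\E[Z^2]\le C^{2-p}\E[Z^p]$ and $\E[Z^{2p}]\le C^p\E[Z^p]$ is exactly what the paper does (the paper packages this through an auxiliary H\"older lemma).
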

\noindent See Appendix \ref{sec:last-tech-lemma-pf} for the proof. Now, note that
$\hinge{Z - \eta} \in [0, 1+c_{\tol, k}] M$ and
$({1 + \tol})^{1/k} (1+c_{\tol, k}) =: C_{\tol, k}$. Thus, by Lemma
\ref{lemma:sqrt-moments} we obtain that
\ifarxiv
\begin{align*}
  \E[S_{N_w}(\eta)]
  &\ge ({1 + \tol})^{1/k} \E\left[\hinge{Z - \eta}^{k^*}\right]^{1/k^*} + \eta
  - C_{\rho,k} M \frac{k^*-1}{k^*}\sqrt\frac{2}{N_w}.
\end{align*}
\else
\begin{align*}
  \E[S_{N_w}(\eta)]
  &\ge ({1 + \tol})^{1/k} \E\left[\hinge{Z - \eta}^{k^*}\right]^{1/k^*}\\
  &+ \eta
  - C_{\rho,k} M \frac{k^*-1}{k^*}\sqrt\frac{2}{N_w}.
\end{align*}
\fi
Using that $\frac{k^* - 1}{k^*} = \frac{1}{k}$, taking the infimum over
$\eta$ on the right hand side and using duality yields
\begin{equation*}
  \inf_\eta \E[S_{N_w}(\eta)]
  \ge \sup_{Q \in \mc P}
  \E_Q[Z]
  - C_{\rho,k} \frac{M}{k} \sqrt\frac{2}{N_w}.
\end{equation*}

To bound the second term in expression~\eqref{eqn:inf-sup-snw}, we use concentration results
for Lipschitz functions. First, the function $\eta \mapsto S_{N_w}(\eta)$ is $\sqrt{1 + \tol}$-Lipschitz 
in $\eta$. To see this, note that for $1\le k^\star\le 2$ and $X\ge 0$, by Jensen's inequality,
  \begin{equation*}
  \frac{\E [X^{k^\star-1}]}{(\E[X^{k^\star}])^{1-1/k^\star}} \le \frac{\E [X]^{k^\star-1}}{(\E[X^{k^\star}])^{1-1/k^\star}} \le \frac{\E [X]^{k^\star-1}}{\E[X]^{k^\star-1}}=1,
  \end{equation*}
  so $S_{N_w}'(\eta) \in [1-(1+\tol)^{\frac{1}{k}},1]$ and therefore $S_{N_w}$ is $(1 + \tol)^{1/k}$-Lipschitz in $\eta$.
Furthermore, the mapping $T: z \mapsto (1 + \tol)^\frac{1}{k} (\frac{1}{N_w}
\sum_{i = 1}^{N_w} \hinge{z_i - \eta}^{k^*})^\frac{1}{k^*}$ for $z \in \R^{N_w}$ is convex and $(1 + \tol)^\frac{1}{k} / \sqrt{N_w}$-Lipschitz. This is verified by the following:
\ifarxiv
\else
\begin{scriptsize}
\fi
\begin{align*}
\left \lvert T(z) - T(z') \right \rvert & \le \left ({1 + \tol}\right)^{1/k} \left \lvert  \bigg( \frac{1}{N_w}\sum_{i=1}^{N_w} \left |\hinge{z_i - \eta} - \hinge{z'_i - \eta}\right |^{k^*}\bigg)^\frac{1}{k^*} \right \rvert \\
& \le \frac{\left ({1 + \tol}\right)^{1/k}}{{N_w}^{1/{k^*}}}\left \lvert \bigg( \sum_{i=1}^{N_w} \left \lvert z_i - z'_i \right \rvert^{k^*}\bigg)^\frac{1}{k^*} \right \rvert \\
& \le \frac{\left ({1 + \tol}\right)^{1/k}}{\sqrt{N_w}}\|z-z'\|_2, 
\end{align*}
\ifarxiv
\else
\end{scriptsize}
\fi
where the first inequality is Minkowski's inequality and the third inequality follows from the fact that for
any vector $x \in \R^n$, we have $\norm{x}_p \le n^\frac{2 - p}{2p} \ltwo{x}$
for $p \in [1, 2]$, where these denote the usual vector norms. Thus, the mapping
$Z \mapsto S_{N_w}(\eta)$ is $(1 + \tol)^{1/k} / \sqrt{N_w}$-Lipschitz continuous
with respect to the $\ell_2$-norm on $Z$.
Using Samson's sub-Gaussian concentration result for
convex Lipschitz functions, we have
\begin{equation*}
  \P\left(|S_{N_w}(\eta) - \E[S_{N_w}(\eta)]|
    \ge \delta \right)
  \le 2 \exp\left(-\frac{N_w \delta^2}{2 C_{\tol, k}^2 M^2}\right)
\end{equation*}
for any fixed $\eta \in \R$ and any $\delta \ge 0$.  Now, let $\mc{N}(U,
\epsilon) = \{\eta_1, \ldots, \eta_{N(U, \epsilon)}\}$ be an $\epsilon$
cover of the set $U$, which we may take to have size at most $N(U,
\epsilon) \le  M(1+c_{\tol, k}) \frac{1}{\epsilon}$. Then we have
\ifarxiv
\begin{align*}
  \sup_{\eta \in U} |S_{N_w}(\eta) - \E[S_{N_w}(\eta)] \le \max_{i \in \mc{N}(U, \epsilon)}
  |S_{N_w}(\eta_i) - \E[S_{N_w}(\eta_i)]|
  + \epsilon (1 + \tol)^{1/k}.
\end{align*}
\else
\begin{align*}
  \sup_{\eta \in U} |S_{N_w}(\eta) - \E[S_{N_w}(\eta)]\\
  \le \max_{i \in \mc{N}(U, \epsilon)}
  |S_{N_w}(\eta_i) - \E[S_{N_w}(\eta_i)]|
  + \epsilon (1 + \tol)^{1/k}.
\end{align*}\fi
Using the fact that 
$\E[\max_{i \le n} |X_i|] \le \sqrt{2 \sigma^2 \log(2n)}$ for
$X_i$ all $\sigma^2$-sub-Gaussian,
we have
\ifarxiv
\begin{align*}
  \E\left[\max_{i \in \mc{N}(U, \epsilon)}
    |S_{N_w}(\eta_i) - \E[S_{N_w}(\eta_i)]|\right] \le C_{\tol, k}\sqrt{2\frac{M^2}{N_w}
    \log 2N(U, \epsilon)}.
\end{align*}
\else
\begin{align*}
  \E\left[\max_{i \in \mc{N}(U, \epsilon)}
    |S_{N_w}(\eta_i) - \E[S_{N_w}(\eta_i)]|\right]&\\
  \le C_{\tol, k}\sqrt{2\frac{M^2}{N_w}
    \log 2N(U, \epsilon)}.
\end{align*}\fi
Taking $\epsilon = M(1+c_{\tol, k}) / {N_w}$ gives that
\ifarxiv
\begin{align*}
  \E\left[\sup_{\eta \in U} |S_{N_w}(\eta) - \E[S_{N_w}(\eta)]
  \right] \le \sqrt{2}MC_{\tol, k}  \sqrt{\frac{1}{N_w} \log (2N_w)} + \frac{C_{\tol, k}M}{N_w}.
\end{align*}
\else
\begin{align*}
  \E\left[\sup_{\eta \in U} |S_{N_w}(\eta) - \E[S_{N_w}(\eta)]
  \right]\\
  \le \sqrt{2}MC_{\tol, k}  \sqrt{\frac{1}{N_w} \log (2N_w)} + \frac{C_{\tol, k}M}{N_w}.\\
\end{align*}\fi
Then, in total we have (using $C_\tol \ge C_{\tol,k}$, $k\ge 2$, and $N_w \ge 1$),
\ifarxiv
  \begin{align*}
    \E\left[\sup_{Q \in \mc{P}_{N_w} }\E_Q[Z]\right]
    &\ge \sup_{Q \in \mc{P}} \E_Q[Z] - \frac{C_{\tol} M \sqrt{2}}{\sqrt{N_w}}
    \left( \frac{1}{k}
    +  \sqrt{\log (2N_w)}
    + \frac{1}{\sqrt{2N_w}}\right)\\
    & \ge \sup_{Q \in \mc{P}} \E_Q[Z] - 4C_{\tol}M\sqrt \frac{\log (2N_w)}{N_w}.
  \end{align*}
\else
  \begin{align*}
    \E\left[\sup_{Q \in \mc{P}_{N_w} }\E_Q[Z]\right]
    &\ge \sup_{Q \in \mc{P}} \E_Q[Z] - \\
    &\frac{C_{\tol} M \sqrt{2}}{\sqrt{N_w}}
    \left( \frac{1}{k}
    +  \sqrt{\log (2N_w)}
    + \frac{1}{\sqrt{2N_w}}\right)\\
    & \ge \sup_{Q \in \mc{P}} \E_Q[Z] - 4C_{\tol}M\sqrt \frac{\log (2N_w)}{N_w}.
  \end{align*} \fi
This gives the desired result of the lemma.

\subsection{Proof of Lemma \ref{lemma:duality}}\label{sec:tech-lemma-pf}
Let $L \ge 0$ satisfy $L(w) = dQ(w) / dP(w)$, so that $L$ is the
likelihood ratio between $Q$ and $P$. Then we have
\ifarxiv
\begin{align*}
  &\lefteqn{\sup_{Q : \fdiv{Q}{P} \le \tol}
    \int g(w) dQ(w)
    = \sup_{\int \phi(L) dP \le \tol,
      \E_P[L] = 1} \int g(w) L(w) dP(w)} \\
 & = \sup_{L \ge 0} \inf_{\lambda \ge 0, \eta}
  \bigg\{\int g(w) L(w) dP(w) - \lambda \left(\int f(L(w)) dP(w) - \tol\right) - \eta \left(\int L(w) dP(w) - 1 \right) \bigg\} \\
 & = \inf_{\lambda \ge 0, \eta}
  \sup_{L \ge 0} \bigg\{\int g(w) L(w) dP(w) - \lambda \left(\int f(L(w))
  dP(w) - \tol\right) - \eta \left(\int L(w) dP(w) - 1 \right) \bigg\},
\end{align*}
\else
\begin{tiny}
\begin{align*}
  &\lefteqn{\sup_{Q : \fdiv{Q}{P} \le \tol}
    \int g(w) dQ(w)
    = \sup_{\int \phi(L) dP \le \tol,
      \E_P[L] = 1} \int g(w) L(w) dP(w)} \\
 & = \sup_{L \ge 0} \inf_{\lambda \ge 0, \eta}
  \bigg\{\int g(w) L(w) dP(w) - \lambda \left(\int f(L(w)) dP(w) - \tol\right)\\
 & - \eta \left(\int L(w) dP(w) - 1 \right) \bigg\} \\
 & = \inf_{\lambda \ge 0, \eta}
  \sup_{L \ge 0} \bigg\{\int g(w) L(w) dP(w) - \lambda \left(\int f(L(w))
  dP(w) - \tol\right)\\
 & - \eta \left(\int L(w) dP(w) - 1 \right) \bigg\},
\end{align*}
\end{tiny}\fi where we have used that strong duality obtains because the problem is
strictly feasible in its non-linear constraints (take $L \equiv 1$), so that
the extended Slater condition holds~\citep[Theorem 8.6.1 and Problem 8.7]{Luenberger69}.
Noting that $L$ is simply a positive (but otherwise arbitrary) function,
we obtain
\begin{align*}
  &\sup_{Q : \fdiv{Q}{P} \le \tol}
    \int g(w) dQ(w)\\
  &= \inf_{\lambda \ge 0, \eta}
  \int \sup_{\ell \ge 0} \left\{(g(w) - \eta) \ell - \lambda \phi(\ell)\right\}
  dP(w) + \lambda \tol + \eta \\
  &= \inf_{\lambda \ge 0, \eta}
  \int \lambda \phi^*\left(\frac{g(w) - \eta}{\lambda} \right) dP(w)
  + \eta + \tol \lambda.
\end{align*}
Here we have used that $\phi^*(s) = \sup_{t \ge 0}\{st - \phi(t)\}$ is the
conjugate of $\phi$ and that $\lambda \ge 0$, so that we may take divide
and multiply by $\lambda$ in the supremum calculation.

\subsection{Proof of Lemma \ref{lemma:sqrt-moments}}\label{sec:last-tech-lemma-pf}
For $a > 0$, we have
\begin{equation*}
  \inf_{\lambda \ge 0}
  \left\{\frac{a^p}{p \lambda^{p-1}} + \lambda\frac{p-1}{p}
  \right\} = a,
\end{equation*}
(with $\lambda = a$ attaining the infimum), and taking derivatives yields
\begin{equation*}
  \frac{a^p}{p \lambda^{p-1}} + \lambda\frac{p-1}{p}
  \ge \frac{a^p}{p \lambda_1^{p-1}} + \lambda_1\frac{p-1}{p}
  + \frac{p-1}{p} \left(1 - \frac{a^p}{\lambda_1^p} \right)
  (\lambda - \lambda_1).
\end{equation*}

Using this in the moment expectation, by setting
$\lambda_n = \sqrt[p]{\frac{1}{n} \sum_{i=1}^n Z_i^p}$, we have
for any $\lambda \ge 0$ that
\ifarxiv
\begin{align*}
  \E\left[\bigg(\frac{1}{n} \sum_{i=1}^n Z_i^p\bigg)^\frac{1}{p} \right]
  & = \E\left[\frac{\sum_{i=1}^n Z_i^p}{p n \lambda_n^{p-1}}
    + \lambda_n\frac{p-1}{p} \right] \\
  & \ge \E\left[\frac{\sum_{i=1}^n Z_i^p}{p n \lambda^{p-1}}
    + \lambda\frac{p-1}{p} \right] + \frac{p-1}{p}\E\left[\left(1 -\frac{\sum_{i=1}^n Z_i^p}{n \lambda^p}\right)
    (\lambda_n - \lambda)\right].
\end{align*}
\else
\begin{align*}
  \E\left[\bigg(\frac{1}{n} \sum_{i=1}^n Z_i^p\bigg)^\frac{1}{p} \right]
  & = \E\left[\frac{\sum_{i=1}^n Z_i^p}{p n \lambda_n^{p-1}}
    + \lambda_n\frac{p-1}{p} \right] \\
  & \ge \E\left[\frac{\sum_{i=1}^n Z_i^p}{p n \lambda^{p-1}}
    + \lambda\frac{p-1}{p} \right]\\
  &+ \frac{p-1}{p}\E\left[\left(1 -\frac{\sum_{i=1}^n Z_i^p}{n \lambda^p}\right)
    (\lambda_n - \lambda)\right].
\end{align*}\fi

Now we take $\lambda = \|Z\|_p$, and we apply the Cauchy-Schwarz
inequality to obtain
\ifarxiv
\begin{align*}
  \E\left[\bigg(\frac{1}{n} \sum_{i=1}^n Z_i^p\bigg)^\frac{1}{p} \right]
  & \ge \|Z\|_p - \frac{p-1}{p}\E\left[\left(1 - \frac{\frac{1}{n} \sum_{i=1}^n Z_i^p}{
     \|Z\|_p^p}\right)^2\right]^\half
  \E\left[\left(\bigg(\frac{1}{n} \sum_{i=1}^n Z_i^p\bigg)^\frac{1}{p}
    - \|Z\|_p \right)^2\right]^\half \nonumber \\
   &= \|Z\|_p - \frac{p-1}{p \sqrt{n}} \sqrt{\var(Z^p / \E[Z^p])}
  \E\left[\left(\bigg(\frac{1}{n} \sum_{i=1}^n Z_i^p\bigg)^\frac{1}{p}
    - \E[Z^p]^\frac{1}{p} \right)^2\right]^\half
  \label{eqn:zp-minus-zp} \\
   & \ge \|Z\|_p - \frac{p-1}{p \sqrt{n}} \sqrt{\var(Z^p / \E[Z^p])}
  \E\left[\bigg(\frac{1}{n} \sum_{i=1}^n Z_i^p\bigg)^\frac{2}{p}
    + \E[Z^p]^\frac{2}{p} \right]^\half \nonumber\\
    & \ge \|Z\|_p - \frac{p-1}{p}\sqrt\frac{2}{n} \sqrt{\var(Z^p / \E[Z^p])}\|Z\|_2,\nonumber
\end{align*}
\else
\begin{scriptsize}
\begin{multline*}
  \E\left[\bigg(\frac{1}{n} \sum_{i=1}^n Z_i^p\bigg)^\frac{1}{p} \right]
  \ge \|Z\|_p\\
  - \frac{p-1}{p}\E\left[\left(1 - \frac{\frac{1}{n} \sum_{i=1}^n Z_i^p}{
     \|Z\|_p^p}\right)^2\right]^\half
  \E\left[\left(\bigg(\frac{1}{n} \sum_{i=1}^n Z_i^p\bigg)^\frac{1}{p}
    - \|Z\|_p \right)^2\right]^\half \nonumber \\
   = \|Z\|_p - \frac{p-1}{p \sqrt{n}} \sqrt{\var(Z^p / \E[Z^p])}
  \E\left[\left(\bigg(\frac{1}{n} \sum_{i=1}^n Z_i^p\bigg)^\frac{1}{p}
    - \E[Z^p]^\frac{1}{p} \right)^2\right]^\half
  \label{eqn:zp-minus-zp} \\
   \ge \|Z\|_p - \frac{p-1}{p \sqrt{n}} \sqrt{\var(Z^p / \E[Z^p])}
  \E\left[\bigg(\frac{1}{n} \sum_{i=1}^n Z_i^p\bigg)^\frac{2}{p}
    + \E[Z^p]^\frac{2}{p} \right]^\half \nonumber\\
    \ge \|Z\|_p - \frac{p-1}{p}\sqrt\frac{2}{n} \sqrt{\var(Z^p / \E[Z^p])}\|Z\|_2,\nonumber
\end{multline*}
\end{scriptsize}\fi where the last inequality follows by the fact that the norm is non-decreasing in
$p$. 

In the case that we have the unifom bound $\|Z\|_{\infty} \le C$, we can get
tighter guarantees. To that end, we state a simple lemma.
\begin{lemma}
  \label{lemma:holder-thing}
  For any random variable $X \ge 0 $ and $a \in [1,2]$, we have
  \begin{equation*}
    \E[X^{ak}] \le \E[X^k]^{2-a}\E[X^{2k}]^{a-1}
  \end{equation*}
\end{lemma}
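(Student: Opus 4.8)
The plan is to recognize this as an instance of the log-convexity of moments and to prove it by a single application of H\"older's inequality. Since $a \in [1,2]$, the exponent decomposes as $ak = (2-a)\cdot k + (a-1)\cdot 2k$, where the coefficients $2-a$ and $a-1$ are both nonnegative and sum to $1$. This is exactly an expression of $a$ as a convex combination of the endpoints $1$ and $2$, which is what makes the interpolation inequality work. Using $X \ge 0$ (so that all fractional powers are well-defined and nonnegative), I would first split the integrand pointwise as
\[
  X^{ak} = \left(X^{k}\right)^{2-a}\left(X^{2k}\right)^{a-1}.
\]

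Next I would apply H\"older's inequality to $\E\left[(X^{k})^{2-a}(X^{2k})^{a-1}\right]$ with the conjugate exponents $p = \frac{1}{2-a}$ and $q = \frac{1}{a-1}$, which indeed satisfy $\frac{1}{p} + \frac{1}{q} = (2-a) + (a-1) = 1$. This gives
\[
  \E\left[(X^{k})^{2-a}(X^{2k})^{a-1}\right] \le \E\left[X^{k}\right]^{2-a}\,\E\left[X^{2k}\right]^{a-1},
\]
which is precisely the claimed bound, since the left-hand side equals $\E[X^{ak}]$ by the pointwise identity above.

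The only points needing care are the degenerate endpoints $a = 1$ and $a = 2$, where one of the H\"older exponents becomes infinite and the argument must be excluded. At $a = 1$ the asserted inequality reduces to $\E[X^{k}] \le \E[X^{k}]$ and at $a = 2$ to $\E[X^{2k}] \le \E[X^{2k}]$, so both hold trivially with equality; the H\"older step is only needed on the open interval $a \in (1,2)$, where both exponents lie in $(1,\infty)$. There is no substantial obstacle here: the lemma is a textbook consequence of H\"older's inequality, and the only minor thing to verify is the conjugacy of the two exponents together with this handling of the boundary cases.
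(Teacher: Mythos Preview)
Your proof is correct and is essentially the same as the paper's: both apply H\"older's inequality once with conjugate exponents $p=\frac{1}{2-a}$ and $q=\frac{1}{a-1}$, the only cosmetic difference being that the paper writes the split as $A^{c}A^{1-c}$ with $A=X^{ak}$ and $c=\tfrac{2}{a}-1$ rather than your direct factorization $X^{ak}=(X^{k})^{2-a}(X^{2k})^{a-1}$. Your explicit treatment of the endpoints $a\in\{1,2\}$ is a nice addition that the paper omits.
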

\begin{proof}
  For $c \in [0,1]$, $1/p + 1/q=1$ and $A\ge 0$, we have by Holder's inequality,
  \begin{equation*}
    \E[A] = \E[A^{c}A^{1-c}] \le \E[A^{pc}]^{1/p}\E[A^{q(1-c)}]^{1/q}
  \end{equation*}
  Now take $A:=X^{ak}$, $1/p=2-a$, $1/q = a-1$, and $c=\frac{2}{a}-1$.
\end{proof}
First, note that $\E[Z^{2p}]\le C^p\E[Z^p]$. For
$ 1\le p \le 2$, we can take $a = 2/p$ in Lemma \ref{lemma:holder-thing}, so
that we have
\begin{equation*}
E[Z^2] \le \E[Z^p]^{2-\frac{2}{p}}\E[Z^{2p}]^{\frac{2}{p}-1}\le\|Z\|_p^pC^{2-p}.
\end{equation*}
Now, we can plug these into the expression above (using $\var{Z^p}\le \E[Z^{2p}] \le C^p \|Z\|_p^p$), yielding
\begin{equation*}
\E\left[\bigg(\frac{1}{n} \sum_{i=1}^n Z_i^p\bigg)^\frac{1}{p} \right] \ge \|Z\|_p - C\frac{p-1}{p}\sqrt\frac{2}{n}
\end{equation*}
as desired.

\subsection{Proof of Proposition \ref{prop:exp3}}

We utilize the following lemma for regret of online mirror descent.
\begin{lemma}\label{thm:omd}
The expected regret for online mirror descent with unbiased stochastic subgradient $\gamma(t)$ and stepsize $\eta$ is
\small{
\begin{equation}\label{eq:general-regret}
\sum_{t=1}^T \E \left [ \gamma(t)^T(w(t) - w^{\star}) \right ] \le \frac{\log(d)}{\eta} + \frac{\eta}{2}\E \left [ \sum_{t=1}^T \sum_{j=1}^dw_j(t) \gamma_j(t)^2 \right ]
\end{equation}
}
\end{lemma}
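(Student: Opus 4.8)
The plan is to recognize the update in Algorithm~\ref{alg:exp3} as the exponentiated-gradient (multiplicative-weights) instance of online mirror descent, whose mirror map is the negative entropy $\psi(w)=\sum_j w_j\log w_j$ on the simplex $\Delta$ and whose Bregman divergence is the Kullback--Leibler divergence $\dkl{u}{v}=\sum_j u_j\log(u_j/v_j)$. I would use $\dkl{w^\star}{w(t)}$ as a potential function and establish the bound pathwise (for each realization of the random estimates $\gamma(t)$), taking expectations only at the very end; here $w^\star\in\Delta$ is an arbitrary fixed comparator.

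For the per-step analysis I would compute the one-step change of the potential directly from the closed-form update. Writing $Z_t:=\sum_k w_k(t)e^{-\eta\gamma_k(t)}$ for the normalizer, the update gives $\log\frac{w_j(t+1)}{w_j(t)}=-\eta\gamma_j(t)-\log Z_t$, so that
\[
\dkl{w^\star}{w(t)}-\dkl{w^\star}{w(t+1)}=\sum_j w_j^\star\log\frac{w_j(t+1)}{w_j(t)}=-\eta\,\gamma(t)^T w^\star-\log Z_t,
\]
using $\sum_j w_j^\star=1$. The crux is to lower bound $-\log Z_t$ by a local-norm quantity. Since the EXP3 estimates satisfy $\gamma_j(t)\ge 0$ (the losses $L_i(t)=1-l_i^h(t)/\bar l$ are nonnegative), I can apply the elementary inequality $e^{-x}\le 1-x+\tfrac12 x^2$, valid for $x\ge0$, to each term $x=\eta\gamma_k(t)$, obtaining $Z_t\le 1-\eta\,\gamma(t)^T w(t)+\tfrac{\eta^2}{2}\sum_k w_k(t)\gamma_k(t)^2$; combining this with $\log(1+u)\le u$ yields
\[
-\log Z_t\ge \eta\,\gamma(t)^T w(t)-\tfrac{\eta^2}{2}\sum_k w_k(t)\gamma_k(t)^2.
\]
Substituting back gives the single-step inequality $\eta\,\gamma(t)^T(w(t)-w^\star)\le \dkl{w^\star}{w(t)}-\dkl{w^\star}{w(t+1)}+\tfrac{\eta^2}{2}\sum_k w_k(t)\gamma_k(t)^2$.

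To finish, I would telescope this over $t=1,\dots,T$, so that the potential differences collapse to $\dkl{w^\star}{w(1)}-\dkl{w^\star}{w(T+1)}$. Dropping the nonnegative final term and using the uniform initialization $w(1)=\mathbf{1}/d$ together with nonnegativity of entropy gives $\dkl{w^\star}{w(1)}=\log d+\sum_j w_j^\star\log w_j^\star\le\log d$. Dividing by $\eta$ and taking expectations over the sampled indices $J_k$ (the inequality holding pathwise for the fixed comparator $w^\star$) yields exactly \eqref{eq:general-regret}.

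The main obstacle is obtaining the \emph{local}-norm term $\sum_j w_j(t)\gamma_j(t)^2$ rather than the crude $\|\gamma(t)\|_\infty^2$ that a generic strong-convexity (Pinsker) argument would produce: this sharper term is what keeps the bound finite for EXP3, where $\gamma_j(t)$ can be as large as $1/w_j(t)$ yet $w_j(t)\gamma_j(t)^2$ stays controlled. Securing it requires the second-order expansion of the exponential together with the nonnegativity $\gamma_j(t)\ge0$, which is exactly the range where $e^{-x}\le 1-x+\tfrac12x^2$ holds; I would flag this sign requirement explicitly, since it is the precise point at which the structure of the EXP3 estimator \eqref{eq:exp3grad} enters the argument.
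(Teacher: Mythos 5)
Your proposal is correct, and it reaches the bound by a genuinely different route than the paper. The paper first derives a generic online-mirror-descent regret decomposition: it invokes the first-order optimality condition of the Bregman-regularized update together with the three-point identity to get, per step, $\gamma_t^T(w_t - w^\star) \le \frac{1}{\eta}\left(B_\psi(w^\star,w_t) - B_\psi(w^\star,w_{t+1}) - B_\psi(w_{t+1},w_t)\right) + \gamma_t^T(w_t - w_{t+1})$, telescopes, and then devotes a separate lemma (Lemma~\ref{thm:bregman-bound}) to showing $-\frac{1}{\eta}B_\psi(w_{t+1},w_t) + \gamma_t^T(w_t-w_{t+1}) \le \frac{\eta}{2}\sum_j w_j(t)\gamma_j(t)^2$; the proof of that lemma is the delicate part, resting on an exact second-order Taylor expansion of the log-partition function $g \mapsto \log\left(\sum_j x_j e^{-\eta g_j}\right)$ with a Lagrange remainder at an intermediate point $\widetilde g = \lambda g$, followed by a monotonicity argument (via the Fenchel--Young inequality $g_ig_j^2 \le \frac{1}{3}g_i^3 + \frac{2}{3}g_j^3$) to bound the remainder by its value at $\lambda = 0$. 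You instead work directly with the closed-form multiplicative-weights update, compute the one-step change of $\dkl{w^\star}{w(t)}$ exactly, and control $\log Z_t$ termwise with the scalar inequality $e^{-x}\le 1-x+\frac{1}{2}x^2$ for $x\ge 0$ plus $\log(1+u)\le u$ --- the classical Hedge/EXP3 analysis. Both arguments hinge on nonnegativity of the estimates: the paper hypothesizes $g\in\R_+^d$ in Lemma~\ref{thm:bregman-bound} (exactly where its Fenchel--Young step needs it), and you correctly flag the same sign condition as the place where $e^{-x}\le 1-x+\frac{1}{2}x^2$ requires $x \ge 0$, justified by $L_i(t) \ge 0$. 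The trade-off: the paper's route is modular --- the generic Bregman decomposition holds for any mirror map, with the entropic case as a specialization --- whereas yours is shorter, fully self-contained, and sidesteps the Taylor-remainder-plus-monotonicity machinery entirely, while producing the identical constant $\frac{\eta}{2}$ and the same local-norm term.
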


See Appendix \ref{sec:lemmapf} for the proof. Now we bound the right-hand term of the regret bound \eqref{eq:general-regret} in our setting. For this we utilize the following:
\begin{small}
\begin{align*}
\E \left [ \gamma_i(t)^2  | w(t) \right ] &= \frac{1}{N_w^2} \frac{L_i^2(t)}{w_i^2(t)} \E \left [ \left (  \sum_{k=1}^{N_w} \mathbf{1}\left \{J_k=i \right \} \right )^2  \bigg\vert w(t) \right ] \\
&= \frac{1}{N_w^2} \frac{L_i^2(t)}{w_i^2(t)} \left (N_w (N_w-1)w_i(t)^2 + N_w w_i(t) \right ),
\end{align*}
\end{small}
where the latter fact is simply the second moment for the sum of $N_w\;$ random variables $\simiid \text{Bernoulli}(w_i(t))$. Then, 
\begin{small}
\begin{align*}
\sum_{i=1}^{d}w_i(t) \E \left [ \gamma_i(t)^2  | w(t) \right ] &= \sum_{i=1}^{d} L_i(t)^2  \left ( \frac{N_w-1}{N_w}w_i(t) + \frac{1}{N_w} \right )\\
& \le \sum_{i=1}^{d} \left ( \frac{N_w-1}{N_w}w_i(t) + \frac{1}{N_w} \right )\\
& = \frac{N_w-1}{N_w} + \frac{d}{N_w}\\
& =: z.
\end{align*}
\end{small}
Plugging in the prescribed $\eta=\sqrt{\frac{2\log(d)}{zT}}$ into the bound \eqref{eq:general-regret} yields the result.

\subsection{Proof of Lemma \ref{thm:omd}}\label{sec:lemmapf}
We first show the more general regeret of online mirror descent with a Bregman divergence and then specialize to the entropic regularization case. Let $\psi(w)$ be a convex fuction and $\psi^*(\theta)$ its Fenchel conjugate. Define the Bregman divergence $B_{\psi}(w, w')=  \psi(w) - \psi(w') - \nabla \psi(w')^T(w-w')$. In the following we use the subscript $\cdot_t$ instead of $(\cdot)(t)$ for clarity. The standard online mirror descent learner sets

\begin{equation*}
w_t = \argmin_w \left ( \gamma_t^T w + \frac{1}{\eta}B_{\psi}(w, w_t) \right ).
\end{equation*}
Using optimality of $w_{t+1}$ in the preceding equation, we have
\begin{align*}
\gamma_t^T (w_t-w^*) &= \gamma_t^T(w_{t+1}-w^*) + \gamma_t^T(w_t-w_{t+1})\\
&\le \frac{1}{\eta}(\nabla\psi(w_{t+1})-\nabla\psi(w_t))^T(w^*-w_{t+1})\\
&+ \gamma_t^T(w_t-w_{t+1})\\
&= \frac{1}{\eta} \left ( B_{\psi}(w^*, w_t) - B_{\psi}(w^*, w_{t+1}) - B_{\psi}(w_{t+1}, w_t)  \right )\\
&+ \gamma_t^T(w_t-w_{t+1}).
\end{align*}
Summing this preceding display over iterations $t$ yields
\begin{align*}
\sum_{t=1}^T \gamma_t^T(w_t-w^*) & \le \frac{1}{\eta}B_{\psi}(w^*, w_1) \\
&+ \sum_{t=1}^T \left ( -\frac{1}{\eta}B_{\psi}(w_{t+1}, w_t)+ \gamma_t^T(w_t-w_{t+1})\right)
\end{align*}
Now let $\psi(w) = \sum_i w_i \log w_i$. Then, with $w_1 = \mathbf{1}/d$, $B{\psi}(w^*, w_1) \le \log(d)$. Now we bound the second term with the following lemma.

\begin{lemma}\label{thm:bregman-bound}
Let $\psi(x)=\sum_{j} x_{j} \log x_{j}$ and $x, y \in \Delta$ be defined by:
$y_{i}=\frac{x_{i} \exp \left(-\eta g_{i}\right)}{\sum_{j} x_{j} \exp \left(-\eta g_{j}\right)}$ where $g \in \R^d_+$ is non-negative. Then
\begin{equation*}
	-\frac{1}{\eta} B_{\psi}(y, x)+g^T(x-y) \leq \frac{\eta}{2} \sum_{i=1}^{d} g_{i}^{2} x_{i}.
\end{equation*}
\end{lemma}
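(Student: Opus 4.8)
The plan is to recognize that for the negative-entropy potential $\psi(x) = \sum_j x_j \log x_j$, the Bregman divergence restricted to the simplex is exactly the KL divergence, and then to exploit the explicit exponentiated-gradient form of $y$ to collapse the left-hand side into a log-partition function. First I would compute the divergence directly from $B_\psi(y,x) = \psi(y) - \psi(x) - \nabla\psi(x)^T(y-x)$ with $\nabla\psi(x)_i = \log x_i + 1$, obtaining $B_\psi(y,x) = \sum_i y_i \log(y_i/x_i) - \sum_i (y_i - x_i)$. Since $x, y \in \Delta$ both sum to one, the last term vanishes and $B_\psi(y,x) = \sum_i y_i \log(y_i / x_i)$.

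Next I would substitute the update. Writing $Z := \sum_j x_j \exp(-\eta g_j)$ for the normalizer, the definition of $y$ gives $\log(y_i/x_i) = -\eta g_i - \log Z$, so that $B_\psi(y,x) = -\eta\, g^T y - \log Z$ (using $\sum_i y_i = 1$). Plugging this in, the left-hand side telescopes:
\[
-\frac{1}{\eta}B_\psi(y,x) + g^T(x - y) = g^T y + \frac{1}{\eta}\log Z + g^T x - g^T y = g^T x + \frac{1}{\eta}\log Z.
\]
Thus the claim reduces to showing $g^T x + \frac{1}{\eta}\log Z \le \frac{\eta}{2}\sum_i g_i^2 x_i$, i.e., after multiplying by $\eta$, that $\log Z \le -\eta\, g^T x + \frac{\eta^2}{2}\sum_i g_i^2 x_i$.

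The crux is a second-order bound on the exponential. Since $g_i \ge 0$ and $\eta > 0$, each argument $\eta g_i \ge 0$, so I can apply the elementary inequality $e^{-a} \le 1 - a + \tfrac{a^2}{2}$, valid for all $a \ge 0$ (checked by noting the difference $1 - a + \tfrac{a^2}{2} - e^{-a}$ has value and derivative zero at $a=0$ and nonnegative second derivative $1 - e^{-a}$). Taking the $x$-weighted average and using $\sum_j x_j = 1$ yields $Z \le 1 - \eta\, g^T x + \frac{\eta^2}{2}\sum_j x_j g_j^2$. Finally applying $\log(1 + u) \le u$ gives $\log Z \le -\eta\, g^T x + \frac{\eta^2}{2}\sum_j x_j g_j^2$, which is exactly the required bound. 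The only real obstacle is the sign condition: the quadratic upper bound on $e^{-a}$ fails for $a < 0$, so the hypothesis $g \ge 0$ is essential and cannot be dropped.
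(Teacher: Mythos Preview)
Your proof is correct. Both you and the paper begin identically: compute $B_\psi(y,x)=\sum_i y_i\log(y_i/x_i)$, substitute the exponentiated-gradient form of $y$, and reduce the claim to an upper bound on the log-partition $\log Z$ with $Z=\sum_j x_j e^{-\eta g_j}$. From there the routes diverge. You apply the scalar inequality $e^{-a}\le 1-a+\tfrac{a^2}{2}$ for $a\ge 0$ termwise and then $\log(1+u)\le u$, which is short and fully elementary. The paper instead Taylor-expands $g\mapsto\log\sum_j x_j e^{-\eta g_j}$ to second order with Lagrange remainder, obtaining $-\tfrac{1}{\eta}B_\psi(y,x)+g^T(x-y)\le\tfrac{\eta}{2}\sum_i g_i^2\,p_i(\tilde g)$ where $p(\tilde g)$ is the softmax at an intermediate point $\tilde g=\lambda g$, and then proves the map $\lambda\mapsto\sum_i g_i^2 p_i(\lambda g)$ is non-increasing via the Fenchel--Young inequality $g_i g_j^2\le\tfrac{1}{3}g_i^3+\tfrac{2}{3}g_j^3$, so that the bound at $\lambda=0$ recovers $\sum_i g_i^2 x_i$. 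Your argument is considerably shorter and isolates cleanly where $g\ge 0$ is used (validity of the quadratic bound on $e^{-a}$); the paper's approach is heavier but yields the slightly sharper intermediate statement with $p(\tilde g)$ in place of $x$, which could be useful if one wanted refinements depending on the actual iterate rather than the worst case.
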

See Appendix \ref{sec:bregman-proof} for the proof.
Setting $y=w_{t+1}$, $x=w_{t}$, and $g=\gamma_t$ in Lemma \ref{thm:bregman-bound} yields
\begin{equation*}
\sum_{t=1}^T \gamma_t^T(w_t-w^*) \le \frac{\log(d)}{\eta} + \frac{\eta}{2} \sum_{t=1}^T\sum_{j=1}^d w_j(t)\gamma_j(t)^2.
\end{equation*}
Taking expectations on both sides yields the result. 

\subsection{Proof of Lemma \ref{thm:bregman-bound}}\label{sec:bregman-proof}
Note that $B_{\psi}(y,x) = \sum_i y_i  \log\frac{y_i}{x_i}$. Substituting the values for $x$ and $y$ into this expression, we have
\begin{equation*}
\sum_{i} y_{i} \log \frac{y_{i}}{x_{i}}=-\eta g^Ty-\sum_{i} y_{i} \log \left(\sum_{j} x_{j} e^{-\eta g_{j}}\right)
\end{equation*}

Now we use a Taylor expansion of the function $g \mapsto \log \left(\sum_{j} x_{j} e^{-\eta g_{j}}\right)$ around the point $0$. If we define the vector $p_{i}(g)=x_{i} e^{-\eta g_{i}} /\left(\sum_{j} x_{j} e^{-\eta g_{j}}\right)
$, then 
\begin{align*}
\log \left(\sum_{j} x_{j} e^{-\eta g_{j}}\right)=&\log (\mathbf{1}^Tx)-\eta p(0)^Tg +\\
&\frac{\eta^{2}}{2} g^{\top}\left(\operatorname{diag}(p(\widetilde{g}))-p(\widetilde{g}) p(\widetilde{g})^{\top}\right) g
\end{align*}
where $\widetilde{g}= \lambda g$ for some $\lambda \in \left[0,1\right]$. Noting that $p(0) = x$ and $\mathbf{1}^Tx=\mathbf{1}^Ty = 1$, we obtain
\begin{equation*}
	B_{\psi}(y, x)=\eta g^T(x-y)-\frac{\eta^{2}}{2} g^{\top}\left(\operatorname{diag}(p(\widetilde{g}))-p(\widetilde{g}) p(\widetilde{g})^{\top}\right) g,
\end{equation*}
whereby
\begin{align}\label{eq:temp-bound}
-\frac{1}{\eta} B_{\psi}(y, x)+g^T(x-y) \leq \frac{\eta}{2} \sum_{i=1}^{d} g_{i}^{2} p_{i}(\widetilde{g}).
\end{align}
Lastly, we claim that the function
\begin{align*}
s(\lambda)=\sum_{i=1}^{d} g_{i}^{2} \frac{x_{i} e^{-\lambda g_{i}}}{\sum_{j} x_{j} e^{-\lambda g_{j}}}
\end{align*}
is non-increasing on $\lambda \in [0,1]$. Indeed, we have
\begin{align*}
s^{\prime}(\lambda)&=\frac{\left(\sum_{i} g_{i} x_{i} e^{-\lambda g_{i}}\right)\left(\sum_{i} g_{i}^{2} x_{i} e^{-\lambda g_{i}}\right)}{\left(\sum_{i} x_{i} e^{-\lambda g_{i}}\right)^{2}}-\frac{\sum_{i} g_{i}^{3} x_{i} e^{-\lambda g_{i}}}{\sum_{i} x_{i} e^{-\lambda g_{i}}}\\
&=\frac{\sum_{i j} g_{i} g_{j}^{2} x_{i} x_{j} e^{-\lambda g_{i}-\lambda g_{j}}-\sum_{i j} g_{i}^{3} x_{i} x_{j} e^{-\lambda g_{i}-\lambda g_{j}}}{\left(\sum_{i} x_{i} e^{-\lambda g_{i}}\right)^{2}}
\end{align*}

Using the Fenchel-Young inequality, we have $a b \leq \frac{1}{3}|a|^{3}+\frac{2}{3}|b|^{3 / 2}$ for any $a,b$ so $g_{i} g_{j}^{2} \leq \frac{1}{3} g_{i}^{3}+\frac{2}{3} g_{j}^{3}$. This implies that the numerator in our expression for $s^{'}(\lambda)$ is non-positive. Thus,  $s(\lambda) \leq s(0)=\sum_{i=1}^{d} g_{i}^{2} x_{i}$ which gives the result when combined with inequality \eqref{eq:temp-bound}.
\section{Hardware}
\label{sec:hardware}

The major components of the vehicle used in experiments are shown in Figure~\ref{fig:car}.
The chassis of the 1/10-scale vehicles used in experiments are based on a Traxxas Rally 1/10-scale radio-controlled car with an Ackermann steering mechanism.
An electronic speed controller based on an open source design \cite{vedder} controls the RPM of a brushless DC motor and actuates a steering servo.
A power distribution board manages the power delivery from a lithium polymer (LiPo) battery to the onboard compute unit and sensors.
The onboard compute unit is a Nvidia Jetson Xavier, a system-on-a-chip that contains 8 ARM 64 bit CPU cores and a 512 core GPU. 
The onboard sensor for localization is a planar LIDAR that operates at 40Hz with a maximum range of 30 meters. The electronic speed controller also provides odometry via the back EMF of the motor.

\begin{figure}
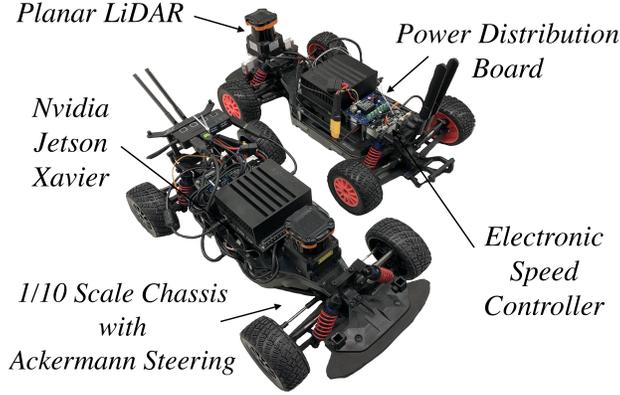

	\centering
	\ifarxiv
	\includegraphics[width=0.5\columnwidth]{figs/car_component_bigfont_small.jpg}
	\else
	\includegraphics[width=0.8\columnwidth]{figs/car_component_bigfont_small.jpg}
	\fi
	\caption{Components of the 1/10 Scale Vehicle}
	\label{fig:car}
\end{figure} %
\section{Vehicle Software Stack}
\label{sec:software}
This section gives a detailed overview of the software used onboard the vehicles. Figure~\ref{fig:system} gives a graphical overview. 

\subsection{Mapping}
We create occupancy grid maps of tracks using Google Cartographer \cite{hess2016real}. The map's primary use is as an efficient prior for vehicle localization algorithms. In addition, maps serve as a representation of the static portion of the simulation environment describing where the vehicle may drive and differentiating which (if any) portions of the LIDAR scan have line-of-sight to other agents. A feature of our system useful to other researchers is that any environment which can be mapped may be trivially added to the simulator described in~Appendix \ref{sec:simulator}.
\subsection{Localization}
Due to the speeds at which the vehicles travel, localization must provide pose estimates at a rate of at least 20 Hz. Thus, to localize the vehicle we use a particle filter~\cite{walsh17} that implements a ray-marching scheme on the GPU in order to efficiently simulate sensor observations in parallel. We add a small modification which captures the covariance of the pose estimate. We do not use external localization systems (\eg~motion capture cameras) in any experiment.  
\begin{figure*}
	\centering
	\includegraphics[width=0.7\textwidth]{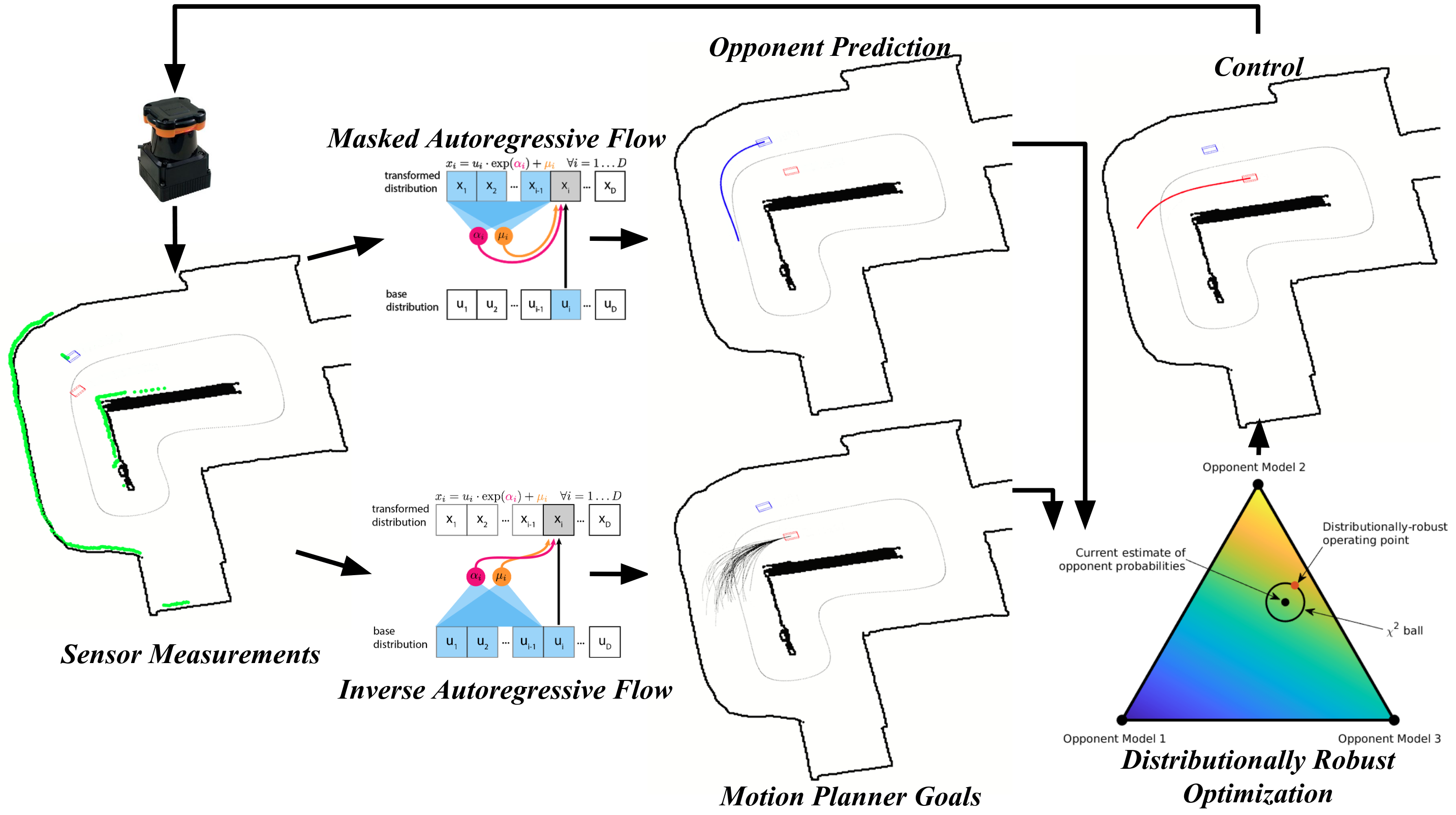}
	\caption{FormulaZero implementation on vehicle. Online each agent measures the world using onboard sensors such as a planar LIDAR. Given the sensor measurement the vehicle performs opponent prediction via the use of a masked autoregressive flow and simultaneously selects motion planner goals using an inverse autoregressive flow. Given the set of goals each is evaluated within our DRO framework, the best goal is chosen, and a new control command is applied to the vehicle. Then, the process occurs again.}
	\label{fig:system}
\end{figure*}

\subsection{Planning}

The vehicle software uses a hierarchical planner \cite{gat1998three} similar to that of \citet{ferguson2008motion}. At the top level the planner receives a map and waypoints representing the centerline of the track; the goal is to traverse the track from start to finish. Unlike route planning in road networks, there are no routing decisions to be made. In more complex instances of our proposed environment, this module could be necessary for making strategic decisions such as pit stops. The second key difference is the mid-level planner. Whereas \citet{ferguson2008motion} uses a deterministic lattice of points, our vehicle draws samples from a neural autoregressive flow. Each sample contains a goal pose and speed profile. Given this specification, the local planner calculates a trajectory parameterized as a cubic spline, evaluates static and dynamic costs of the proposed plan in belief space, and selects the lowest cost option.   

\subsubsection{Sampling behavior proposals}
There are two advantages to using a neural autoregressive flow in our planning framework. First, each agent in the population weights the individual components of its cost function differently; the flow enables the goal generation mechanism to learn a distribution which places more probability mass on the agent's preferences. Second, as planning takes place in the context of the other agent's actions, the ego-agent's beliefs can be updated by inverting the flow and estimating the likelihood of the other agent's actions under a given configuration of the cost function.

The goal-generation process utilizes an inverse autoregressive flow (IAF)~~\cite{kingma2016improved}.
The IAF samples are drawn from a density conditioned on a 101-dimensional observation vector composed of a subsampled LIDAR scan and current speed. Each sample is a 6 dimensional vector:
$\Delta t$, the perpendicular offset of the goal pose from the track's centerline; $\Delta s$, the arc-length along the track's centerline relative to the vehicle's current pose; $\Delta \theta$, the difference between the goal pose's heading angle and the current heading angle; three velocity offsets from the vehicle's current velocity at three equidistant knot points along the trajectory. 

The second benefit of using a generative model for sampling behavior proposals is the ability to update an agent's beliefs about the opponent's policy type. As noted in Section~\ref{sec:experiments}, masked~\cite{papamakarios2017masked} and inverse autoregressive flows (MAF and IAF respectively) have complementary strengths. While sampling from a MAF is slow, density estimation using this architecture is fast. Thus, we use a MAF network trained to mimic the samples produced by the IAF for this task. The architectures of each network are the same, and we describe this architecture below. 

The IAF and MAF networks used in this paper have 5 MADE layers~\cite{papamakarios2017masked} each containing: a masked linear mapping ($\R^6\to\R^{100}$), RELU layer, masked linear mapping ($\R^{100} \to\R^{100}$), RELU layer, and a final masked linear layer ($\R^{100}\to\R^{12}$). Note that output of a MADE layer includes both the transformed sample and the logarithm of the absolute value of the determinant of the Jacobian of the transformation. For sampling, the latter is discarded, and the transformed sample is passed to the next layer.  
In addition, the masking pattern is sequential and held constant during both training and inference. This choice was made to aid in debugging of experiments and to simplify communication during distributed training. 

Each population member has a dedicated IAF model, which is trained iteratively according to the \textsc{AAdaPT} algorithm described in Section~\ref{sec:pbt} using the hyperparameters given in Section~\ref{sec:experiments}. We initialize each IAF with a set of weights which approximate an identity transformation for random pairs of samples from a normal distribution and simulated observations. In addition each population member also has a MAF model, which is trained using the same hyperparameters as the IAF but only after \textsc{AAdaPT} has finished.
The code submitted in the supplementary materials extends an existing library\footnote{\url{https://github.com/kamenbliznashki/normalizing_flows}} created by other authors; we add support for the IAF architecture as well as generalize the network architecture to 3-dimensional tensors. The latter extension enables sampling from multiple agents' IAF models simultaneously and efficiently.

\subsubsection{Model Predictive Control}

The goal of the trajectory generator is to compute kinematically and dynamically feasible trajectories that take the vehicle from its current pose to a set of sampled poses from the IAF. The trajectory generator combines approaches from \cite{howard2009adaptive, nagy2001trajectory, kelly2003reactive, mcnaughton2011parallel}. Each trajectory is represented by a cubic spiral with five parameters $p=[s, a, b, c, d]$ where $s$ is the arc length of the spiral, and $(a, b, c, d)$ encode the curvature at equispaced knot points along the trajectory. Powell's method or gradient descent can be used to find the spline parameters that (locally) minimize the sum of the Euclidean distance between the desired endpoint pose and the forward simulated pose. Offline, a lookup table of solutions for a dense grid of goal poses is precomputed, enabling fast trajectory generation online. Each trajectory is associated with an index which selects the $\Delta x$, $\Delta y$, and the $\Delta \theta$ of the goal pose relative to the current pose (where positive $x$ is ahead of the vehicle and postiive $y$ is to the left), and $\kappa_0$, the initial curvature of the trajectory. The resolution and the range of the table is listed in Table \ref{table:lutreso}. Figure~\ref{fig:samples} shows a selection of trajectories. The point on the left of the figure is the starting pose of the vehicle, and the collection of goal poses is shown as the points on the right of the figure.

\begin{table}
	\caption{The resolution and ranges of the Trajectory Generator Look-up Table}
	\label{table:lutreso}
	\begin{center}
		\begin{small}
			\begin{tabular}{lccc}
				\toprule
				Index     & Resolution     & Min     & Max \\
				\midrule
				$\Delta x$ & 0.1 m & -1.0 m &  10.0 m \\
				$\Delta y$ & 0.1 m & -8.0 m &  8.0 m \\
				$\Delta \theta$   & $\pi/32$ rad & $-\pi/2$ rad & $\pi/2$ rad \\
				$\kappa_0$  & 0.2 rad/m & -1.0 rad/m & 1.0 rad/m \\
				\bottomrule
			\end{tabular}
		\end{small}
	\end{center}
	\vskip -0.2in
\end{table}

\begin{figure}
	\centering
	\ifarxiv
	\includegraphics[width=0.6\columnwidth]{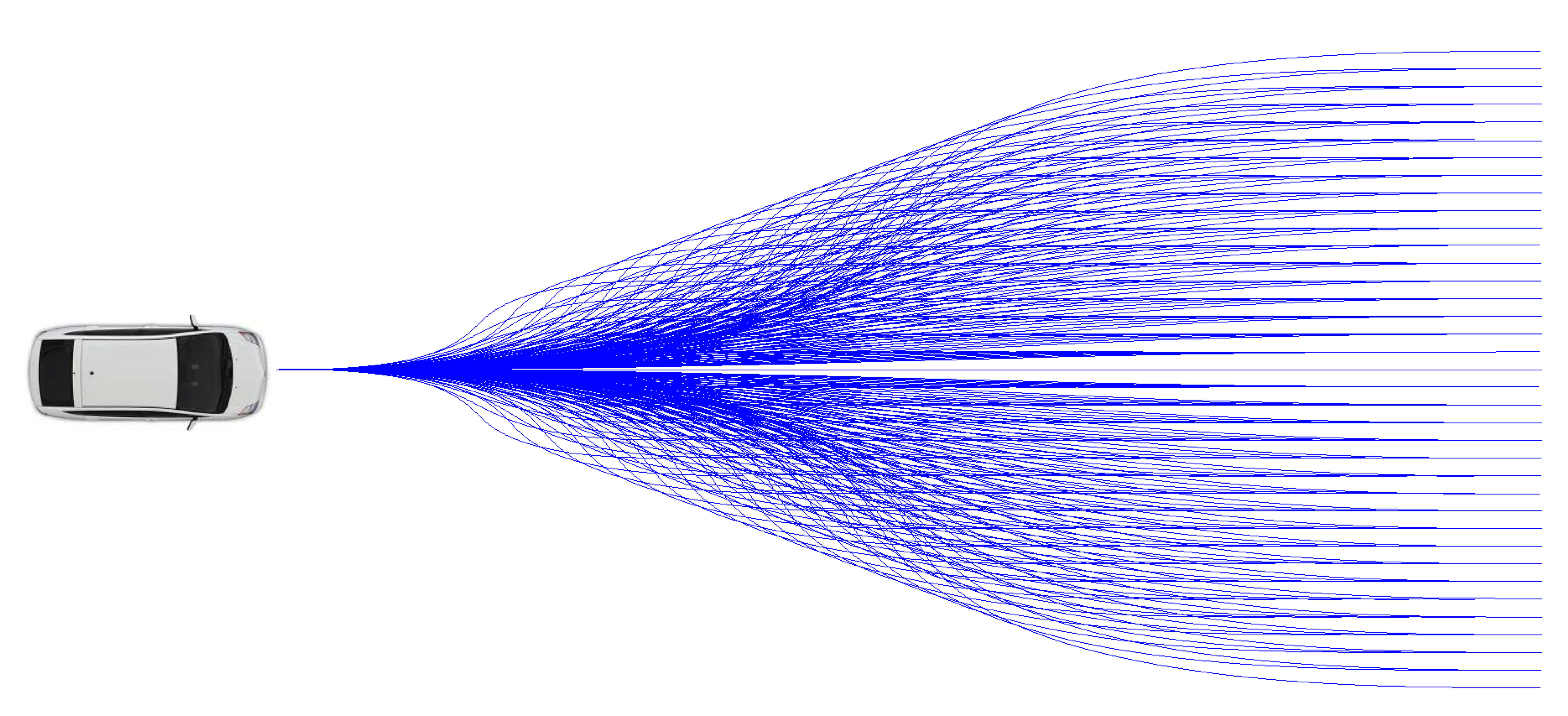}
	\else
	\includegraphics[width=0.9\columnwidth]{figs/traj_gen.png}
	\fi
	\caption{Sample trajectories from the look-up table}
	\label{fig:samples}
\end{figure}

\subsubsection{Trajectory Cost Functions}
Each of the generated trajectories is evaluated with the weighted sum of the following cost functions. Note, in order to ensure safety, goals which would result in collision result in infinite cost and are automatically rejected prior to computing the robust cost, which operates only on finite-cost proposals.

\begin{enumerate}
	\item \textbf{Trajectory length:} $c_{al}=s$, where $1/s$ is the arc length of each trajectory. Short and myopic trajectories are penalized.
	
	\item \textbf{Maximum absolute curvature:} $c_{mc}=\max_i\{|\kappa_i|\}$, where $\kappa_{i}$ are the curvatures at each point on a trajectory. Large curvatures are penalized to preserve smoothness of trajectories.
	
	\item \textbf{Mean absolute curvature:} $c_{ac}=\frac{1}{N}\sum_{i=0}^{N}|\kappa_{i}|$, the notation is the same as $c_{mc}$ and the effect of this feature is similar, but less myopic. 
	
	\item \textbf{Hysteresis loss:} Measured between the previous chosen trajectory and each of the sampled trajectories, $c_{hys}=||\theta_{prev}^{[n_1,n_2]} - \theta^{[0,n_2-n_1]}||_2^2$, where $\theta_{prev}$ is the array of heading angles of each pose on the previous selected trajectory by the vehicle, $\theta$ is the array of heading angles of each pose on the trajectory being evaluated, and the ranges $[n_1,n_2]$ and $[0, n_2-n_1]$ define contiguous portions of trajectories that are compared. Trajectories dissimilar to the previously selected trajectory are penalized.

	\item \textbf{Lap progress:} Measured along the track from the start to the end point of each trajectory in the normal and tangential coordinate system, $c_p=\frac{1}{s_{end}-s_{start}}$, where $s_{end}$ is the corresponding position in the tangential coordinate along the track of the end point of a trajectory, and $s_{start}$ is that of the start point of a trajectory. Shorter progress in distance is penalized.
	
	\item \textbf{Maximum acceleration:} $c_{ma}=\max_i|\frac{\Delta v_i}{\Delta t_i}|$ where $\Delta v$ is the array of difference in velocity between adjacent points on a trajectory, and $\Delta t$ is the array of corresponding time intervals between adjacent points. High maximum acceleration is penalized.
	
	\item \textbf{Maximum absolute curvature change:} Measured between adjacent points along each trajectory, $c_{dk}=\max_i|\frac{\Delta \kappa_i}{\Delta t_i}|$. High curvature changes are penalized.
	
	\item \textbf{Maximum lateral acceleration:}  $c_{la}=\max_i\{|\kappa|_iv_i^2\}$, where $\kappa$ and $v$ are the arrays of curvature and velocity of all points on a trajectory.
	High maximum lateral accelerations are penalized.
	
	\item \textbf{Minimum speed:} $c_{ms}=\frac{1}{(\min_i\{v_i\})_+}$. Low minimum speeds are penalized.
	
	\item \textbf{Minimum range:} $c_{mr}=\min_i\{r_i\}$, where $r$ is the array of range measurements (distance to static obstacles) generated by the simulator. Smaller minimum range is penalized, and trajectories with minimum ranges lower than a threshold are given infinite cost and therefore discarded.

	\item \textbf{Cumulative inter-vehicle distance short:} 
	\begin{equation*}
		c_{dyshort} = 
		\begin{cases}
			\infty, \; \text{if} \; d(\mathtt{ego}_i, \mathtt{opp}_i)\leq\mathtt{thresh}\\
			\sum_{i=0}^{N_{short}} d(\mathtt{ego}_i, \mathtt{opp}_i), \; \text{otherwise}
		\end{cases}
	\end{equation*}
	Where the function $d()$ returns the instantaneous minimum distance between the two agents at point $i$, $N_{short}$ is a point that defines the shorter time horizon for a trajectory of $N$ points. Trajectories with infinite cost on the shorter time horizon are considered infeasible and discarded.

	\item \textbf{Discounted cumulative inter-vehicle distance long:} $c_{dylong}=\sum_{i=N_{short}}^{N_{long}}0.9^{i-N_{short}}\frac{1}{d(\mathtt{ego}_i, \mathtt{opp}_i)}$, where $N_{long}$ is a point that defines the longer time horizon for a trajectory of $N$ points. Note that $N_{short}<N_{long}<N$ . Lower minimum distances between agents on the longer time horizon are penalized.
	
	\item \textbf{Relative progress:} Measured along the track between the sampled trajectories' endpoints and the opponent's selected trajectory's endpoint, $c_{dp}=(s_{opp\_end}-s_{end})_+$, where $s_{opp\_end}$ is the position along the track in tangential coordinates of the endpoint of the opponent's chosen trajectory. Lagging behind the opponent is penalized.

\end{enumerate}

\subsubsection{Path tracker}
Once a trajectory has been selected it is given to the path-tracking module. The goal of the path tracker is to compute a steering input which drives the vehicle to follow the desired trajectory. Our implementation uses a simple and industry-standard geometrical tracking method called pure pursuit \cite{coulter1992implementation, snider2009automatic}. Due to the decoupling of the trajectory generation and tracking modules it is possible for the tracker to run at a much higher frequency than the trajectory generator; this is essential for good performance. 

\subsection{Communication and system architecture}

The ZeroMQ \cite{hintjens2013zeromq} messaging library is used to create interfaces between the FormulaZero software stack and the underlying ROS nodes that control and actuate the vehicle test bed. Unlike in the simulator, some aspects of the FormulaZero planning function operate non-deterministicaly and asynchronously. In particular we use a sink node to collect observations from ROS topics related to the various sensors on the vehicle in order to approximate the step-function present in the Gym API. When a planning cycle is complete, the trajectory is published back to ROS and tracked asynchronously using pure-pursuit  as new pose estimates become available.  
Because perception is not the primary focus of this project we simplify the problem of detecting and tracking the other vehicle. In particular, each vehicle estimates its current pose in the map obtained by its onboard particle filter, and this information is communicated to the other vehicle via ZeroMQ over a local wireless network. Since tracking and detection has been well studied in robotics, solutions which rely less on communication could be explored by other future work which builds upon this paper.

\section{Simulation Stack}
\label{sec:simulator}
The simulation stack includes a lightweight 2D physics engine with a dynamical vehicle model. Then on top of the physics engine, a multi-agent simulator with an OpenAI Gym \cite{gym} API is used to perform rollouts of the experiments.
\subsection{Vehicle Dynamics}
The single-track model in \citet{althoff2017commonroad} is chosen because it considers tire slip influences on the slip angle, which enables accurate simulation at physical limits of the vehicle test bed. It is also easily enables changes to the driving surface friction coefficient in simulation which allows the simulator to model a variety of road surfaces.

\subsection{System Identification}
Parameter identification was performed to derive the following vehicle parameters: mass, center of mass, moment of inertia, surface friction coefficient, tire cornering stiffness, and maximum acceleration/deceleration rates following the methods described in \citet{tunercar2019tech}.

\subsection{Distributed Architecture}
Due to the nature of the \textsc{AAdaPT} algorithm, the rollouts in a single vertical step do not need to be in sequence. The ZeroMQ messaging library is used to create a MapReduce \cite{dean2008mapreduce} pattern between the task distributor, result collector, and the workers. Each worker receives the description of the configuration to be simualted, \eg~$(x, \theta)$. Then the workers asynchronously perform simulations and send results to the collector.

\subsection{Addressing the simulation/reality gap}
As noted in Section~\ref{sec:experiments} there are several differences between the observations in simulated rollouts and reality. First, pose estimation errors are not present in the simulator. A simple fix would be to add Gaussian white noise to the pose observations returned by the simulator. We avoided this and other domain randomization techniques in order to preserve the determinism of the simulator, but we will investigate its effect in further experiments. Second, the LIDAR simulation does not account for material properties of the environment. In particular, surfaces such as glass do not produce returns, causing subsets of the LIDAR beams to be dropped. We hypothesize that simple data augmentation schemes which select a random set of indices to drop from simulated LIDAR observations would improve the robustness to such artifacts when the system is deployed on the real car; we are currently investigating this hypothesis.  

\section{Experiments}\label{sec:experiments_appendix}
Additional videos of simulation runs are available.\footnote{\url{https://youtu.be/8q0lZssbEI4}}

\subsection{Instantaneous time-to-collision (iTTC)}\label{sec:ittc}
Let $T_i(t)$ be the instantaneous time-to-collision between the ego vehicle and the $i$-th environment vehicle at time step $t$. The value $T_i(t)$ can be defined in multiple ways (see~\eg~\citet{sontges2018worst}). \citet{norden2019efficient} define it as the amount of time that would elapse before the two vehicles' bounding boxes intersect assuming that they travel at constant fixed velocities from the snapshot at time $t$. Time-to-collision captures directly whether or not the ego-vehicle was involved in a crash. If it is positive no crash occurred, and if it is 0 or negative there was a collision.

\subsection{Out-of-distribution agent strategies}\label{sec:oodagent}
In the following sections, we describe the human-created algorithms used in our out-of-distribution analysis.

\subsubsection{\textsc{OOD1}: RRT* with MPC-based Opponent Prediction}
This approach exploits the fact that the two-car racing scenario is similar to driving alone on the track with the only exception being during overtaking the opponent. This approach uses a costmap-based RRT* \cite{karaman2011sampling} planning algorithm.
The agent first uses the opponent's current pose and velocity in the world, and uses Model-Predictive Control to calculate an open loop trajectory of N optimal inputs resulting in N+1 states based on a given cost function and constraints. Specifically, the optimization problem is constrained by a linearized version of the single track model described in \citet{althoff2017commonroad}, and by the boundary values of the inputs and states of the vehicle. The cost function that the optimization tries to minimize consists of the trajectory length and input power requirement.
The costmap used by RRT* also incorporates this predicted trajectory of the opponent vehicle by inflating the two-dimensional spline representing the prediction, and weighting the portion of the spline closer to the ego vehicle higher.
RRT* samples the two dimensional space that the vehicle lies in. The path generated by RRT* is then tracked with the Pure Pursuit controller \cite{coulter1992implementation}.

\subsubsection{\textsc{OOD2}: RL-based Lane Switching}
\begin{figure}[h!]
	\centering
	\includegraphics[scale=0.6]{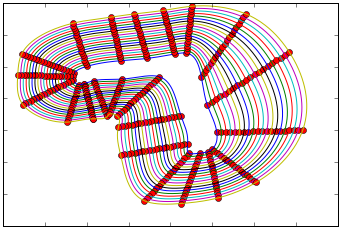}
	\caption{Lanes that cover the track}
	\label{fig:rlpaths}
\end{figure}

The second algorithm is based on a lane-switching planning strategy that uses an RL algorithm to make lane switching decisions, and filters out unsafe decisions using a collision indicator.
First, as shown in \ref{fig:rlpaths}, different lanes going through numerous checkpoints on the track are created to cover the entirety of the race track.
Then a network is trained to make lane switching decisions. The state of the RL problem consists of the sub-sampled LIDAR scans of the ego vehicle; the pose ($x, y, \theta$) of the opponent car with respect to the ego vehicle; velocity ($v_x, v_y$) of the opponent vehicle with respect of the ego vehicle; projected distance from the ego vehicle's current position to all pre-defined paths. The reward of a rollout is zero in the beginning. At each timestep, the timestep itself is subtracted from the total reward. A rollout receives -100 as the reward when the ego agent collide with the environment or the other agent. And finally, if both agents finish 2 laps, the difference between lap times (positive if the ego agent wins) of the two agents are added to the reward. Clipped Double Q-Learning \cite{fujimoto2018addressing} is used to estimate the Q function and make the lane switching decisions.
iTTC defined in Appendix \ref{sec:ittc} is used as an indicator for future collisions. If any decisions made by the RL network would result in a collision indicated by the iTTC value, the safety function kicks in and makes the lane switching decision based on the collision indicator.
Finally, ego vehicle actuation is provided by the same Pure Pursuit controller \cite{coulter1992implementation} tracking the selected lane. We used an existing implementation\footnote{\url{https://github.com/pnorouzi/rl-path-racing}} of this algorithm. 
\end{document}